\documentclass[11pt]{article}
\usepackage[letterpaper,margin=1in]{geometry}
\usepackage[T1]{fontenc}
\usepackage{lmodern,microtype}
\usepackage{amsmath,amssymb,amsthm,mathtools,bm}
\usepackage{booktabs,tabularx,array,enumitem}
\usepackage{algorithm,algpseudocode}
\usepackage[numbers,sort&compress]{natbib}
\usepackage{xcolor,etoolbox}
\usepackage[colorlinks=true,linkcolor=red,citecolor=blue,urlcolor=black]{hyperref}

\DeclareRobustCommand{\reviewmark}[1]{%
  \textcolor{orange}{\normalfont\footnotesize\bfseries revise}\space}
\allowdisplaybreaks[2]
\numberwithin{equation}{section}
\newtheorem{theorem}{Theorem}
\newtheorem{definition}{Definition}
\newtheorem{lemma}[theorem]{Lemma}
\newtheorem{corollary}[theorem]{Corollary}
\theoremstyle{remark}
\newtheorem{remark}[theorem]{Remark}
\newcommand{\RR}{\mathbb R}
\newcommand{\EE}{\mathbb E}
\newcommand{\PP}{\mathbb P}
\newcommand{\1}{\mathbf 1}
\newcommand{\N}{\mathcal N}
\newcommand{\cE}{\mathcal E}
\newcommand{\cH}{\mathcal H}
\newcommand{\cX}{\mathcal X}
\newcommand{\MinReg}{\mathfrak R_T(d,K)}
\DeclareMathOperator{\tr}{tr}
\DeclareMathOperator{\op}{op}
\DeclareMathOperator{\KL}{KL}
\DeclareMathOperator{\TV}{TV}

\DeclareMathOperator*{\argmax}{arg\,max}

\newcommand{\ceil}[1]{\lceil #1\rceil}

\hypersetup{pdftitle={Nearly Minimax-Optimal Regret for Linear Contextual Bandits with Arbitrary Adaptive Action Sets},pdfauthor={Tianyuan Jin}}
\makeatletter
\renewcommand{\@maketitle}{%
  \newpage\null\vskip 1em
  \begin{center}%
    {\LARGE \@title\par}%
    \vskip 0.8em%
    {\large \@author\par}%
  \end{center}%
  \par\vskip 1.3em}
\makeatother
\title{Nearly Minimax-Optimal Regret for Linear Contextual Bandits with\\Arbitrary Adaptive Action Sets}
\author{Tianyuan Jin\\Data Science and Analytics Thrust\\The Hong Kong University of Science and Technology (Guangzhou)}
\date{}
\begin{document}
\maketitle

\begin{abstract}
We study stochastic linear contextual bandits with arbitrary action menus that may depend
on  the interaction history. We establish matching upper and lower bounds, up to logarithmic
factors. Let $d$ be the
dimension, $K$ be the menu size, and $T$ the time horizon. For $2\le K\le d$, we prove an upper bound $\widetilde O(K^{1/4}\sqrt{dT})$. When $T\ge d^2$, we further prove a lower bound $\Omega(K^{1/4}\sqrt{dT})$. Thus, for $T\ge d^2$ and $2\le K\le d$, the upper and lower bounds match up to logarithmic factors, and the polynomial dependence on $K$ is optimal. Compared with the $\widetilde O(\sqrt{dKT})$ bound of \citet{foster2020}, our upper bound improves the dependence on $K$ by a factor of $K^{1/4}$.

For $K\ge d$, we prove an upper bound $\widetilde O_{d,T}\left(\sqrt{dT}\min\{\sqrt d,(d\log K)^{1/4}\}\right)$
and a lower bound $\Omega\left(\sqrt{dT}\min\left\{\sqrt d,\left(\frac{d\log K}{\log(2d)}\right)^{1/4}\right\}\right)$. Here, $\widetilde O_{d,T}$ omits logarithmic factors only in $d,T$.
In particular, for polynomially large $K\ge d$, the upper and
lower bounds both scale as $d^{3/4}\sqrt T$ up to logarithmic factors, improving the standard $\widetilde O(d\sqrt T)$
bound \citep{abbasi2011} by a factor of $d^{1/4}$. As $K$ grows further, the regret smoothly
recovers the $d\sqrt T$ scale once $\log K$ reaches order $d$.

The authors developed the main ideas and guided the overall direction of the work. GPT-5.6 Sol Pro assisted in developing some of the proofs based on the ideas provided by the authors.
\end{abstract}

\section{Introduction}\label{sec:intro}
Contextual bandits provide a fundamental framework for sequential decision-making under partial
feedback,  with broad applications in adaptive routing, personalized recommendation, and mobile health interventions. In this paper, we study linear
contextual bandits. The problem proceeds in $T$ rounds. A parameter $\theta\in B_2^d$
is fixed before the
interaction. At each round $t\in[T]$, the learner observes a menu of actions, represented by
feature vectors $\mathcal A_t=(x_{t,1},\ldots,x_{t,K})$, $\|x_{t,i}\|_2\le1$. Here, we write the menu as containing exactly $K$
actions. Based on the current menu and past observations, the learner selects an action $A_t\in[K]$
and then observes  the reward $Y_t=x_{t,A_t}^\top\theta+\varepsilon_t$, $\varepsilon_t\sim\N(0,1)$, where the current noise is independent
of everything fixed before it is drawn. In linear bandits, the learner's goal is to minimize the difference between the total reward obtained by selecting an optimal action
$
i_t^*\in\argmax_{i\in[K]} x_{t,i}^\top\theta$
at each round $t$ and the total reward obtained by the learner.

There are two settings depending on how the menus $\{\mathcal A_t\}_{t\in[T]}$ are chosen. The first is the \emph{oblivious}
setting, where the entire sequence $\{\mathcal A_t\}_{t\in[T]}$
is chosen by the adversary at time $0$. SupLinUCB-style
methods achieve regret of order $\widetilde O\left(\sqrt{dT}\right)$ \citep{chu2011,auer2002,li2019}. On the lower-bound
side, \citet{chu2011} show a lower bound of $\Omega(\sqrt{dT})$ and, for $K\le 2^{d/2}$, \citet{li2019} further improve it to
$\Omega(\sqrt{dT\log K\log T})$.

The second is the \emph{adaptive} setting, where the menu $\mathcal A_t$ may depend on the fixed parameter $\theta$ and the complete past history
$H_{t-1}=(\mathcal A_1,A_1,Y_1,\ldots,\mathcal A_{t-1},A_{t-1},Y_{t-1})$ and $\theta$, that is $\mathcal A_{t}=\mathcal A_t(\theta, H_{t-1})$.  In this setting, self-normalized confidence
methods obtain $\widetilde O(d\sqrt T)$ regret \citep{abbasi2011}. SquareCB reduces contextual bandits
to online regression and, for a $d$-dimensional linear class, yields $\widetilde O(\sqrt{dKT})$ regret \citep{foster2020}. A natural open question is:
\begin{table}[t]
\centering
\caption{Comparison of regret bounds. Our lower bounds require $T\ge d^2$.  }\label{tab:comparison}
\small
\setlength{\tabcolsep}{8pt}
\renewcommand{\arraystretch}{1.16}
\begin{tabular}{@{}lcl@{}}
\toprule
Method & Menus & Regret Bound\\
\midrule
LinRel \citep{auer2002} and  
SupLinUCB \citep{chu2011} & Oblivious & $O\bigl(\sqrt{dT\log^3(KT)}\bigr)$\\
{VCL-SupLinUCB \citep{li2019}} & {Oblivious} & {$O(\sqrt{dT\log T\log K})\operatorname{poly}(\log\log(KT))$}\\
\midrule
Lin-TS \citep{agrawal2013,agrawal2014} & Adaptive & $O(d\log T\sqrt {T\log K})$\\
Lin-UCB/OFUL \citep{abbasi2011} & Adaptive & $O(d\sqrt T\log T)$\\
SquareCB \citep{foster2020} & Adaptive & $O(\sqrt{dKT\log T})$\\
Feel-Good TS \citep{zhang2022}  & Adaptive & $O(\min\{\sqrt{dKT\log T},d\sqrt{T\log T}\})$ \\
\midrule
Theorem \ref{thm:upper} ($2\le K\le d$) & Adaptive & $O(K^{1/4}\sqrt{dT}\log^{3/4}T)$\\
Lower Bound: Theorem \ref{thm:lower}($2\le K\le d$) & Adaptive & $\Omega(K^{1/4}\sqrt{dT})$\\
Theorem \ref{thm:large} ($K\ge d$) & Adaptive & $O(d^{3/4}\sqrt T\,[\min\{\log K,d\}]^{1/4}\log T)$\\
Lower Bound: Theorem \ref{thm:lower} ($K\ge d$) & Adaptive & $\Omega(d^{3/4}\sqrt T\,[\min\{\log K/\log(2d),d\}]^{1/4})$\\
\bottomrule
\end{tabular}
\end{table}

\begin{quote}
Is $\widetilde O(\min\{\sqrt{dKT},d\sqrt T\})$ worst-case optimal (within logarithm factors) when the menus
are chosen by an adaptive adversary? If not, what is the optimal regret bound?
\end{quote}
Our answer to the first question is no, and our answer to the second has two regimes.
\begin{enumerate}
\item \textbf{Small menus (Theorems~\ref{thm:lower} and~\ref{thm:upper}).} For $2\le K\le d\le T$, we prove an upper bound $\widetilde O(K^{1/4}\sqrt{dT})$. For $2\le K\le d$ and $T\ge d^2$, we also prove a lower bound $\Omega(K^{1/4}\sqrt{dT})$, showing that the polynomial dependence on $K$ in our upper bound cannot be improved. Compared with the previous $\widetilde O(\sqrt{dKT})$ bound of \citet{foster2020}, our upper bound improves the dependence on $K$ by a factor of $K^{1/4}$.

\item \textbf{Large menus (Theorems~\ref{thm:lower} and~\ref{thm:large}).} For $K\ge d$ and $T\ge d$, we prove an upper bound $\widetilde O_{d,T}\left(\sqrt{dT}\min\{\sqrt d,(d\log K)^{1/4}\}\right)$. For $K\ge d$ and $T\ge d^2$, we also prove a lower bound $\Omega\left(\sqrt{dT}\min\left\{\sqrt d,\left(\frac{d\log K}{\log(2d)}\right)^{1/4}\right\}\right)$. Thus, in the long-horizon regime, the upper and lower bounds have the same polynomial dependence on $d$, $T$, and $\log K$, up to logarithmic factors. Compared with the standard $\widetilde O(d\sqrt T)$ bound of \citet{abbasi2011}, our upper bound improves the dependence on $d$ by a factor of $(d/\log K)^{1/4}$ when $\log K\le d$. In particular, for polynomially large $K\ge d$, the upper bound is $d^{3/4}\sqrt T$ up to logarithmic factors, yielding a $d^{1/4}$ improvement over the standard $d\sqrt T$ rate. As $K$ increases further, the bound recovers the usual $d\sqrt T$ scale once $\log K$ is of order $d$.

\end{enumerate}
Table 1 compares our results with prior work.

\section{Problem Setting}\label{sec:model}
In our problem, a learner and the environment interact  over $T$ rounds. In each round $t=1,\ldots,T$:
\begin{enumerate}
\item The environment chooses a menu $\mathcal A_t=(x_{t,1},\ldots,x_{t,K})$ of $K$ actions, where $x_{t,i}\in
\RR^d$ and $\|x_{t,i}\|_2\le1$. In particular, $\mathcal A_t$ may depend on the fixed parameter $\theta$ and the past history $H_{t-1}:=
(\mathcal A_1,A_1,Y_1,\ldots,\mathcal A_{t-1},A_{t-1},Y_{t-1})$.
\item After observing $\mathcal A_t$, the learner chooses an action $A_t\in[K]$.
\item The learner observes $Y_t=x_{t,A_t}^\top\theta+\varepsilon_t$, where $\varepsilon_t\sim\N(0,1)$ is independent of everything fixed
before it is drawn, where $\theta\in B_2^d$ is a fixed and unknown target vector.
\end{enumerate}
Write $x_t:=x_{t,A_t}$ for the action actually played and $\mu_{t,i}:=x_{t,i}^\top\theta$ for the mean reward of action $i$.
Let $i_t^\star\in\argmax_{i\in[K]}\mu_{t,i}$. The cumulative pseudo-regret is defined as
\[
R_T:=\sum_{t=1}^T\left(\mu_{t,i_t^\star}-\mu_{t,A_t}\right).
\]
Our goal is to minimize $\mathbb E[R_T]$, where the expectation is over the learner's randomization, the environment's internal randomization, and the reward noise.
\section{High Level Ideas}\label{sec:outline}
The upper and lower bounds are driven by the same observation. An adaptive adversary can
exploit directions in which the learner is currently uncertain, but such attacks are not free. If the
adversary repeatedly uses completely new directions, it quickly runs out of dimensions. If it reuses
old directions, then the resulting observations reveal the directions in which the learner's predictor
is inaccurate. The lower bound shows how much of this information can be hidden, while the upper
bound shows how to exploit the information that necessarily becomes visible.
\subsection{{Lower Bounds}}
Our basic idea is to divide the horizon into consecutive blocks, each exploiting a different collection
of directions in which the learner's estimate may still be inaccurate. In a $d$-dimensional problem,
such prediction errors can persist along many different directions and might be changed over time.
A block uses some of these directions to construct a hard menu; after the observations in that block
reveal information about them, later blocks can use other directions in which substantial uncertainty
remains. Within each block, we establish two properties. First, we exploit the learner's uncertainty
in the selected directions to construct a menu containing a hidden good action, while ensuring that
the menu itself reveals only a small amount of information about $\theta$. This leaves enough uncertainty
in other directions to repeat the construction in later blocks. Second, we show that the rewards
within the block reveal the hidden action only slowly. Consequently, the learner misses the hidden
action on a constant fraction of the rounds and the block incurs substantial regret.

At the beginning of a block, the learner still has uncertainty about the fixed parameter $\theta$. Let $h$
and $\Sigma$ be its current posterior mean and covariance, i.e., conditional on history $H$, $\theta\sim\N(h,\Sigma)$.
We choose $D=\lfloor d/2\rfloor-1$ directions in which the posterior variance is of order $\sigma^2$.
Let $U\in\RR^{d\times D}$
contain these directions and write $S:=U^\top\Sigma U$. The normalized estimation error in this subspace
is $z:=S^{-1/2}U^\top(\theta-h)$, so $z\sim\N(0,I_D)$, where $I_D$ denotes the $D$-dimensional identity matrix.
We now create $K$ candidate actions and hide one special index $J\sim\operatorname{Unif}([K])$. Draw independent
$G_1,\ldots,G_K\sim\N(0,I_D)$. For $i\ne J$, the  action is generated from $G_i$, while for the hidden index
we use
\[
\rho z+\sqrt{1-\rho^2}G_J,
\]
where $\rho$ controls the level of correlation with the unknown parameter. The hidden action looks exactly like every other action. Since both $z$
and $G_J$ are standard Gaussian, $\rho z+\sqrt{1-\rho^2}G_J$ is also standard Gaussian. Hence the distribution
of the whole menu is the same for every value of $J$, and observing the menu does not reveal which
index is hidden.

Nevertheless, the hidden action is better because it is correlated with the
estimation error $z$. This creates a reward gap of order $\Delta\asymp\sigma\rho\sqrt D$.  Thus a larger $\rho$ creates a larger reward gap, but
also makes the hidden index $J$ easier to identify from the observed rewards.

\paragraph{Rewards reveal the hidden action slowly.}
Whenever the learner fails to pull $J$, it incurs regret
of order $\Delta$. The question is how quickly the learner can identify $J$ from the observed
rewards.

The key point is that the average KL divergence to a common reference experiment contributed by one reward is only $O(\Delta^2/K)$. To explain this, introduce a common reference experiment in which the planted shift associated with the hidden index is removed, while the menu, learner policy, residual uncertainty, and reward noise are kept unchanged. Conditional on the menu and $J$, let $\mu_J$ denote the planted shift in the conditional mean of $U^\top\theta$. If the learner pulls action $i$, its reward mean under hidden index $J$ differs from that in the reference experiment by
$
(U^\top x_i)^\top\mu_J.
$
Since the reward noise is Gaussian, the corresponding KL divergence is proportional to
$
\bigl((U^\top x_i)^\top\mu_J\bigr)^2.
$ The random-menu geometry ensures that, when $J$ is uniform,
$
\EE_J\!\left[\bigl((U^\top x_i)^\top\mu_J\bigr)^2\right]
=
O(\Delta^2/K).
$

Hence after $n$ rounds the average KL divergence to the reference experiment is only $O(n\Delta^2/K)$. As
long as $n\Delta^2/K=O(1)$, the learner cannot reliably identify the hidden action and therefore misses
it on a constant fraction of the rounds. The block consequently incurs $\Omega(n\Delta)$ regret and can remain
hard for $n\asymp K/\Delta^2\asymp K/(\sigma^2D\rho^2)$ rounds.

\paragraph{Controlling information across blocks.}
We next determine how many hard blocks can be generated. Fix the variance scale
$\sigma^2$ of the initial Gaussian prior; this quantity remains fixed throughout
the construction. What changes from block to block is the posterior covariance.
Writing $\Sigma_b$ for the covariance at the beginning of block $b$, we have
$\Sigma_1=\sigma^2 I_d$ and $\Sigma_{b+1}\preceq\Sigma_b$. Our block construction
remains valid as long as
$\operatorname{tr}(\Sigma_b^{-1})<2d/\sigma^2$.

Each block consumes this posterior-precision budget in two ways. First, the menu
itself is correlated with the unknown parameter. Using correlation strength
$\rho$ in a $D$-dimensional uncertain subspace increases the posterior precision
by $O(D\rho^2/\sigma^2)$ per block. Second, the rewards observed within a block
also reveal information about $\theta$: a block of length $n$ contributes at
most $\sum_s \|x_s\|_2^2\le n$ additional trace precision. Across the whole
horizon, the latter contribution is therefore at most $T$. With the sufficiently
small constant in our choice $\sigma^2\asymp d/T$, this uses only a constant
fraction of the available budget $d/\sigma^2$ with $\sigma^2\asymp d/T$. Hence the number of reusable hard
blocks is essentially determined by the information leaked through the menus.

Thus, after $B$ blocks, the menu contribution is
$O(BD\rho^2/\sigma^2)$. Keeping the posterior uncertainty large enough for
another hard block requires
$BD\rho^2/\sigma^2\lesssim d/\sigma^2$, and therefore allows about
$B\asymp d/(D\rho^2)$ blocks. By the reward-information calculation above, each
block can remain hard for
$n\asymp K/(\sigma^2D\rho^2)$ rounds. Hence the total number of hard rounds is
of order
$dK/(\sigma^2D^2\rho^4)$. Substituting $\sigma^2\asymp d/T$, this becomes
$TK/(D^2\rho^4)$. We therefore choose $D\rho^2\asymp\sqrt K$, so that the hard
blocks occupy a constant fraction of the horizon. With this choice, the reward
gap satisfies
$\Delta\asymp\sigma\rho\sqrt D\asymp K^{1/4}\sqrt{d/T}$. Since the learner
incurs regret of order $\Delta$ on a constant fraction of the $T$ rounds, we
obtain
$R_T=\Omega(K^{1/4}\sqrt{dT})$.

\paragraph{Large menus.}
The small-menu construction hides one good action among $K$ candidate actions.
To obtain a lower bound that continues to improve when $K\gg d$, we use a product construction.

We divide the available directions into $m$ orthogonal groups. In each group $\ell$, we construct $k$
candidate vectors $x_{\ell,1},\ldots,x_{\ell,k}$, with one hidden good candidate. An action is obtained by
choosing one candidate from each group and summing the $m$ selected vectors, scaled by $1/\sqrt m$.
Thus each index tuple $(i_1,\ldots,i_m)\in[k]^m$ defines one action,
giving $k^m$ distinct actions, with the construction requiring
$d\gtrsim mk$. Choosing $m$ and $k$ so that $mk\lesssim d$ and $k^m\le K$
allows the construction to exploit large menus.

The $1/\sqrt m$ scaling keeps every action inside the unit ball. It also reduces the reward contribution
of each group by $1/\sqrt m$. Consequently, the information about the hidden index in any one group is
reduced by a factor $1/m$, while the regret contributions from the $m$ groups add. Together, these
effects yield an overall $\sqrt m$ amplification in the regret. Optimizing $m$ and $k$ under $mk\lesssim d$ and
$k^m\le K$ gives the large-menu lower bound, with the dependence on $K$ entering through $\log K$.

\subsection{Upper Bound Part}
The algorithm maintains $V_t=I+\sum_{s<t}x_sx_s^\top$ and $w_{t,i}=\sqrt{x_{t,i}^\top V_t^{-1}x_{t,i}}$. For small menus, we use
EXP4-IX \citep{neu2015}, which competes with the best of $N$ experts at cost $\widetilde O(\sqrt{KT\log N})$. We construct a finite expert family with
small logarithmic size that contains a good predictor.

To define the comparator expert, consider an auxiliary repair process that knows $\theta$. Whenever some action has prediction
error larger than $\alpha w_{t,i}$, the auxiliary process repairs its predictor in the corresponding direction. Each repair
yields at least $\alpha^2$ progress, while the total linear information is controlled, as in OFUL \citep{abbasi2011}, by $\log\det V_{T+1}\le d\log(1+T/d)$. Hence, with high probability, the auxiliary process makes
at most $M=\widetilde O(d/\alpha^2)$ repairs. Each repair is described by a triple $(t,i,s)\in[T]\times[K]\times\{-1,1\}$,
so considering all repair histories of length at most $M$ gives $N\le(2KT+1)^M$ and therefore
$\log N=\widetilde O(d/\alpha^2)$. One of these experts exactly follows the oracle and predicts every displayed
action within $\alpha w_{t,i}$.

The remaining difficulty is that EXP4-IX does not necessarily play the action recommended
by the good expert. A direct comparison would therefore leave an uncertainty term for an action
that may never be sampled. We use a discounted loss to move this uncertainty to the action
actually played. For Gaussian rewards, if action $i$ is played, then $Y_t\sim\N(\mu_{t,i},1)$, and hence
$\Pr(Y_t\le0\mid A_t=i)=\Phi(-\mu_{t,i})$, where \(\Phi\) denote the cdf  of \(N(0,1)\). We therefore use the bounded loss $\ell_t(i):=\Phi(-\mu_{t,i})$, which
admits the unbiased bounded observation $\1_{\{Y_t\le0\}}$ when action $i$ is played. Let $\beta_{t,i}=\min\{1,\alpha w_{t,i}\}$ and
define $\bar\ell_t(i)=\frac{\ell_t(i)+1-\beta_{t,i}}2$. Let $a_t^\star$ be the action recommended by the good expert and let $i_t^\star$ be an
optimal action. Since the good expert is accurate and optimistic, its reward gap is at most $2\alpha w_{t,a_t^\star}$,
which implies $\ell_t(a_t^\star)-\ell_t(i_t^\star)\le\beta_{t,a_t^\star}$. For the action $A_t$ actually played by the learner, a direct
rearrangement gives $\ell_t(A_t)-\ell_t(i_t^\star)\le2\left(\bar\ell_t(A_t)-\bar\ell_t(a_t^\star)\right)+\beta_{t,A_t}$. The first term is controlled by
EXP4-IX, while the second depends only on the uncertainty of the action actually played and can
therefore be summed using the standard elliptical-potential bound.

Since $\log N=\widetilde O(d/\alpha^2)$, the master cost is $\widetilde O(\sqrt{dKT}/\alpha)$, while the elliptical-potential bound gives $\sum_t\beta_{t,A_t}=\widetilde O(\alpha\sqrt{dT})$. Hence $\EE[R_T]=\widetilde O(\sqrt{dT}[\sqrt K/\alpha+\alpha])$, and choosing $\alpha\asymp K^{1/4}$ gives $\EE[R_T]=\widetilde O(K^{1/4}\sqrt{dT})$.

For large menus, we keep the repair construction and the same transfer of uncertainty to played actions, but replace EXP4-IX by a geometric master with a linear surrogate reward. Geometric exploration replaces the master's explicit $K$ dependence by $\min\{K,d+1\}$, while the expert-family size still depends logarithmically on $K$.

\section{Lower Bounds}\label{sec:lower-statements}
Let $\mathcal X_K:=(B_2^d)^K$ be the set of all menus containing $K$ unit-ball actions. Let $\mathcal Q$ be the class of all randomized nonanticipating menu mechanisms $Q=(Q_t)_{t=1}^T$, where $
Q_t(\,\cdot\mid \theta,H_{t-1})
$
is a probability distribution over $\mathcal X_K$. Thus the current menu may depend on the fixed parameter $\theta$, the past history, and the environment's internal randomness, but not on the learner's current action or the current reward noise. 
For any policy $\pi$ and a menu mechanism $Q$, define the minimax optimal regret as follows.
\begin{equation}\label{eq:minimax}
\MinReg=\inf_\pi\sup_{\theta\in B_2^d}\sup_Q\EE_{\theta,Q,\pi}R_T.
\end{equation}
\begin{theorem}[Minimax lower bounds]\label{thm:lower}
Suppose $d,K\ge2$ and $T\ge d^2$. There is a universal constant $c>0$ such that
\begin{align}
\MinReg&\ge   {c}K^{1/4}\sqrt{dT},\qquad 2\le K\le d,\label{eq:small-lower}\\
\MinReg&\ge   {c}\sqrt{dT}\min\left\{\sqrt d,\left(\frac{d\log K}{\log(2d)}\right)^{1/4}\right\}{,\qquad K\ge d.}\label{eq:large-lower}
\end{align}
\end{theorem}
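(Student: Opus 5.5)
We will prove both bounds through a Bayesian argument. The supremum over $\theta$ and $Q$ dominates the average over a prior, so it suffices to exhibit a prior on $\theta$ and a single randomized menu mechanism $Q$ such that every policy $\pi$ suffers the claimed Bayes regret.

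\textbf{Prior and bookkeeping.} Take $\theta\sim\N(0,\sigma^2 I_d)$ with $\sigma^2=c_0 d/T$ for a small constant $c_0$, conditioned on (or truncated to) $B_2^d$. Since $T\ge d^2$, we have $\|\theta\|_2^2\approx c_0d^2/T\le c_0$, so the truncation costs only an exponentially small probability and an additive regret error; we will handle it by a coupling and a total-variation bound. Throughout, the learner's posterior is Gaussian with covariance $\Sigma$. This holds because menus are built from $\theta$ only through linear Gaussian statistics, so the menu-generation step is itself a Gaussian linear observation. We split the horizon into blocks of length $n$. At the start of block $b$ we pick, using $\Sigma_b$, an orthonormal $U\in\RR^{d\times D}$ with $D=\lfloor d/2\rfloor-1$ on which $U^\top\Sigma_bU\succeq c_1\sigma^2 I_D$. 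Such a $U$ exists as long as $\tr(\Sigma_b^{-1})<2d/\sigma^2$: by Markov's inequality, at least $d/2$ eigenvalues of $\Sigma_b^{-1}$ are at most $4/\sigma^2$. We then build the planted menu described in Section~\ref{sec:outline}. The menu is drawn once per block and kept fixed; it consists of $x_i=U S^{1/2}G_i/\|\cdot\|$, suitably normalized to have norm $\approx1$ after dividing by $\sqrt{D}\sigma$. The hidden index uses $\rho z+\sqrt{1-\rho^2}G_J$, and the menu law does not depend on $J$. The precision increase caused by revealing the menu is at most $D\rho^2/((1-\rho^2)\sigma^2)$ via $z$. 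Rewards contribute $\sum_s\|x_s\|^2\le T$ in total, which with $c_0$ small is at most $d/(2\sigma^2)$. Hence the budget survives for $B\asymp d/(D\rho^2)$ blocks.

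\textbf{Per-block regret.} Condition on the history before the block and on the menu. The gap of $J$ over the other actions is $\Delta\asymp\sigma\rho\sqrt D$ with constant probability, by Gaussian concentration of $z^\top G_J$ versus $z^\top G_i$. For each $J$, compare the law of the block's observations with that of the reference experiment in which the mean shift $\mu_J$ of $U^\top\theta$ induced by the planting is removed. By the chain rule and Gaussian noise, the KL divergence is $\frac12\sum_s\EE[((U^\top x_{A_s})^\top\mu_J)^2]$. Averaging over $J$ uniform and using $\mu_J\propto\rho\, S^{1/2}G_J$ together with near-orthogonality of the $G_i$ (this is where $K\le d$, i.e. $K\lesssim D$, is used), we get $\frac1K\sum_J(x_i^\top\mu_J)^2\lesssim\Delta^2/K$ for every fixed $i$. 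Hence the average KL is $O(n\Delta^2/K)$. Pinsker's inequality then gives $\frac1K\sum_J\EE_J[\#\{s:A_s=J\}]\le \frac nK+n\sqrt{O(n\Delta^2/K)}\le n/2$ when $n=c_2K/\Delta^2$, so the block regret is $\gtrsim n\Delta$. The choice $D\rho^2\asymp\sqrt K$ (with $\rho\le1/2$, which is feasible since $\sqrt K\lesssim D$) makes $Bn\asymp T$ and gives total regret $\gtrsim T\Delta\asymp K^{1/4}\sqrt{dT}$, which is \eqref{eq:small-lower}.

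\textbf{Large menus.} For $K\ge d$ we use the product construction. We split $U$ into $m$ orthogonal groups of dimension $D/m\gtrsim k$, plant an independent hidden index $J_\ell$ in each group, and form actions $m^{-1/2}\sum_\ell x_{\ell,i_\ell}$. This requires $k^m\le K$ and $mk\lesssim d$. Because the groups are orthogonal and the reward is additive across them, the KL per reward for group $\ell$ is $O(\Delta_\ell^2/(mk))$ with per-group gap $\Delta_\ell/\sqrt m$. The regret adds over groups, and rerunning the budget computation gives total regret $\asymp \sqrt{dT}(d\cdot\text{const}/k)^{1/4}$ with $k\ge 2$ subject to $m=\lfloor\log K/\log k\rfloor$ and $mk\lesssim d$. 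Choosing $k\asymp d/\log K\cdot\log(2d)$ when $\log K\le d$, and $k=2$, $m\asymp d$ otherwise, yields \eqref{eq:large-lower}; the $\log(2d)$ loss arises from $\log k\le\log(2d)$.

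\textbf{Main obstacle.} The delicate step is the rigorous information accounting across adaptively chosen blocks. We must show that the posterior stays exactly Gaussian with the claimed precision growth when the menu depends on $\theta$ through $z$ and its normalization. To do this, we will define menus as $x_i=U S^{1/2}G_i/(\sigma\sqrt{2D})$ with a norm clip whose failure probability is negligible. We must also justify the reference-experiment KL bound uniformly over adaptive learner policies. We will handle the latter with the chain rule applied conditionally on each past history, bounding the $J$-average pointwise for every action, so that the learner's adaptivity does not matter.
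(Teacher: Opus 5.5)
Your architecture matches the paper's: a Gaussian prior with $\sigma^2\asymp d/T$, blocks on a high-variance subspace, a planted column $\rho z+\sqrt{1-\rho^2}G_J$, a reference experiment with the planted shift removed, a trace-of-precision budget, and the product construction. But the step that chains the blocks is false as stated.

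\textbf{The posterior does not stay Gaussian.} The menu reveals $\rho z+\sqrt{1-\rho^2}G_J$ at an \emph{unknown} index. Its likelihood is therefore a uniform mixture over the hidden label, and after the block the posterior is a Gaussian mixture that the rewards only partially reweight. You need exact Gaussianity in two places.
\begin{enumerate}
\item The indistinguishability $\PP(J=j\mid\text{history},Q)=1/K$ in the next block requires $S^{-1/2}U^\top(\theta-h)\sim\N(0,I_D)$ exactly given the history. Otherwise the hidden column is no longer distributed like the decoys.
\item The bound $\tr(\Sigma_+^{-1})\le\tr(\Sigma^{-1})+O(D\rho^2/\sigma^2)+\sum_s\|x_s\|_2^2$ presupposes a single Gaussian posterior. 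The menu is a linear Gaussian observation of $z$ only \emph{given} $J$.
\end{enumerate}
The normalization you single out is the minor issue. The missing idea is the paper's auxiliary experiment: reveal the unnormalized menu vectors at the start of each block and the hidden labels after its last reward. Then $\theta\mid\cH\sim\N(h,\Sigma)$ exactly at every block boundary, and clipping is a deterministic function of revealed data. Revealing $J$ only after the block cannot help the learner within it. The bound transfers back to the original model for two reasons. Any original learner is an auxiliary learner that ignores the disclosures. And the adversary can keep the auxiliary variables in its private state, so its mechanism stays nonanticipating (Remark~\ref{rem:disclosures}). You should also choose $U$ with $U^\top h=0$ (intersect with $h^\perp$), so that the known offsets $x_i^\top h$ cannot make a decoy optimal.

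\textbf{The gap step fails for small $k$.} Given $(Q,J)$, $U^\top\theta=\mu_J+\zeta$ with $\zeta\sim\N(0,(1-\rho^2)S)$. The residual fluctuations $\max_i|x_i^\top\zeta|\asymp\sigma\sqrt{\log k}$ therefore compete with the planted gap $\Delta\asymp\sigma k^{1/4}$. The subtraction $\EE_J[(n-N_J)\1_{\mathsf E}]\ge n-\EE_JN_J-n\PP(\mathsf E^c)$ needs $\PP(\mathsf E^c)$ small compared with the guaranteed miss fraction. A gap that holds only with some constant probability does not suffice. You need $\Delta\gg\sigma\sqrt{\log k}$, which forces $k$ above a large absolute constant; the paper takes $k\ge2^{40}$. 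Consequently:
\begin{enumerate}
\item For $K$ or $d$ below such a constant, invoke the oblivious $\Omega(\sqrt{dT})$ bound (Lemma~\ref{lem:baseline}).
\item For $\log K>d$, replace $k=2$ by a large constant $k$ with $m\asymp d$.
\end{enumerate}

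\textbf{The large-menu arithmetic is off.} The product construction gives $\sqrt m\,k^{1/4}\sqrt{dT}$, not $\sqrt{dT}(d/k)^{1/4}$. With your choices $k\asymp d\log(2d)/\log K$ and $m=\lfloor\log K/\log k\rfloor$, you get $mk\asymp d\log(2d)/\log k$. When $\log K\asymp d$ this violates $mk\lesssim d$ by a factor $\log d/\log\log d$. Instead, fix $m=\lfloor\min\{\log K/\log d,\,d/C\}\rfloor$ first and set $k=\lfloor d/(16m)\rfloor$. Then $k^m\le d^m\le K$ and $\sqrt m\,k^{1/4}\asymp(dm)^{1/4}$. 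The remaining $K-k^m$ slots must be padded, e.g.\ by convex combinations of principal actions; their KL and regret contributions are controlled by convexity.
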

\begin{corollary}[Oblivious and adaptive menus]\label{cor:adaptivity-gap}
For $K=d$ and $T\ge d^2$, adaptive menus have minimax regret
$\Omega(d^{3/4}\sqrt T)$, whereas oblivious menus admit
$\widetilde O(\sqrt{dT})$ regret. Thus adaptivity can increase the minimax
regret by a factor of $d^{1/4}$, up to logarithmic factors.
\end{corollary}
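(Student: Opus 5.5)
The statement to prove is Corollary~\ref{cor:adaptivity-gap}. It combines two results: the adaptive lower bound is Theorem~\ref{thm:lower} specialized to $K=d$, and the oblivious upper bound is the known SupLinUCB-type guarantee from the introduction.

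\textbf{Adaptive lower bound.} Take $K=d\ge2$ and $T\ge d^2$. Then $K$ lies in the range $2\le K\le d$, so \eqref{eq:small-lower} applies and gives
\[
\MinReg\ge cK^{1/4}\sqrt{dT}=c\,d^{1/4}\sqrt{dT}=c\,d^{3/4}\sqrt T .
\]
Since $K=d$ also satisfies $K\ge d$, I will check \eqref{eq:large-lower} for consistency. There $\log K/\log(2d)=\log d/\log(2d)\ge 1/2$ for $d\ge2$, which gives $\Omega(d^{3/4}\sqrt T)$ again. One point needs care: $\MinReg$ in \eqref{eq:minimax} takes the supremum over all nonanticipating mechanisms $Q\in\mathcal Q$, which are exactly the adaptive menus. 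The bound is therefore a statement about adaptive menus.

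\textbf{Oblivious upper bound.} If the whole sequence $\{\mathcal A_t\}$ is fixed at time $0$, the menus are deterministic given $\theta$ and carry no dependence on the noise. The oblivious model therefore sits in the setting of SupLinUCB \citep{chu2011} and LinRel \citep{auer2002}. Their guarantee, listed in Table~\ref{tab:comparison}, is
\[
O\bigl(\sqrt{dT\log^3(KT)}\bigr)=\widetilde O(\sqrt{dT})\quad\text{for }K=d,
\]
since $\log(KT)=\log(dT)$ is only logarithmic. Two modeling details need to be matched. First, those works assume sub-Gaussian or bounded rewards, and $\N(0,1)$ noise is sub-Gaussian. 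Second, the parameter lies in $B_2^d$ and the features in the unit ball, as in their setting.

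\textbf{Ratio.} Dividing the two bounds, the adaptive minimax regret exceeds the oblivious one by a factor of
\[
\frac{d^{3/4}\sqrt T}{\sqrt{dT}\operatorname{polylog}(dT)}=\frac{d^{1/4}}{\operatorname{polylog}(dT)} .
\]

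The main obstacle is only a bookkeeping issue: the oblivious regret bounds cited are for fixed action sets, possibly with a parameter chosen before interaction. I will verify that an adversary fixing menus at time $0$ is covered, possibly with randomization independent of the noise. In that case one conditions on the menu sequence and applies the fixed-sequence bound pointwise. Nothing else requires new argument.
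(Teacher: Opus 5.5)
Your proposal is correct and takes the same route the paper intends: the corollary has no separate proof in the paper. It follows by specializing \eqref{eq:small-lower} to $K=d$, which gives $c\,d^{3/4}\sqrt T$, and by citing the SupLinUCB/LinRel bound $O(\sqrt{dT\log^3(KT)})$ for oblivious menus. Your consistency check against \eqref{eq:large-lower} and your remark about conditioning on a noise-independent menu sequence are harmless extras, not needed for the argument.
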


The detailed proof of Theorem~\ref{thm:lower} is given in
Section~\ref{sec:lower-analysis}, with the supporting lemmas proved in
Appendix~\ref{app:lower}. Beyond the separation in
Corollary~\ref{cor:adaptivity-gap}, Theorem~\ref{thm:lower} gives a finer
dependence on the menu size. For $K\le d$, the lower bound scales as
$\Omega(K^{1/4}\sqrt{dT})$. For $K\ge d$, it grows with
$(\log K)^{1/4}$ until reaching the $d\sqrt T$ scale. In particular, for
polynomially large $K\ge d$, the lower bound is
$d^{3/4}\sqrt T$ up to logarithmic factors.

\section{Algorithms}\label{sec:algorithms}
Both algorithms use a shared ridge regression together with a finite family of repair sequences.
Each repair sequence modifies the shared estimate through its own offset. The two algorithms differ
 {in the master used to combine the expert recommendations and the 
feedback used to update their weights.} We first present the
small-menu algorithm for $K\le d$, and then replace its master with a geometric one for large menus.

\subsection{The Algorithm for Small Menus}\label{sec:small-algorithm}
This subsection presents Repair-IX, whose pseudo-code is given in Algorithm~\ref{alg:upper}. 
Both algorithms enumerate the full repair family and are not polynomial-time.
For $\alpha>0$, set
$M:=\ceil{(1+d\log(1+T/d)+4\log T)/\alpha^2}$ and define
\begin{equation}\label{eq:repair-family}
\cE_M:=\left\{e=((\tau_j,i_j,s_j))_{j=1}^r:0\le r\le M,\ 1\le\tau_1\le\cdots\le\tau_r\le T,\ i_j\in[K],\ s_j\in\{-1,1\}\right\}.
\end{equation}
Thus each $e\in\cE_M$ is a repair sequence containing at most $M$ repairs, and $|\cE_M|\le(2KT+1)^M$. All repair sequences share the ridge statistics $V_t:=I_d+\sum_{s<t}x_sx_s^\top$ and $b_t:=\sum_{s<t}x_sY_s$. For each
displayed action, let $w_{t,i}:=\sqrt{x_{t,i}^\top V_t^{-1}x_{t,i}}$ and $\beta_{t,i}:=\min\{1,\alpha w_{t,i}\}$. Each repair sequence $e\in\cE_M$
 {maintains an offset $c_e$, initialized at zero. This offset records that
expert's repairs to the shared estimate; it does not use a separate data set.} If $e$ contains a repair $(t,i,s)$, it applies
\begin{equation}\label{eq:repair-update}
 {c_e\leftarrow c_e+\frac{\alpha s}{w_{t,i}}x_{t,i}\quad\text{if }w_{t,i}>0.}
\end{equation}
 {If $w_{t,i}=0$, the repair leaves $c_e$ unchanged. After applying all repairs specified by $e$ at round $t$, it forms} $v_{t,e}:=V_t^{-1}(b_t+c_e)$ and recommends
\begin{equation}\label{eq:recommendation}
a_{t,e}\in\argmax_{i\in[K]}\left\{x_{t,i}^\top v_{t,e}+\alpha w_{t,i}\right\}.
\end{equation}
Each repair sequence is treated as an expert with weight $W_e$, initialized at one. The weights define
\begin{equation}\label{eq:ix-probabilities}
p_{t,i}:=\frac{\sum_{e\in\cE_M}W_e\1_{\{a_{t,e}=i\}}}{\sum_{e\in\cE_M}W_e}.
\end{equation}
After sampling $A_t\sim p_t$ and observing $Y_t$, define $Z_t:=(\1_{\{Y_t\le0\}}+1-\beta_{t,A_t})/2$ and use
\begin{equation}\label{eq:ix-update}
\widehat\ell_{t,e}:=\frac{Z_t\1_{\{a_{t,e}=A_t\}}}{p_{t,A_t}+\eta},\qquad
W_e\leftarrow W_e\exp(-\eta\widehat\ell_{t,e}).
\end{equation}
  
\begin{algorithm}[H]
\caption{Repair-IX: shared ridge regression with repair sequences}\label{alg:upper}
\begin{algorithmic}[1]
\Require $\alpha>0$ and $\eta>0$
\State Let $\cE_M$ be the family of schedules defined in  \eqref{eq:repair-family}
\State Initialize $V_1\leftarrow I_d$, $b_1\leftarrow0$, and $c_e\leftarrow0$, $W_e\leftarrow1$ for every $e\in\cE_M$
\For{$t=1,\ldots,T$}
\State Observe $x_{t,1},\ldots,x_{t,K}$ and compute $w_{t,i}$ and $\beta_{t,i}$ for every $i\in[K]$
\State For every $e\in\cE_M$, apply its repairs at round $t$ using \eqref{eq:repair-update}
\State For every $e\in\cE_M$, compute its recommendation using \eqref{eq:recommendation}
\State Compute $p_t$ using \eqref{eq:ix-probabilities} and sample $A_t\sim p_t$
\State Observe $Y_t$, compute $Z_t$, and update every $W_e$ using \eqref{eq:ix-update}
\State $V_{t+1}\leftarrow V_t+x_{t,A_t}x_{t,A_t}^\top$, $b_{t+1}\leftarrow b_t+x_{t,A_t}Y_t$
\EndFor
\end{algorithmic}
\end{algorithm}

\begin{theorem}[Upper bound for small menus]\label{thm:upper}
For $2\le K\le d\le T$, run Algorithm~\ref{alg:upper} with
$\alpha=[K\log(2KT+1)]^{1/4}$ and
$\eta=\sqrt{M\log(2KT+1)/(KT)}$.
Then, for a universal constant $C>0$,
\begin{equation}\label{eq:upper-main}
\EE[R_T]\le   {C}\,[K\log(2KT+1)]^{1/4}\sqrt{dT\log(1+T/d)}.
\end{equation}
\end{theorem}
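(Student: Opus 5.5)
The proof proceeds in three steps. First we show that, with high probability, the family $\cE_M$ contains a good expert $e^\star$. Second we convert the regret into a comparison of discounted losses. Third we bound that comparison with the EXP4-IX guarantee and sum the leftover uncertainty with the elliptical potential.

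\textbf{Step 1: the good expert.} We define an oracle repair process that knows $\theta$ and maintains an offset $c^\star_t$, starting at zero, with $v^\star_t=V_t^{-1}(b_t+c^\star_t)$. At round $t$, as long as some displayed $i$ has $|x_{t,i}^\top(v^\star_t-\theta)|>\alpha w_{t,i}$, it applies the repair $(t,i,s)$ with $s=\operatorname{sign}(x_{t,i}^\top(\theta-v^\star_t))$.

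For the potential we use $\Phi_t=\|V_t v^\star_t-V_t\theta\|_{V_t^{-1}}^2=\|b_t+c^\star_t-V_t\theta\|_{V_t^{-1}}^2$. Write $u=V_t^{-1/2}x_{t,i}/w_{t,i}$, a unit vector. A repair adds $\alpha s\,V_t^{-1/2}x_{t,i}/w_{t,i}=\alpha s\,u$ to the vector $V_t^{-1/2}(b_t+c^\star_t-V_t\theta)$. The inner product of $u$ with that vector is $x_{t,i}^\top(v^\star_t-\theta)/w_{t,i}$, whose magnitude exceeds $\alpha$ and whose sign is opposite to $s$. Hence one repair decreases $\Phi$ by at least $2\alpha\cdot\alpha-\alpha^2=\alpha^2$.

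Between rounds, $\Phi$ changes by the noise terms $x_t\varepsilon_t$ and by the change of the metric $V_t\to V_{t+1}$. Because $c^\star$ and $V_t\theta$ are shifted consistently, $b_t-V_t\theta=\sum_{s<t}x_s\varepsilon_s-\theta$. The self-normalized martingale bound of \citet{abbasi2011} then gives, with probability at least $1-1/T^2$, a total increase bounded by $1+d\log(1+T/d)+4\log T$ plus cross terms.

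I expect this accounting to be the main obstacle. The cross term between $c^\star_t$ and the noise has to be handled, most likely by expanding $\Phi_{t+1}-\Phi_t$ directly and noting that shrinking $V_{t+1}^{-1}\preceq V_t^{-1}$ can only help the $c^\star$ part. Granting that, the number of repairs is at most $M$. Since each repair is a triple in $[T]\times[K]\times\{\pm1\}$, the oracle's repair history is some $e^\star\in\cE_M$, and $v_{t,e^\star}=v^\star_t$ after round-$t$ repairs.

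\textbf{Step 2: reduction to discounted losses.} After the round-$t$ repairs, $|x_{t,i}^\top(v_{t,e^\star}-\theta)|\le\alpha w_{t,i}$ for every displayed $i$. The expert's action $a^\star_t$ is optimistic, so
\[
\mu_{t,i^\star_t}-\mu_{t,a^\star_t}\le 2\alpha w_{t,a^\star_t}.
\]
Also $\mu\le 1$ trivially. Because $\Phi$ has derivative between constants on $[-1,1]$, regret is comparable to $\ell_t(A_t)-\ell_t(i^\star_t)$ with $\ell_t(i)=\Phi(-\mu_{t,i})$, and $\ell_t(a_t^\star)-\ell_t(i^\star_t)\le\beta_{t,a^\star_t}$ up to constants. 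The rearrangement stated in Section~\ref{sec:outline} then yields
\[
\ell_t(A_t)-\ell_t(i^\star_t)\lesssim 2\bigl(\bar\ell_t(A_t)-\bar\ell_t(a^\star_t)\bigr)+\beta_{t,A_t}.
\]
Note that $Z_t$ is an unbiased $[0,1]$-valued observation of $\bar\ell_t(A_t)$ given $A_t$.

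\textbf{Step 3: summation.} The EXP4-IX analysis of \citet{neu2015}, applied with the adaptively generated losses $\bar\ell_t$, gives
\[
\EE\sum_t\bigl(\bar\ell_t(A_t)-\bar\ell_t(a_{t,e^\star})\bigr)\lesssim \frac{\log|\cE_M|}{\eta}+\eta KT\lesssim\sqrt{KT\,M\log(2KT+1)}.
\]
The experts are fixed functions of the history, which is what IX requires. By the elliptical potential,
\[
\sum_t\beta_{t,A_t}\le\alpha\sqrt{T\sum_t\min\{1,w_{t,A_t}^2\}}\lesssim\alpha\sqrt{dT\log(1+T/d)}.
\]
The failure event of Step 1 costs at most $2T\cdot T^{-2}$. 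Substituting $M\asymp d\log(1+T/d)/\alpha^2$ (using $\log T\lesssim d\log(1+T/d)$ since $d\le T$), the total is
\[
\sqrt{dT\log(1+T/d)}\left(\frac{\sqrt{K\log(2KT+1)}}{\alpha}+\alpha\right).
\]
The choice $\alpha=[K\log(2KT+1)]^{1/4}$ balances the two terms and gives \eqref{eq:upper-main}.
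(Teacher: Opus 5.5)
Your architecture is the paper's: oracle repairs with potential $\|\theta-v\|_{V_t}^2$, the discounted loss $\bar\ell_t$, EXP4-IX over $\cE_M$, and the elliptical potential on played actions. However, Step 1 leaves the central estimate unproven, and the route you sketch would not supply it.

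\textbf{Main gap: the repair count.} The self-normalized bound of \citet{abbasi2011} controls $\|\sum_{s<t}x_s\varepsilon_s\|_{V_t^{-1}}$, but your potential is centered at the noise-dependent point $\theta-c^\star_t$. Each repair adds a vector of $V_t^{-1}$-norm at most $\alpha$. So a Cauchy--Schwarz treatment of the resulting cross term costs order $\alpha C_t\sqrt{d\log T}$. This swamps the $\alpha^2C_t$ progress whenever $\alpha\lesssim\sqrt{d\log T}$, which is exactly the regime of interest.

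What is missing is the exact one-round identity from the ridge update and Sherman--Morrison. Write $U_t^+=\|\theta-v_t^+\|_{V_t}^2$ after the round-$t$ repairs, $u=x_t^\top(\theta-v_t^+)$ and $r=x_t^\top V_t^{-1}x_t$. Then the potential after the reward is $U_{t+1}^-=U_t^++(-u^2-2u\varepsilon_t+r\varepsilon_t^2)/(1+r)$. The $-u^2$ term comes from the change of metric, and it is precisely what neutralizes the cross term $-2u\varepsilon_t$. The monotonicity $V_{t+1}^{-1}\preceq V_t^{-1}$ you propose to invoke throws it away.

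Keeping that term, completing the square gives $\EE_t\exp\{(-u^2-2u\varepsilon_t+r\varepsilon_t^2)/(2(1+r))\}=\sqrt{1+r}=\sqrt{\det V_{t+1}/\det V_t}$. Hence $\exp((U_{t+1}^-+\alpha^2C_t)/2)/\sqrt{\det V_{t+1}}$ is a supermartingale. Markov's inequality with $\det V_{T+1}\le(1+T/d)^d$ then gives $\alpha^2C_T\le1+d\log(1+T/d)+4\log T$ with probability at least $1-T^{-2}$.

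This sharp constant matters because $M$ is hard-coded in Algorithm~\ref{alg:upper}. A count bound with unresolved cross terms, or with extra additive pieces, does not place $e^\star$ in $\cE_M$. For example, dropping $-u^2$ and using a Freedman bound on the remaining martingale yields an extra $\alpha\sqrt{d\log(1+T/d)\log T}$.

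\textbf{Smaller points.} In Step 3, $e^\star$ is built from the realized noise and trajectory. A fixed-comparator EXP4-IX expectation bound therefore does not cover $\EE\sum_t(\bar\ell_t(A_t)-\bar\ell_t(a_{t,e^\star}))$. Predictability of the recommendations is a separate issue. You need the bound with $\min_{e\in\cE_M}$ inside the expectation, as in Lemma~\ref{lem:ix}. The paper gets it from the one-sided estimate $\EE_te^{\eta\widehat\ell_{t,e}}\le e^{\eta\bar\ell_t(a_{t,e})}$ plus log-sum-exp over $\cE_M$, paying a second $\log|\cE_M|/\eta$. A high-probability IX bound holding simultaneously for all experts would also work.

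In Step 2, the domination $\ell_t(a^\star_t)-\ell_t(i^\star_t)\le\beta_{t,a^\star_t}$ must hold with constant exactly one, since $\beta_{t,i}$ enters $\bar\ell_t$ with coefficient one. It does hold: the Gaussian cdf has slope at most $\phi(0)<1/2$, the reward gap is at most $2\alpha w_{t,a^\star_t}$, and the loss gap is at most one. But ``up to constants'' would not suffice for the rearrangement you use.
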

The proof of Theorem~\ref{thm:upper} is in
Appendix~\ref{sec:upper-analysis}. The supporting lemmas are proved in
Appendix~\ref{app:upper}. \textbf{Comparison with previous work:}
For $2\le K\le d$, the best previous bound for adaptive
menus is $\widetilde O(\sqrt{dKT})$, obtained by SquareCB
\citep{foster2020}. Theorem~\ref{thm:upper} improves this bound by a factor
of $K^{1/4}$. Together with Theorem~\ref{thm:lower}, this gives matching
upper and lower bounds up to logarithmic factors when $T\ge d^2$.
\subsection{Algorithm for Large Menus}\label{sec:large-algorithm}
\begin{algorithm}[H]
\caption{Repair-Geo: repair sequences with a geometric master}
\label{alg:geo}
\begin{algorithmic}[1]
\Require $\alpha>0$ and
$0<\eta\le1/(   {16}\min\{K,d+1\})$
\State Let $\cE_M$ be defined as in \eqref{eq:repair-family}
\State Set $\gamma\leftarrow   {8}\min\{K,d+1\}\eta$
\State Initialize $V_1\leftarrow I_d$, $b_1\leftarrow0$, and
$c_e\leftarrow0$, $W_e\leftarrow1$ for every $e\in\cE_M$
\For{$t=1,\ldots,T$}
    \State Observe $x_{t,1},\ldots,x_{t,K}$ and compute
    $w_{t,i}$ and $\beta_{t,i}$ for every $i\in[K]$
    \State For every $e\in\cE_M$, apply all repairs of $e$ scheduled at
    round $t$ using \eqref{eq:repair-update}, and compute $a_{t,e}$
    using \eqref{eq:recommendation}
    \State Form the lifted features $z_{t,1},\ldots,z_{t,K}$
    \State Compute $p_t^0$ and    {compute $\nu_t$ by Lemma~\ref{lem:design}}
    \State Set
    $p_t\leftarrow(1-\gamma)p_t^0+\gamma\nu_t$ and
    $\Gamma_t\leftarrow\sum_i p_{t,i}z_{t,i}z_{t,i}^\top$
    \State Sample $A_t\sim p_t$
    \State Observe $Y_t$ and set
    $Z_t\leftarrow(Y_t+2\beta_{t,A_t}-1)/2$
    \State For every $e\in\cE_M$, compute $\widehat g_{t,e}$ and
    $h_{t,e}$, and update
    $W_e\leftarrow
    W_e\exp\{\eta(\widehat g_{t,e}+2\eta h_{t,e})\}$
    \State
    $V_{t+1}\leftarrow V_t+x_{t,A_t}x_{t,A_t}^\top$ and
    $b_{t+1}\leftarrow b_t+x_{t,A_t}Y_t$
\EndFor
\end{algorithmic}
\end{algorithm}
Repair-Geo keeps the repair family $\cE_M$, ridge statistics, repairs, and
recommendations $a_{t,e}$ from Repair-IX. Its master uses the shared linear
structure of the rewards, rather than treating the $K$ actions as unrelated
arms. The geometric estimator and leverage correction follow the approach
of \citet{zimmert2022}, adapted here to changing action features and repair
experts chosen for comparison after the history is known.

\paragraph{The master.}
Choose $0<\eta\le1/(16\min\{K,d+1\})$ and set
$\gamma:=8\min\{K,d+1\}\eta$.
The sign-based loss used by Repair-IX is not linear in the action features,
so here we use a linear surrogate reward.
For action $i$, define
$z_{t,i}:=(x_{t,i}^\top,2\beta_{t,i}-1)^\top$;
if $A_t$ is played, set
$Z_t:=(Y_t+2\beta_{t,A_t}-1)/2$.
With $\vartheta:=\frac12(\theta^\top,1)^\top$, its conditional mean is
\[
g_t(i):=\EE_t[Z_t\mid A_t=i]
=z_{t,i}^\top\vartheta
=\frac{\mu_{t,i}+2\beta_{t,i}-1}{2}.
\]
The term $2\beta_{t,i}$ compensates for the possible reward gap of the good
repair expert, so that the remaining uncertainty can be charged to the
actions actually played.

The expert weights first define
\[
p_{t,i}^0:=
\frac{\sum_{e\in\cE_M}W_e\1_{\{a_{t,e}=i\}}}
{\sum_{e\in\cE_M}W_e}.
\]
This distribution may concentrate on only a few directions of the lifted
feature space. We therefore mix it with an approximate exploration design.
Let $\nu_t\in\Delta_K$ be the approximate design computed by the finite
procedure in Lemma~\ref{lem:design}, where
$\Delta_K:=\{\nu\in\RR_+^K:\sum_i\nu_i=1\}$.
Its largest leverage is at most twice the dimension of the current feature
span. This span is nonzero: if $x_{t,i}=0$, then $\beta_{t,i}=0$ and the last
coordinate of $z_{t,i}$ is $-1$. We mix the design with the expert distribution:
\begin{equation}\label{eq:geo-distribution}
p_t
:=
\underbrace{(1-\gamma)p_t^0}_{\text{follow the repair experts}}
+
\underbrace{\gamma\nu_t}_{\text{geometric exploration}},
\qquad
\Gamma_t
:=
\sum_{i=1}^K p_{t,i}z_{t,i}z_{t,i}^\top.
\end{equation}
The learner samples $A_t\sim p_t$. The matrix $\Gamma_t$ records how well
the resulting sampling distribution covers the different directions in the
current lifted feature space.

After observing $Z_t$, Repair-Geo uses the linear structure to evaluate all
repair experts from this single observation. Write $\Gamma_t^\dagger$ for
the inverse of $\Gamma_t$ on
$\operatorname{span}\{z_{t,1},\ldots,z_{t,K}\}$, extended by zero on its
orthogonal complement. For expert $e$, define
$\widehat g_{t,e}:=
z_{t,a_{t,e}}^\top\Gamma_t^\dagger z_{t,A_t}Z_t$.
 {For each fixed expert $e$, this estimate is conditionally unbiased:
$\EE_t[\widehat g_{t,e}]=g_t(a_{t,e})$.
It uses the single observed reward to score even experts whose recommended
actions were not played.} The factor $\Gamma_t^\dagger$ plays the same role as an
importance-weighting correction, but uses the geometry shared by all actions
rather than treating their sampling probabilities independently.

We also define
$h_{t,e}:=z_{t,a_{t,e}}^\top\Gamma_t^\dagger z_{t,a_{t,e}}$.
This leverage score measures how well the direction recommended by expert
$e$ is covered by the current sampling distribution. A poorly explored
direction has a larger leverage score, which also controls the second moment
of its geometric reward estimate. The expert weights are updated by
\begin{equation}\label{eq:geo-update}
W_e
\leftarrow
W_e
\exp{\eta(\widehat g_{t,e}+2\eta h_{t,e})}.
\end{equation}
The positive correction $2\eta h_{t,e}$ controls cumulative underestimation
of the repair expert chosen for comparison after the trajectory is observed.
We use Repair-Geo when $\log K\le d$ and
$M\log(2KT+1)\le T/(16\min\{K,d+1\})$.
Otherwise we use OFUL with the parameters of Lemma~\ref{lem:oful} and
its rule \eqref{eq:oful-rule}. The following theorem bounds this combined rule.

\begin{theorem}[Upper bound for large menus]
\label{thm:large}
For $K\ge d\ge2$ and $T\ge d$, let
$\alpha=[\min\{K,d+1\}\log(2KT+1)]^{1/4}$.
If $\log K\le d$ and
$M\log(2KT+1)
\le \frac{T}{16\min\{K,d+1\}}$,
run Algorithm~\ref{alg:geo} with
$
\eta=
\sqrt{
\frac{M\log(2KT+1)}
{16\min\{K,d+1\}T}
}$.
Otherwise, run OFUL with $\lambda=1$ and $\delta=T^{-2}$.
Then, there exists a universal constant $C>0$,
\begin{equation}\label{eq:large-upper}
\EE[R_T]
\le
C\log(2dT)\sqrt{dT}
\min\left\{
\sqrt d,\,
(d\log K)^{1/4}
\right\}.
\end{equation}
\end{theorem}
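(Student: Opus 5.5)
The proof splits according to which branch of the combined rule is executed. In the OFUL branch we take the standard guarantee of Lemma~\ref{lem:oful} with $\lambda=1$, $\delta=T^{-2}$, which gives $\EE[R_T]\lesssim d\sqrt{T}\log T$. The task is to show that whenever this branch is used, $d\sqrt T\log T$ is already dominated by the right-hand side of \eqref{eq:large-upper}. If $\log K>d$, then $\min\{\sqrt d,(d\log K)^{1/4}\}=\sqrt d$, and the claim is immediate. Otherwise $\log K\le d$ and the side condition fails, i.e. $M\log(2KT+1)>T/(16\min\{K,d+1\})$. Since $M\asymp d\log(1+T/d)/\alpha^2$, $\alpha^2=\sqrt{(d+1)\log(2KT+1)}$ and $\min\{K,d+1\}=d+1$, this reads roughly $d^{3/2}\sqrt{\log(2KT)}\,\log T\gtrsim T$. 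In this regime $T$ is small, so the crude bound $R_T\le 2T$ together with $T\lesssim d^{3/2}\sqrt{\log(2KT)}\log T$ gives
\[
T=\sqrt T\sqrt T\lesssim\sqrt T\, d^{3/4}(\log(2KT))^{1/4}\log^{1/2}T\lesssim\log(2dT)\sqrt{dT}\,(d\log K)^{1/4}.
\]
Here $\log(2KT)\le\log K+\log(2T)$. The $\log K$ piece gives $(d\log K)^{1/4}$ directly, and the $\log(2T)$ piece is absorbed into $\log(2dT)$ using $\log K\ge\log d\ge\log 2$. The case is closed by taking the minimum of this bound with the OFUL bound.

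In the Repair-Geo branch we follow the structure of the proof of Theorem~\ref{thm:upper}. First, as in the small-menu analysis, on a high-probability event we use the oracle repair process (each repair gives at least $\alpha^2$ progress, with the total controlled by $\log\det V_{T+1}$) to find a good expert $e^\star\in\cE_M$. This expert predicts every displayed action within $\alpha w_{t,i}$ and hence satisfies $\mu_{t,i_t^\star}-\mu_{t,a_{t,e^\star}}\le2\beta_{t,a^\star_t}$; the event where $\beta$ is truncated at $1$ is handled trivially. The surrogate $g_t(i)=(\mu_{t,i}+2\beta_{t,i}-1)/2$ then yields the rearrangement
\[
\mu_{t,i_t^\star}-\mu_{t,A_t}\le 2\bigl(g_t(a_{t,e^\star})-g_t(A_t)\bigr)+2\beta_{t,A_t}.
\]
The sum of $\beta_{t,A_t}$ is bounded by the elliptical potential lemma, giving $O(\alpha\sqrt{dT\log(1+T/d)})$.

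The main obstacle is the master regret $\sum_t(g_t(a_{t,e^\star})-\EE_t g_t(A_t))$ when the comparator $e^\star$ is chosen after the history is revealed. The plan is the exponential-weights potential argument with the unbiased estimator $\widehat g_{t,e}$.
\begin{itemize}
\item Exploration with $\gamma\nu_t$, using the leverage bound of Lemma~\ref{lem:design}, gives $|\eta\widehat g_{t,e}|\le \eta h_{t,e}\le\eta\cdot 2(d+1)/\gamma\le 1/4$. This makes $e^x\le1+x+x^2$ applicable.
\item The second moment satisfies $\EE_t[\sum_e q_e\widehat g_{t,e}^2]\lesssim\sum_i p^0_{t,i}z_{t,i}^\top\Gamma_t^\dagger z_{t,i}\lesssim \tr(\Gamma_t^\dagger\Gamma_t)\le d+1$.
\item The bonus $2\eta h_{t,e}$ makes $\sum_t(\widehat g_{t,e^\star}+2\eta h_{t,e^\star})$ dominate $\sum_t g_t(a_{t,e^\star})$ with high probability simultaneously for all $e$, via a Freedman-type bound plus a union bound costing $\log|\cE_M|$. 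The variance of $\widehat g_{t,e}$ is at most $h_{t,e}$, and this is the term the bonus pays for.
\end{itemize}
Combining these with the cost $\gamma T$ of exploration gives master regret $O(\log|\cE_M|/\eta+\eta(d+1)T)=O(\sqrt{(d+1)T\,M\log(2KT+1)})$ at the stated $\eta$.

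With $M\asymp d\log(1+T/d)/\alpha^2$ this becomes $\sqrt{dT\log T}\,\sqrt{d\log(2KT)}/\alpha$. The choice $\alpha=[(d+1)\log(2KT+1)]^{1/4}$ balances it against $\alpha\sqrt{dT\log T}$, giving $\sqrt{dT}\,(d\log(2KT))^{1/4}\log T$. Converting $\log(2KT)$ into $\log K$ times a factor $\log(2dT)$, and using $\log K\le d$ so that the minimum in \eqref{eq:large-upper} equals $(d\log K)^{1/4}$, yields the stated bound. The failure probability of order $T^{-1}$ contributes $O(1)$ to the expected regret.
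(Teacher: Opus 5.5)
Your architecture matches the paper's: the same branch split, the same per-round inequality $\mu_{t,i_t^\star}-\mu_{t,A_t}\le 2(g_t(a_t^\star)-g_t(A_t))+2\beta_{t,A_t}$, the same good-repair lemma and elliptical potential, and the same exponential-weights master with a leverage bonus and exploration cost of order $\gamma T$. The branch split is: OFUL when $\log K>d$; when $\log K\le d$ and the side condition fails, the crude bound $R_T\le 2T$ played against that failed condition; Repair-Geo otherwise.

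\textbf{The gap is in your first bullet.} Write $\widehat g_{t,e}=a_{A_t}Z_t$ with $a_i:=z_{t,a_{t,e}}^\top\Gamma_t^\dagger z_{t,i}$. Cauchy--Schwarz in the $\Gamma_t^\dagger$ metric gives $|a_i|\le\sqrt{h_{t,e}\,z_{t,i}^\top\Gamma_t^\dagger z_{t,i}}\le 2\min\{K,d+1\}/\gamma$, not $|a_i|\le h_{t,e}$. More importantly, $Z_t=(Y_t+2\beta_{t,A_t}-1)/2$ contains the Gaussian noise $\varepsilon_t/2$ and is unbounded. So $\eta\widehat g_{t,e}$ is not almost surely bounded. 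Consequently, neither the pathwise expansion $e^x\le1+x+x^2$ in the weight potential nor a bounded-increment Freedman inequality for the post hoc comparator is available as you state them.

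\textbf{How the paper handles this.} Bound only the coefficient, $|\eta a_i|\le 1/4$, and use the conditional sub-Gaussian moment generating function. With $|u|\le 1/4$ and $|g_t(i)|\le1$,
$\EE_t[e^{uZ_t}\mid A_t=i]\le e^{ug_t(i)+u^2/2}\le 1+ug_t(i)+2u^2$.
Averaging over $A_t\sim p_t$ gives $\EE_t e^{\lambda\widehat g_{t,e}}\le\exp\{\lambda g_t(a_{t,e})+2\lambda^2h_{t,e}\}$ for $|\lambda|\le\eta$.

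\emph{Comparator side ($\lambda=-\eta$).} This shows that $\exp\{\eta\sum_t(g_t(a_{t,e})-\widehat g_{t,e}-2\eta h_{t,e})\}$ is a supermartingale for each fixed $e$. Log-sum-exp over $\cE_M$ then gives $\EE\max_e\sum_t(g_t(a_{t,e})-\widehat g_{t,e}-2\eta h_{t,e})\le\log|\cE_M|/\eta$. This is your Freedman-plus-union-bound step in moment-generating-function form, and it is where the bonus $2\eta h_{t,e}$ is spent.

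\emph{Potential side ($\lambda=\eta$).} First apply Jensen:
$\EE_t\log\sum_e q_{t,e}e^{\eta(\widehat g_{t,e}+2\eta h_{t,e})}\le\log\sum_e q_{t,e}\EE_t e^{\eta(\widehat g_{t,e}+2\eta h_{t,e})}$.
The second-order expansion is then carried out inside the conditional expectation rather than pathwise. Your trace bound $\sum_e q_{t,e}h_{t,e}\le s/(1-\gamma)\le 2\min\{K,d+1\}$ finishes this step.

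With this change, your remaining steps give the paper's master bound $2\log|\cE_M|/\eta+32\eta\min\{K,d+1\}T$, and hence the theorem.
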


The proof of Theorem~\ref{thm:large} is given in
Appendix~\ref{sec:proof-large}. \textbf{Comparison with previous work:}
For $K\ge d$, the best previous general bound for
adaptive menus is the standard $\widetilde O(d\sqrt T)$ confidence-based bound \citep{abbasi2011}.
Theorem~\ref{thm:large} improves this to $\widetilde O(d^{3/4}\sqrt T(\log K)^{1/4})$, corresponding to an improvement by a factor of
$(d/\log K)^{1/4}$. In particular, for polynomially large $K$, the improvement is $d^{1/4}$ up to logarithmic
factors. Together with the lower bound $\Omega\left(\sqrt{dT}(d\log K/\log(2d))^{1/4}\right)$
from Theorem~\ref{thm:lower}, our upper bound is optimal up to logarithmic factors in $d$ and $T$.

\section{Conclusion and Open Problems}\label{sec:conclusion}
We studied linear contextual bandits with adaptively chosen action menus and established matching
upper and lower bounds on the worst-case regret up to logarithmic factors. For $2\le K\le d$, the
optimal dependence is $K^{1/4}\sqrt{dT}$ up to logarithmic factors. For $d\le K$, the dependence on the menu
size grows only as $(\log K)^{1/4}$; in particular, for polynomially large $K$, the optimal regret is $d^{3/4}\sqrt T$
up to logarithmic factors.

The main open problem is computational efficiency. Both Repair-IX and Repair-Geo maintain
an exponentially large family of repair sequences. It remains open whether one can obtain the same
regret guarantees with a polynomial-time algorithm, while retaining good empirical performance on
standard linear-bandit instances.
\bibliographystyle{plainnat}
\bibliography{references}
\clearpage
\appendix
\section{Related Work}\label{sec:related}

\paragraph{Linear bandits with adaptive features.}
Early work on linear contextual bandits mainly considered action features
that are fixed before the interaction. \citet{chu2011} gave SupLinUCB with
regret $O(\sqrt{dT\log^3(KT\log T)})$ and proved a matching
$\Omega(\sqrt{dT})$ lower bound up to logarithmic factors.
For $K\le 2^{d/2}$, \citet{li2019} sharpened this picture by proving a lower
bound of $\Omega(\sqrt{dT\log T\log K})$ and a VCL-SupLinUCB upper bound
matching it up to iterated logarithmic factors.
These results rely on the action features being chosen independently of the
past reward noise.

When the features may depend on past observations, general confidence-based
methods such as OFUL \citep{abbasi2011} give
$\widetilde O(d\sqrt T)$ regret. Linear Thompson sampling also allows
adaptive contexts; the revised analysis of
\citet{agrawal2013,agrawal2014} gives
$O(d\log T\sqrt{T\log K})$ regret.
\citet[Section~2.3]{foster2020} highlight the gap between fixed and adaptive
features and ask whether the additional dependence on $K$ is necessary.
Our results answer this question by giving the minimax dependence on the menu
size $K$ up to logarithmic factors.

\paragraph{Reductions to supervised learning.}
Another line of work reduces contextual bandits to supervised learning.
ILOVETOCONBANDITS \citep{agarwal2014} considers a finite policy class of
size $N$ under i.i.d.\ context--reward pairs and achieves the optimal
$O(\sqrt{KT\log N})$ regret up to logarithmic factors using a
cost-sensitive classification oracle.
Under realizability, \citet{simchilevi2022} give an optimal reduction to
offline regression using only $O(\log T)$ oracle calls, or
$O(\log\log T)$ calls when the horizon is known in advance.
Both approaches rely on independently drawn contexts.

SquareCB instead reduces contextual bandits to online regression and allows
the contexts to depend on the past \citep{foster2020}. If the online
regression oracle has regret $B_T$, SquareCB gives regret of order
$\widetilde O(\sqrt{KTB_T})$; for a $d$-dimensional linear class this becomes
$\widetilde O(\sqrt{dKT})$.
This makes SquareCB directly applicable to our adaptive-menu setting, but
leaves a $\sqrt K$ dependence. Our bounds use the linear structure more
directly and improve this dependence to $K^{1/4}$ for small menus and to
$(\log K)^{1/4}$ for large menus, up to logarithmic factors.

\paragraph{Combining experts.}
Our upper bounds also build on existing expert and adversarial-bandit
methods. Repair-IX uses the implicit-exploration estimator of
\citet{neu2015}; in particular, EXP4-IX achieves
$O(\sqrt{KT\log N})$ regret against $N$ experts up to lower-order and
high-probability terms, without explicit uniform exploration.
Repair-Geo uses a geometric estimator and leverage correction following the
approach of \citet{zimmert2022}. For a fixed finite action set, their
exponential-weights method achieves
$O(\sqrt{dT\log(KT)})$ regret against an adaptive adversary.

\section{Proof of the Lower Bounds}\label{sec:lower-analysis}
We first focus on small $K$. Our proofs require the following lemmas.
\subsection{Key Technical Lemmas}
For the proof we use a slightly easier auxiliary experiment.
The unnormalized menu vectors are revealed at the beginning of each block,
and its hidden labels are revealed after its last reward. The independent
Gaussian seeds used to form those vectors are not revealed. Write $\cH$
for the resulting augmented history before the next block. Conditioning on
$\cH$ preserves a Gaussian posterior. This history is used only in the
lower-bound analysis; it is not the learner's visible history $H_{t-1}$ in
the original model. Remark~\ref{rem:disclosures} explains how to remove the
disclosures.

\begin{lemma}\label{lem:blocks}

Let $m,n,k$ be positive integers and let $\sigma>0$, with $k\ge2^{40}$,
$16mk\le d$, and $k^m\le K$. In the auxiliary experiment,
the unnormalized menu vectors are revealed before a block and its hidden
labels after the block. Suppose that, at its start, the augmented history
$\cH$ satisfies $\theta\mid\cH\sim\N(h,\Sigma)$ with

\begin{equation}\label{eq:available-uncertainty}
0\prec\Sigma\preceq\sigma^2I_d,\qquad\tr(\Sigma^{-1})<\frac{2d}{\sigma^2}.
\end{equation}
If $n\sigma^2/(m\sqrt k)\le1/64$, then there exists an $n$-round block construction using the same menu of
exactly $K$ distinct unit-ball actions throughout the block such that any learner satisfies
\begin{equation}\label{eq:reusable-regret}
 {\EE[R_{\mathrm{block}}\mid\cH]}\ge\frac{n\sigma\sqrt m\,k^{1/4}}{256}.
\end{equation}
After the $n$ rewards and the delayed hidden labels are revealed, the posterior is again Gaussian, with
covariance $\Sigma_+\preceq\Sigma$ satisfying
\begin{equation}\label{eq:reusable-precision}
\tr(\Sigma_+^{-1})\le\tr(\Sigma^{-1})+\frac{8m\sqrt k}{\sigma^2}+\sum_{s\text{ in this block}}\|x_s\|_2^2.
\end{equation}
\end{lemma}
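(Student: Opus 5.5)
We would prove Lemma~\ref{lem:blocks} by building the product construction sketched in Section~\ref{sec:outline} inside a well-conditioned subspace of the current posterior. Then we check three things in order: menu invisibility, regret, and precision accounting.

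\emph{Step 1: uncertain directions.} Since $\Sigma\preceq\sigma^2 I_d$ and $\tr(\Sigma^{-1})<2d/\sigma^2$, Markov's inequality on the eigenvalues of $\Sigma^{-1}$ shows that more than $d/2$ eigenvalues satisfy $\lambda_j(\Sigma^{-1})<4/\sigma^2$. This gives at least $d/2\ge 8mk$ orthonormal directions along which $\sigma^2/4\le$ posterior variance $\le\sigma^2$. We split $mk$ of them into $m$ groups $U_1,\dots,U_m$, each of dimension $D_0=k$ (or a constant multiple). Set $S_\ell=U_\ell^\top\Sigma U_\ell$ and $z_\ell=S_\ell^{-1/2}U_\ell^\top(\theta-h)$. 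We cannot take the $z_\ell$ independent across groups in general, so we would either whiten jointly on the $mk$-dimensional block $U^\top\Sigma U$ or choose eigenvectors of $\Sigma$ directly. We choose eigenvectors, so that the $z_\ell$ are i.i.d.\ $\N(0,I_k)$.

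\emph{Step 2: the menu.} In each group, draw $J_\ell\sim\mathrm{Unif}([k])$ and seeds $G_{\ell,i}\sim\N(0,I_k)$. Set $g_{\ell,i}=G_{\ell,i}$ for $i\ne J_\ell$ and $g_{\ell,J_\ell}=\rho z_\ell+\sqrt{1-\rho^2}G_{\ell,J_\ell}$, with $\rho^2\asymp 1/\sqrt k$. Map these to $x_{\ell,i}=c\,U_\ell S_\ell^{1/2}g_{\ell,i}/(\sigma\sqrt k)$, with a normalization that makes $\|x_{\ell,i}\|\le1$ with high probability; on the rare failure event, truncate or use a fallback. Define the actions $x_{(i_1,\dots,i_m)}=m^{-1/2}\sum_\ell x_{\ell,i_\ell}$, and pad or duplicate to exactly $K$ distinct actions. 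Because the law of the menu is independent of $J$, the menu reveals nothing about $J$. Conditioning on the unnormalized vectors together with $J$ is a Gaussian observation of $z_\ell$ through $\rho z_\ell$ plus noise of variance $1-\rho^2$. The added precision in the $\theta$ coordinates is $\rho^2/(1-\rho^2)$ times $S_\ell^{-1}\preceq(4/\sigma^2)I$, over $mk$ directions. This is $\lesssim mk\rho^2/\sigma^2\le 8m\sqrt k/\sigma^2$ once the constant in $\rho$ is fixed. The rewards are linear Gaussian observations, which add $x_sx_s^\top$ to the precision, so their trace contribution is at most $\sum_s\|x_s\|^2$. Together these give \eqref{eq:reusable-precision} and Gaussianity of the new posterior.

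\emph{Step 3: regret.} Conditionally on the menu, the posterior mean gap per group is of order $\sigma\rho\sqrt k/\sqrt{k}\cdot\sqrt{k}$ after normalization. Concretely, $\EE[x_{\ell,J_\ell}^\top\theta-x_{\ell,i}^\top\theta]\asymp\sigma\rho\sqrt k\cdot(\text{scale})\asymp\sigma k^{1/4}$, and the $1/\sqrt m$ scaling gives a per-group gap $\Delta_0\asymp\sigma k^{1/4}/\sqrt m$. The regret of a chosen tuple is at least $\sum_\ell\Delta_0\1\{i_\ell\ne J_\ell\}$ minus fluctuations. These fluctuations are controlled by Gaussian concentration with $k\ge2^{40}$ and by an argument that the hidden tuple is near-optimal. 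It therefore suffices to show that, for each $\ell$, $\sum_t\PP(A_{t,\ell}=J_\ell)\le n/2$.

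For this we use the reference experiment $\PP_0$, in which the planted shift in group $\ell$ is removed. By Pinsker's inequality and the chain rule, $\frac1k\sum_j\PP_j(A_{t,\ell}=j)\le\frac1k+\sqrt{\frac{1}{2k}\sum_j\KL(\PP_0\|\PP_j)}$. The average KL over $j$ for one reward is $\EE_j[((U^\top x_i)^\top\mu_j)^2]/2\lesssim\Delta_0^2/k$, because only the group-$\ell$ coordinate $j$ of the played tuple carries the shift and $j$ is uniform. Over $n$ rounds the average KL is $\lesssim n\sigma^2/(m\sqrt k)\le1/64$, which yields the bound and hence $\EE[R_{\mathrm{block}}\mid\cH]\ge n m\Delta_0/4\gtrsim n\sigma\sqrt m\,k^{1/4}$.

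The main obstacle is this KL step. The reward means also depend on the unshifted posterior fluctuation of $\theta$ (residual uncertainty), which must be handled by marginalizing over the Gaussian posterior. One must verify that the mean shift under hidden index $j$ is exactly $(U^\top x_i)^\top\mu_j$ with the same residual covariance in both experiments, so that the KL is a plain Gaussian mean-shift computation. Keeping every numeric constant compatible with $1/256$ and $1/64$ requires care.
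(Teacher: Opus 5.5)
Your architecture is the paper's. It uses $m$ groups in disjoint high-variance eigen-blocks of $\Sigma$, and a correlated Gaussian channel that makes the raw menu independent of the labels. It has the precision update $\Sigma_+^{-1}=\Sigma^{-1}+\frac{\rho^2}{1-\rho^2}\sum_\ell U_\ell S_\ell^{-1}U_\ell^\top+\sum_s x_sx_s^\top$, and a reference law that removes the group-$\ell$ shift but keeps the residual. The residual is not the real obstacle: condition on it, apply the Gaussian KL chain rule, and drop it by data processing.

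\textbf{The KL step.} This is the real obstacle, and your justification of it is wrong. You claim the $j$-average of $\bigl((U_\ell^\top x)^\top\mu_{\ell,j}\bigr)^2$ is $O(\Delta_0^2/k)$ ``because only coordinate $j$ carries the shift''. In fact every hypothesis $j$ shifts the mean of every action, through the cross terms $q_{\ell,i}^\top S_\ell^{1/2}q_{\ell,j}$. Each cross term is smaller than the own term only by a factor of order $D_0^{-1/2}$. So the $k-1$ of them contribute about $(k/D_0)\Delta^2$ in total, the same order as the main term when $D_0\asymp k$. A union bound over individual cross terms loses a $\log k$ factor.

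What you need is a bound $\sum_j\bigl(v_\ell(a)^\top\mu_{\ell,j}\bigr)^2\le 18\Delta^2$, where $v_\ell(a)$ is the group-$\ell$ component of $a$. It must hold uniformly over every available action $a$, because the played action is adaptive and the $j$-average has to sit inside $\EE_0$. The paper gets this from $\sum_j(q_i^\top S^{1/2}q_j)^2=\|Q^\top S^{1/2}q_i\|_2^2\le\|Q\|_{\op}^2\sigma^2\|q_i\|_2^2$. This holds on a regular-menu event with $\|Q_\ell\|_{\op}\le 3/4$. That event has probability at least $3/4$ by a net argument using group dimension $D\ge 4k$. It also supplies the norm and cross-term bounds that your gap argument implicitly relies on.

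\textbf{The known part of the reward.} You never remove it. With your actions, $x_{\ell,i}^\top\theta$ contains the term $\frac{c}{\sigma\sqrt k}\,g_{\ell,i}^\top S_\ell^{1/2}U_\ell^\top h$. The learner can compute this from $\cH$ and the revealed menu, and it does not depend on $J_\ell$. The lemma allows arbitrary $h$. Once $\|U_\ell^\top h\|_2\gg\sigma k^{3/4}$, this term spreads the candidates far more than the planted gap $\asymp\sigma k^{1/4}$, and a learner who maximizes it pays little regret.

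The paper imposes $U_\ell^\top h=0$. It takes disjoint $\Sigma$-invariant eigen-blocks $\mathcal V_\ell$ of dimension $D+1$ and chooses $U_\ell\subseteq\mathcal V_\ell\cap h^\perp$. Invariance still gives $U_\ell^\top\Sigma U_r=0$, so cross-group independence survives. Your rule of taking eigenvectors directly cannot enforce orthogonality to $h$.

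\textbf{Padding.} ``Pad or duplicate'' does not work. Duplicates violate distinctness. Arbitrary extra actions may beat the hidden tuple, or be informative about all $k$ hypotheses at once. The paper pads on a segment between two principal actions, so the shift-energy bound extends by convexity. It replaces $\1_{\{A_{t,\ell}=j\}}$ by fractional weights $w_{\ell,j}$. It also bounds the regret group by group, using nonnegativity of each group's regret contribution and linearity of expectation, so no simultaneous good event over all $m$ groups is needed.
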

For $m=1$ the block contains one hidden choice among $k$ actions. For general $m$, it combines $m$
such choices into $k^m$ product actions. The factor $\sqrt m$ in \eqref{eq:reusable-regret} comes from this product construction.
The proof is in Appendix~\ref{app:lower}.

\begin{lemma}[Gaussian truncation]\label{lem:truncation}
Let $T\ge d^2$, $\sigma^2=d/(4096T)$, and $\theta\sim\N(0,\sigma^2I_d)$. Then,
\begin{equation}\label{eq:tail-regret}
\EE[R_T\1_{\{\|\theta\|_2>1\}}]\le\frac{T\sigma}{1024}.
\end{equation}
\end{lemma}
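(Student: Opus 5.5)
The plan is to bound the regret pointwise and then use a Gaussian tail estimate. Every action lies in the unit ball. We only need $\|\theta\|_2\le1$ to be false on the event in question, so we work with a general $\theta$. Each per-round gap satisfies
\[
\mu_{t,i_t^\star}-\mu_{t,A_t}=(x_{t,i_t^\star}-x_{t,A_t})^\top\theta\le 2\|\theta\|_2 .
\]
Hence $R_T\le 2T\|\theta\|_2$ deterministically, whatever the learner and the menu mechanism do. This gives
\[
\EE[R_T\1_{\{\|\theta\|_2>1\}}]\le 2T\,\EE\bigl[\|\theta\|_2\1_{\{\|\theta\|_2>1\}}\bigr],
\]
so it suffices to show $\EE[\|\theta\|_2\1_{\{\|\theta\|_2>1\}}]\le\sigma/2048$.

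Write $\theta=\sigma g$ with $g\sim\N(0,I_d)$. By Cauchy--Schwarz,
\[
\EE[\|\theta\|_2\1_{\{\|\theta\|_2>1\}}]\le \sigma\sqrt{\EE\|g\|_2^2}\,\sqrt{\PP(\|g\|_2>1/\sigma)}=\sigma\sqrt d\,\sqrt{\PP(\|g\|_2>1/\sigma)}.
\]
Since $T\ge d^2$, we have $1/\sigma^2=4096T/d\ge 4096d$, so $1/\sigma\ge 64\sqrt d$. Gaussian concentration of the norm, $\PP(\|g\|_2\ge\sqrt d+u)\le e^{-u^2/2}$, applied with $u\ge 63\sqrt d$, gives $\PP\le e^{-(63)^2d/2}$. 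We then need $\sqrt d\,e^{-3969d/4}\le 1/2048$. This holds for every $d\ge1$, because $e^{-992}$ is astronomically small and $\sqrt d\,e^{-992 d}$ is decreasing. A chi-square Chernoff bound $\PP(\|g\|_2^2\ge s)\le (s/d)^{d/2}e^{-(s-d)/2}$ would work equally well.

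There is no real obstacle here. The only care needed is to use the deterministic bound $R_T\le2T\|\theta\|_2$ rather than $2T$, so the tail integral picks up the factor $\sigma$. With $\sigma\sqrt d\cdot e^{-cd}$ the required constant $1/1024$ has enormous slack. One can also avoid Cauchy--Schwarz and integrate the tail directly:
\[
\EE[\|g\|\1_{\{\|g\|>r\}}]=r\PP(\|g\|>r)+\int_r^\infty\PP(\|g\|>u)\,du .
\]
Both terms are at most $e^{-(r-\sqrt d)^2/2}(r+1)$, and with $r\ge 64\sqrt d$ this is again tiny.
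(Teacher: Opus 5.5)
Your proof is correct and follows essentially the same route as the paper: the pointwise bound $R_T\le 2T\|\theta\|_2$, Cauchy--Schwarz with $\EE\|g\|_2^2=d$, and a Gaussian tail estimate at radius $1/\sigma\ge 64\sqrt d$. The only difference is the tail inequality used. The paper's factor $\sqrt d\,2^{d/4}e^{-1/(8\sigma^2)}$ comes from the chi-square moment generating function, whereas you use Lipschitz concentration of $\|g\|_2$; either gives an exponentially small bound far below $2^{-11}$.
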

The Gaussian prior may place a small amount of mass outside the unit ball. Lemma~\ref{lem:truncation} shows
that the regret contributed by this event is negligible. We may therefore condition the prior on
$\{\|\theta\|_2\le1\}$ and lose only the small term in \eqref{eq:tail-regret}. The details of the proof are in Appendix~\ref{app:truncation}.

\begin{lemma}\label{lem:product}
If $k\ge2^{40}$, $16mk\le d$, $k^m\le K$, and $T\ge d^2$, then
\begin{equation}\label{eq:product-lower}
\MinReg\ge2^{-16}\sqrt m\,k^{1/4}\sqrt{dT}.
\end{equation}
\end{lemma}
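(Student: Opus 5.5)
Lemma~\ref{lem:product} follows from a Bayesian argument: a Gaussian prior, a sequence of blocks built with Lemma~\ref{lem:blocks}, and Lemma~\ref{lem:truncation} to handle prior mass outside $B_2^d$. Since the sup over $\theta$ dominates any average over $\theta$, it suffices to give a prior on $B_2^d$ and a menu mechanism under which every learner has large Bayes regret. Take $\sigma^2=d/(4096T)$ and $\theta\sim\N(0,\sigma^2I_d)$, so $\Sigma_1=\sigma^2 I_d$ and $\tr(\Sigma_1^{-1})=d/\sigma^2$. Split the horizon into $B$ consecutive blocks of common length $n$, the largest integer with $n\sigma^2/(m\sqrt k)\le1/64$, i.e.\ $n\asymp m\sqrt k/\sigma^2\asymp m\sqrt k\,T/d$. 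Since $16mk\le d$, we have $n\le T$. Any leftover rounds get an arbitrary menu and contribute nonnegative regret.

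The induction is over blocks in the auxiliary experiment. In block $b$ we apply Lemma~\ref{lem:blocks} conditionally on the augmented history $\cH_b$. Its hypotheses are $\Sigma_b\preceq\sigma^2I_d$, which holds by monotonicity of posteriors, and $\tr(\Sigma_b^{-1})<2d/\sigma^2$. By \eqref{eq:reusable-precision}, summed over blocks,
\[
\tr(\Sigma_b^{-1})\le \frac{d}{\sigma^2}+\frac{8(b-1)m\sqrt k}{\sigma^2}+\sum_{s<t_b}\|x_s\|_2^2\le \frac{d}{\sigma^2}+\frac{8Bm\sqrt k}{\sigma^2}+T.
\]
Because $T=d/(4096\sigma^2)$, the last term is tiny. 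So the budget holds if $8Bm\sqrt k\le d/2$. Choose $B=\lfloor d/(16m\sqrt k)\rfloor$, which is at least $1$ since $16mk\le d$. The $B$ blocks then cover $Bn\asymp (d/(m\sqrt k))\cdot(m\sqrt k\,T/d)\asymp T$ rounds, and the constants can be arranged so that $Bn\le T$ while $Bn\ge cT$. Each block gives $\EE[R_{\mathrm{block}}\mid\cH_b]\ge n\sigma\sqrt m\,k^{1/4}/256$. Taking expectations and summing yields
\[
\EE R_T\ge Bn\,\sigma\sqrt m\,k^{1/4}/256\gtrsim T\sigma\sqrt m\,k^{1/4}\asymp \sqrt m\,k^{1/4}\sqrt{dT}.
\]
The explicit constant $2^{-16}$ comes from tracking $B\ge d/(32m\sqrt k)$, $n\ge m\sqrt k/(128\sigma^2)$, and $\sigma=\sqrt d/(64\sqrt T)$.

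Two reductions connect this to $\MinReg$. First, the auxiliary experiment only gives the learner extra information (unnormalized menus, delayed hidden labels), so any learner for the original model is a learner there, and lower bounds transfer. I would invoke Remark~\ref{rem:disclosures} for this, after checking that the menus it reveals are functions of $\theta$, the history and internal randomness, so that the mechanism lies in $\mathcal Q$. Second, the prior is not supported on $B_2^d$. Write $\EE R_T=\EE[R_T\1_{\{\|\theta\|_2\le1\}}]+\EE[R_T\1_{\{\|\theta\|_2>1\}}]$ and bound the second term by $T\sigma/1024$ using Lemma~\ref{lem:truncation}. This is smaller than the main term $\gtrsim T\sigma\sqrt m\,k^{1/4}/2^{c}$ because $k\ge2^{40}$ makes $k^{1/4}\ge2^{10}$. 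The remainder is at most $\PP(\|\theta\|\le1)\cdot\sup_{\theta\in B_2^d}\sup_Q\EE_{\theta,Q,\pi}R_T$, which gives the bound for every $\pi$.

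The main obstacle is the constant bookkeeping. We must simultaneously ensure $Bn\gtrsim T$, respect the budget $8Bm\sqrt k+T\sigma^2<d$, and still beat the truncation term $T\sigma/1024$ with the $1/256$ from Lemma~\ref{lem:blocks}. The likely tight spot is that $B\ge1$ requires $d\ge16m\sqrt k$, with the rounding in $B$ and $n$ costing at most factors of $2$. I would fix $n=\lfloor m\sqrt k/(64\sigma^2)\rfloor$ and $B=\min\{\lfloor d/(16m\sqrt k)\rfloor,\lfloor T/n\rfloor\}$ first and verify each inequality with explicit powers of two.
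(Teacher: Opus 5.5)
Your proposal is correct and follows essentially the same route as the paper. That route is the Gaussian prior with $\sigma^2=d/(4096T)$, repeated use of Lemma~\ref{lem:blocks} under the trace budget \eqref{eq:reusable-precision}, the bound $\EE R_T\ge T\sigma\sqrt m\,k^{1/4}/512$ from $Bn\ge T/2$, subtraction of the tail in Lemma~\ref{lem:truncation}, and Remark~\ref{rem:disclosures}. The only difference is bookkeeping: the paper fixes $B=\lfloor d/(32m\sqrt k)\rfloor$ and then $n=\lfloor T/B\rfloor$, while you fix $n=\lfloor 64m\sqrt k\,T/d\rfloor$ first. With your minimum, the cap $\lfloor T/n\rfloor\approx d/(64m\sqrt k)$ is always the binding term, which already gives $8Bm\sqrt k\le d/4$. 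Consequently the constant $2^{-16}$ comes from $Bn\ge T-n\ge T/2$, not from the claimed $B\ge d/(32m\sqrt k)$, which actually fails for this choice.
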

\begin{lemma}[Oblivious baseline; cf.\ \citet{chu2011}]\label{lem:baseline}
For $d,K\ge2$ and $T\ge d$, there is an oblivious menu
construction with $K$ distinct actions per round such that
\begin{equation}\label{eq:baseline-lower}
\MinReg\ge c_0\sqrt{dT}
\end{equation}
for a universal constant $c_0>0$.
\end{lemma}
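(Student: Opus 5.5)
We prove Lemma~\ref{lem:baseline} by the standard oblivious construction of \citet{chu2011}: a Bayesian argument over a hypercube prior, with the horizon split into $d$ independent coordinate subproblems, each a two-armed bandit. Throughout we assume $d\ge2$ and $T\ge d$.

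\textbf{Construction.} Fix $\epsilon=c\sqrt{d/T}\le c$ and draw $\theta=\epsilon d^{-1/2}\sigma$ with $\sigma\sim\operatorname{Unif}(\{-1,1\}^d)$. Then $\|\theta\|_2=\epsilon\le1$, so $\theta\in B_2^d$. Partition $[T]$ into $d$ consecutive segments $I_1,\ldots,I_d$ of length $n=\lfloor T/d\rfloor\ge1$, and leave the remaining rounds as rounds with $K$ identical-reward dummy actions. In segment $j$, the menu is
\[
\mathcal A_t=\Bigl(e_j,\,-e_j,\,\tfrac{1}{2}v_3,\ldots,\tfrac{1}{2}v_K\Bigr).
\]
Here the vectors $v_i$ are distinct unit vectors orthogonal to $e_j$ (for instance, small rotations inside $\operatorname{span}\{e_{j'}:j'\ne j\}$), scaled so that they are far worse than the best arm in expectation. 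Simpler still, take $\tfrac{1}{2^i}e_j$ for $i\ge3$: these are distinct vectors with means $\pm\epsilon d^{-1/2}2^{-i}$, so each has gap at least $\tfrac{1}{2}\epsilon d^{-1/2}$ from the best of $\pm e_j$. Every menu is fixed in advance, so the construction is oblivious.

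\textbf{Key steps.} In segment $j$, the best action is $\operatorname{sign}(\sigma_j)e_j$. Any action other than the best one incurs regret at least $\tfrac{1}{2}\epsilon/\sqrt d$. This holds for the arm with the wrong sign (gap $2\epsilon/\sqrt d$) and for the scaled arms (gap at least $\epsilon/(2\sqrt d)$).

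To exploit independence, condition on $\sigma_{-j}$ and reveal to the learner all rewards from other segments; this extra information only helps the learner. Rewards in other segments depend only on $\sigma_{-j}$ and the scaled arms, so they carry no information about $\sigma_j$. The remaining task is to guess $\sigma_j$ from at most $n$ Gaussian observations with mean $\pm\epsilon/\sqrt d$ times a coefficient of absolute value at most $1$.

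By Pinsker/Le Cam, the KL divergence between the two hypotheses $\sigma_j=\pm1$ over the segment is at most $\tfrac12 n(2\epsilon/\sqrt d)^2=2n\epsilon^2/d\le2c^2/d^2\cdot d\cdots$. Using $n\le T/d$ and $\epsilon^2=c^2d/T$, this gives KL at most $2c^2/d\le 2c^2$, which is small for small $c$. Hence the total-variation distance is at most $1/2$. It follows that, for each round $t$ in segment $j$, the average over $\sigma_j$ of $\PP(\text{learner does not play the best arm})$ is at least $1/4$. The comparison is made round by round against the mixture, using the usual two-point argument on the law of $A_t$ given history.

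\textbf{Conclusion and main obstacle.} Each round therefore incurs expected regret at least $\tfrac14\cdot\tfrac{\epsilon}{2\sqrt d}$. Summing over the $dn\ge T/2$ segment rounds gives
\[
\EE R_T\ge\frac{T}{2}\cdot\frac{c}{8\sqrt T}=c_0\sqrt{dT}.
\]
Since this holds on average over the prior, some $\theta\in B_2^d$ attains it, so the supremum in \eqref{eq:minimax} is at least $c_0\sqrt{dT}$.

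The main point of care is the per-round two-point bound, where the learner's choice may depend on earlier rewards in the segment. We handle it via the divergence decomposition $\KL(\PP_{+}^{\le t},\PP_{-}^{\le t})\le\sum_{s\le t}\EE\bigl[(x_s^\top(\theta_+-\theta_-))^2\bigr]/2$ together with $|x_s^\top e_j|\le1$. This keeps the information bound valid for every learner.
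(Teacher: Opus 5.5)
There is a genuine gap: a factor $\sqrt d$ is lost, and the last line hides it. With your scaling each coordinate of $\theta$ has size $\epsilon/\sqrt d=c/\sqrt T$. Your per-round bound is therefore $\frac14\cdot\frac{\epsilon}{2\sqrt d}=\frac{c}{8\sqrt T}$, and summing over $T/2$ rounds gives $\frac T2\cdot\frac{c}{8\sqrt T}=\frac{c}{16}\sqrt T$, not $c_0\sqrt{dT}$. Your KL bound $2c^2/d\to0$ already signals this: each segment sits far below the identifiability threshold, so the signal is too weak. To get $\sqrt{dT}$ you need $n\Delta^2\asymp1$ with $n\asymp T/d$. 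That means a per-coordinate gap $\Delta\asymp\sqrt{d/T}$, i.e.\ $\theta_j=\pm c\sqrt{d/T}$. Then each segment costs $\asymp n\Delta=\sqrt{T/d}$, for $\asymp\sqrt{dT}$ in total. But now $\|\theta\|_2=\sqrt d\,\Delta=c\,d/\sqrt T$, which lies in $B_2^d$ only when $T\gtrsim d^2$.

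With that rescaling your two-point argument is fine and proves the bound for $T\ge d^2$. This is the only regime in which the lemma is used: the proof of Theorem~\ref{thm:lower} assumes $T\ge d^2$. It cannot be pushed down to $T\ge d$. Suppose you use $d'\le d$ active coordinates, subject to $d'\Delta^2\le1$ and $(T/d')\Delta^2\lesssim1$. The best choice is $d'\asymp\sqrt T$, which yields only order $T^{3/4}<\sqrt{dT}$ when $T<d^2$. In fact no construction can succeed there for bounded $K$. SquareCB run with an online-gradient-descent square-loss oracle over the unit ball has dimension-free regret $\widetilde O(\sqrt K\,T^{3/4})$, even for adaptive menus. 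At $K=2$, $T=d$ this is $o(T)=o(\sqrt{dT})$. So the statement itself needs $T\gtrsim d^2$ (for instance $T\ge d^2$ with $c\le1$), or a weaker conclusion such as $c_0\min\{\sqrt{dT},T^{3/4}\}$ on $T\ge d$.

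Two smaller points. First, drop the first padding option $\tfrac12 v_i\perp e_j$. Its mean $\tfrac12 v_i^\top\theta$ depends on $\sigma_{-j}$ and can reach about $\epsilon/2\gg\epsilon/\sqrt d$. That breaks both the gap claim and the independence across segments. The collinear padding $2^{-i}e_j$ is the right one. Second, leftover rounds with ``identical-reward dummy actions'' conflict with the distinctness requirement. Simply repeat the last segment's menu there, since extra regret is nonnegative.
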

\subsection{Proof of Theorem~\ref{thm:lower}}
\begin{proof}
\textbf{Small menus.}
Suppose $2\le K\le d$.
If $K\ge2^{45}$, take $m=1$ and
$k=\min\{K,\lfloor d/16\rfloor\}$.
Then $k\ge K/32\ge2^{40}$, $16k\le d$, and $k\le K$.
Lemma~\ref{lem:product} gives
\[
\MinReg
\ge
2^{-16}k^{1/4}\sqrt{dT}
\ge
2^{-18}K^{1/4}\sqrt{dT}.
\]
If $K<2^{45}$, Lemma~\ref{lem:baseline} gives
$\MinReg\ge c_0\sqrt{dT}$ for a universal constant $c_0>0$.
Since $K^{1/4}\le2^{45/4}$, this also gives
$\MinReg\ge cK^{1/4}\sqrt{dT}$ for a universal constant $c>0$.

\textbf{Large menus.}
Suppose first that $K\ge d\ge2^{45}$ and choose
\begin{equation}\label{eq:product-choice}
m=
\left\lfloor
\min\left\{
\frac{\log K}{\log d},
\frac{d}{2^{45}}
\right\}
\right\rfloor,
\qquad
k=
\left\lfloor
\frac{d}{16m}
\right\rfloor.
\end{equation}
Then $m\ge1$, $k\ge2^{40}$, $16mk\le d$, and
$k^m\le d^m\le K$.
Also $k\ge d/(32m)$, and $m$ is at least half the minimum in
\eqref{eq:product-choice}.
Lemma~\ref{lem:product} yields
\begin{equation}\label{eq:large-lower-combination}
\MinReg
\ge
2^{-16}\sqrt{dT}
\left(\frac{dm}{32}\right)^{1/4}
\ge
2^{-30}\sqrt{dT}
\min\left\{
\sqrt d,
\left(
\frac{d\log K}{\log(2d)}
\right)^{1/4}
\right\}.
\end{equation}

If $2\le d<2^{45}$, Lemma~\ref{lem:baseline} gives
$\MinReg\ge c_0\sqrt{dT}$.
Since the minimum in \eqref{eq:large-lower-combination} is at most
$\sqrt d\le2^{45/2}$, the same lower bound holds with a smaller universal
constant $c>0$.

Decreasing $c$ if necessary proves both claims.
\end{proof}

\begin{remark}[About the auxiliary disclosures]\label{rem:disclosures}
The auxiliary disclosures are used only in the analysis and can be removed
from the actual interaction. Given any learner for the original experiment, define a learner for the auxiliary experiment that simply ignores the extra disclosures.
 This is a valid learner in the auxiliary experiment, so the
lower bound still applies to it.

For each fixed $\theta$, the adversary in the original model can generate
the same auxiliary variables and keep them in its private state. Using these
variables together with the observed history, it generates the same menus as
in the auxiliary construction. This mechanism uses neither future reward
noise nor the learner's current random choice, and is therefore
nonanticipating. Hence the learner sees the same distribution of menus,
actions, and rewards in the two experiments and has the same expected
regret.

Finally, after conditioning the prior on $\{\|\theta\|_2\le1\}$, its support
lies in $B_2^d$. Its average regret is therefore no larger than the supremum
over fixed $\theta\in B_2^d$ and valid menu mechanisms in
\eqref{eq:minimax}. Thus the lower bound also holds in the original model.
\end{remark}

\section{Proofs of the Lower Bound Lemmas}\label{app:lower}
We repeatedly use the following standard probability and information-theoretic
facts. If $Z\sim\N(0,I_r)$, then
$ {\EE e^{\lambda\|Z\|_2^2}}=(1-2\lambda)^{-r/2}$ for $\lambda<1/2$, and hence
\begin{equation}\label{eq:chisq-tails}
 {\PP\{\|Z\|_2^2\notin[r/2,2r]\}}\le2e^{-r/16},
\qquad
 {\PP\{\|Z\|_2^2\ge4r\}}\le e^{-(3/2-\log2)r}.
\end{equation}
For a centered Gaussian random variable of variance at most $v$,
$ {\PP\{|Z|>u\}}\le2e^{-u^2/(2v)}$.

For two probability measures $P$ and $Q$, write
$\TV(P,Q):=\sup_A|P(A)-Q(A)|$ for their total variation distance, and
$ {\KL(P\|Q)}:=\EE_P[\log(dP/dQ)]$ for their Kullback--Leibler divergence
 {when $P$ is absolutely continuous with respect to $Q$, and set
$\KL(P\|Q)=+\infty$ otherwise.}
Pinsker's inequality gives
$\TV(P,Q)\le\sqrt{ {\KL(P\|Q)}/2}$.
We also use that, for any random variable $W\in[0,n]$,
$|\EE_PW-\EE_QW|\le n \TV(P,Q)$.

\subsection{Proof of Lemma~\ref{lem:blocks}}
\label{app:lower-lemma4}
 We first consider the case $m=1$. Fix a block and the history $\cH$ satisfying
\eqref{eq:available-uncertainty}. Set
$D=\lfloor d/2\rfloor-1$, $\rho^2=\sqrt k/D$, and
$\Delta=\sigma k^{1/4}/4$. Then $D\ge4k$, $\rho^2\le1/2$, and
$\Delta^2=\sigma^2\rho^2D/16$.
  The trace condition  in \eqref{eq:available-uncertainty}  implies that at least $\lceil d/2\rceil$ eigenvalues of
$\Sigma$ are at least $\sigma^2/4$. Let $\mathcal E$ be the span of the
corresponding eigenvectors. Since intersecting with $h^\perp$ loses at most
one dimension, we may choose a $D$-dimensional subspace
$\mathcal U\subseteq \mathcal E\cap h^\perp$, where
$D=\lfloor d/2\rfloor-1$. Let $U\in\RR^{d\times D}$ contain an orthonormal
basis of $\mathcal U$. Then
 {$U^\top h=0, \frac{\sigma^2}{4}I_D \preceq U^\top\Sigma U \preceq \sigma^2 I_D$.}
To make the choice of $U$ well defined and measurable, we fix the following
deterministic rule. Order the eigenvalues of $\Sigma$, and within each
eigenspace apply Gram--Schmidt to the projections of the coordinate vectors
$e_1,\ldots,e_d$, skipping zero projections and fixing the sign of each
resulting vector by its first nonzero coordinate. After intersecting the
selected high-variance subspace with $h^\perp$, apply the same rule again to
obtain an orthonormal basis $U$. This makes $U$ a Borel-measurable function of
$(h,\Sigma)$, and hence of the block-start history only; in particular, the
construction uses no future information.
 {Set $S:=U^\top\Sigma U$.}
Draw $J$ uniformly from $[k]$ and independent
$G_1,\ldots,G_k\sim\N(0,I_D)$. Define
\begin{equation}\label{eq:menu-channel}
q_i=
\begin{cases}
\dfrac1{4\sqrt D}
\left(
\rho S^{-1/2}U^\top(\theta-h)
+\sqrt{1-\rho^2}\,G_i
\right),&i=J,\\[2mm]
\dfrac1{4\sqrt D}G_i,&i\ne J.
\end{cases}
\end{equation}
Let $Q=(q_1,\ldots,q_k)$ and define the actions
\begin{equation}\label{eq:normalized-actions}
    x_i:=\frac{Uq_i}{\max\{1,\|q_i\|_2\}}.
\end{equation}
The same menu is used in every round of the block.

To keep the posterior Gaussian from one block to the next, we analyze a
stronger auxiliary experiment in which the learner receives additional
information.  {The unnormalized menu vectors $Q=(q_1,\ldots,q_k)$, but not the seeds
$G_1,\ldots,G_k$, are revealed at the beginning of the block}, while the hidden index $J$ is revealed
only after the block ends, so it cannot help the learner identify the good
action within the block. These disclosures preserve the Gaussian form of the
posterior at block boundaries and only make the learner stronger.
 {Remark~\ref{rem:disclosures} explains how to remove them.}

Since $S^{-1/2}U^\top(\theta-h)\sim\N(0,I_D)$, conditional on any fixed
$J=j$, the hidden column $q_j$ has the same distribution
$\N(0,I_D/(16D))$ as every non-hidden column. Moreover, the columns are
independent. Hence, for every $j\in[k]$,
\begin{equation}\label{eq:Q-marginal}
\mathcal L(Q\mid\cH,J=j)
=
\bigotimes_{i=1}^k \N(0,I_D/(16D)),
\qquad
\PP(J=j\mid\cH,Q)=\frac1k.
\end{equation}
In particular, observing $Q$ does not reveal which index is hidden.

To compute the conditional distribution of the parameter, note that,
conditional on $J=j$,
$4\sqrt D\,q_j=\rho S^{-1/2}U^\top(\theta-h)
+\sqrt{1-\rho^2}\,G_j$.
Thus $S^{-1/2}U^\top(\theta-h)$ and $4\sqrt D\,q_j$ are jointly Gaussian,
with
${\rm{Cov}}(S^{-1/2}U^\top(\theta-h))=I_D$,
${\rm{Cov}}(4\sqrt D\,q_j)=I_D$, and
${\rm{Cov}}(S^{-1/2}U^\top(\theta-h),4\sqrt D\,q_j)=\rho I_D$.
The Gaussian conditioning formula therefore gives
$S^{-1/2}U^\top(\theta-h)\mid Q,J=j
\sim \N(4\rho\sqrt D\,q_j,(1-\rho^2)I_D)$.
Conditioning on the remaining columns of $Q$ does not change this
distribution because, given $J=j$, they are independent of $(\theta,q_j)$.
Finally, since $U^\top h=0$, multiplying by $S^{1/2}$ yields
\begin{equation}\label{eq:conditional-projection}
U^\top\theta \mid Q,J=j
\sim
\N(\mu_j,(1-\rho^2)S),
\qquad \text{where}\
\mu_j=4\rho\sqrt D\,S^{1/2}q_j.
\end{equation}

Conditional on the realized within-block history, the learner's action-selection probabilities
are fixed functions of that history and therefore contribute no additional term to the posterior
likelihood. Thus each observed reward contributes $x_sx_s^\top$ to the posterior precision. Together with
the information revealed by the menu, this gives
\begin{equation}\label{eq:posterior-next-full}
\Sigma_+^{-1}
=
\Sigma^{-1}
+\frac{\rho^2}{1-\rho^2}US^{-1}U^\top
+\sum_{s=1}^n x_sx_s^\top.
\end{equation}
 {Taking traces in \eqref{eq:posterior-next-full} and using
$S\succeq(\sigma^2/4)I_D$ and $\rho^2D=\sqrt k$ gives}
\begin{equation}\label{eq:precision-cost}
\tr(\Sigma_+^{-1})-\tr(\Sigma^{-1})
\le
\frac{8\sqrt k}{\sigma^2}
+\sum_{s=1}^n\|x_s\|_2^2.
\end{equation}
We next show that the random menu is regular, in the sense that it satisfies
several useful geometric properties that simplify the later gap and
information calculations, with constant probability. Define
\begin{equation}\label{eq:good-menu-full}
\mathsf G
:=
\left\{
\frac1{32}\le \|q_i\|_2^2\le \frac18
\ \forall i\in[k],\;
\|Q\|_{\op}\le\frac34,\;
|q_i^\top S^{1/2}q_j|\le\frac{\sigma}{256}
\ \forall i\ne j
\right\}.
\end{equation}
We now verify that $\mathsf G$ occurs with constant probability. By
\eqref{eq:Q-marginal}, conditional on $\cH$, the columns
$q_1,\ldots,q_k$ are independent and each has distribution
$\N(0,I_D/(16D))$. We use this common Gaussian law to verify the three
conditions defining $\mathsf G$.
For every $i$, we have
$16D\|q_i\|_2^2\sim\chi_D^2$. Therefore, by
\eqref{eq:chisq-tails},
 {$\PP\left( \|q_i\|_2^2\notin[1/32,1/8] \mid\cH \right) \le 2e^{-D/16}$.}
A union bound over $i\in[k]$ gives a total contribution at most
$2ke^{-D/16}$.

Next consider $\|Q\|_{\op}$. Let $\mathcal N$ be a $1/4$-net of the unit
sphere in $\RR^k$ with $|\mathcal N|\le 9^k$. The standard net bound gives
$\|Q\|_{\op}\le (4/3)\max_{v\in\mathcal N}\|Qv\|_2$. Hence
$\|Q\|_{\op}>3/4$ implies $\|Qv\|_2>9/16$ for some
$v\in\mathcal N$. For every fixed unit vector $v$,
$Qv\sim\N(0,I_D/(16D))$, so
$4\sqrt D\,Qv\sim\N(0,I_D)$. Therefore
 {$\PP\left( \|Qv\|_2>\frac9{16} \mid\cH \right) \le e^{-(3/2-\log2)D}$,}
where we used the second inequality in \eqref{eq:chisq-tails}. A union bound
over $\mathcal N$ gives
 {$\PP\left( \|Q\|_{\op}>\frac34 \mid\cH \right) \le 9^k e^{-(3/2-\log2)D} \le e^{-D/10}$,}
where the last inequality follows from $D\ge4k$.

It remains to control the cross terms
$q_i^\top S^{1/2}q_j$ for $i\ne j$.  Fix $i\ne j$.
The vectors $q_i$ and $q_j$ are independent. Conditional on $q_i$,
$q_i^\top S^{1/2}q_j$ is therefore a centered Gaussian with variance
$q_i^\top S q_i/(16D)$. On the event
$\|q_i\|_2^2\le1/8$, since $S\preceq\sigma^2I_D$, this variance is at most
$\sigma^2/(128D)$. The scalar Gaussian tail bound therefore gives
 {$\PP\left( |q_i^\top S^{1/2}q_j|>\frac{\sigma}{256}, \ \|q_i\|_2^2\le\frac18 \mid\cH \right) \le 2e^{-D/1024}$.}
Taking a union bound over all ordered pairs $i\ne j$ gives a contribution at
most $2k^2e^{-D/1024}$.

Combining the three bounds,
\begin{equation}\label{eq:good-menu-prob}
\PP(\mathsf G^c\mid\cH)
\le
2ke^{-D/16}
+
e^{-D/10}
+
2k^2e^{-D/1024}
\le
\frac14.
\end{equation}
The last inequality uses $D\ge4k$ and $k\ge2^{40}$. On $\mathsf G$, $\|q_i\|_2<1$ for every $i$. Hence the normalization in
$x_i=Uq_i/\max\{1,\|q_i\|_2\}$ is inactive, so
$x_i=Uq_i$ and therefore $U^\top x_i=q_i$.

\paragraph{One reward reveals little about the hidden index.}
Fix $Q\in\mathsf G$. For each $j\in[k]$, let $P_j$ denote the law of the
within-block interaction conditional on $J=j$.  {By \eqref{eq:conditional-projection}, under $P_j$ we have}
$U^\top\theta=\mu_j+\zeta$, where
$\zeta\sim\N(0,(1-\rho^2)S)$ is drawn once and remains fixed throughout the
block. To measure how much the rewards reveal about the hidden index, define
a reference law $P_0$ using the same menu, the same learner policy, and the
same residual $\zeta$, but removing the planted shift $\mu_j$; that is, under
$P_0$ we set $U^\top\theta=\zeta$. Thus $P_j$ and $P_0$ differ only through
the shift $\mu_j$.
 {Divergences below concern the marginal interaction laws. Probabilities of
residual events refer to the same experiments with their latent residuals retained.}

 Conditional on $\zeta$ and the past, when
the learner selects action $x_{A_s}$, the reward distributions under $P_j$
and $P_0$ differ only in their means, by
$(U^\top x_{A_s})^\top\mu_j$. Applying the Gaussian KL, the KL chain
rule, and then data processing after removing $\zeta$ gives
\begin{equation}\label{eq:KL-latent}
\KL(P_0\|P_j)
\le
\frac12\EE_0\sum_{s=1}^n
\left((U^\top x_{A_s})^\top\mu_j\right)^2.
\end{equation}
The key geometric fact is that one selected action cannot be strongly aligned
with all $k$ possible hidden shifts. On $\mathsf G$, we have
$U^\top x_i=q_i$ and
$\mu_j=4\rho\sqrt D\,S^{1/2}q_j$. Therefore

\begin{equation}\label{eq:shift-energy}
\begin{aligned}
\sum_{j=1}^k\bigl((U^\top x_i)^\top\mu_j\bigr)^2
&=16\rho^2D\|Q^\top S^{1/2}q_i\|_2^2\\
&\le16\rho^2D\|Q\|_{\op}^2\|S^{1/2}\|_{\op}^2\|q_i\|_2^2\\
&\le\frac98\sigma^2\rho^2D\\
&=18\Delta^2.
\end{aligned}
\end{equation}
 where the last equality uses
$\Delta^2=\sigma^2\rho^2D/16$. Averaging \eqref{eq:KL-latent} over the uniform hidden index gives
\begin{equation}\label{eq:KL-average-full}
\frac1k\sum_{j=1}^k\KL(P_0\|P_j)
\le
\frac{9n\Delta^2}{k}.
\end{equation}
Since $\Delta^2=\sigma^2\sqrt k/16$, the assumption
$n\sigma^2/\sqrt k\le1/64$ implies
$n\Delta^2/k\le1/1024$.

Let $N_j$ be the number of pulls of action $j$ in the block.
Since $0\le N_j\le n$, the expectation--TV inequality gives
$\EE_j[N_j]\le\EE_0[N_j]+n\,\TV(P_0,P_j)$. Averaging over $j$, using
 {$\sum_jN_j\le n$, Pinsker's inequality, Cauchy--Schwarz, and
\eqref{eq:KL-average-full},}
\begin{align}
\frac1k\sum_{j=1}^k\EE_j [N_j]
&\le
\frac nk
+
\frac nk\sum_{j=1}^k\TV(P_0,P_j)\notag\\
&\le
\frac nk
+
n\sqrt{
\frac1{2k}
\sum_{j=1}^k\KL(P_0\|P_j)
}\notag\\
&\le
\frac{5n}{8}.
\label{eq:Nj-bound}
\end{align}

\paragraph{The hidden action has a definite gap.}
 {We show that the hidden action has a gap of at least $\Delta/8$.} Recall that $\mu_j=4\rho\sqrt D\,S^{1/2}q_j$. Since
$S\succeq(\sigma^2/4)I_D$, we have
$S^{1/2}\succeq(\sigma/2)I_D$. Hence, on $\mathsf G$,
 {$q_j^\top\mu_j = 4\rho\sqrt D\,q_j^\top S^{1/2}q_j \ge 4\rho\sqrt D\frac{\sigma}{2}\|q_j\|_2^2 \ge \frac{\sigma\rho\sqrt D}{16} = \frac{\Delta}{4}$.}
For $i\ne j$, the  condition in
\eqref{eq:good-menu-full}  gives
 {$|q_i^\top\mu_j| = 4\rho\sqrt D\,|q_i^\top S^{1/2}q_j| \le 4\rho\sqrt D\frac{\sigma}{256} = \frac{\Delta}{16}$.}
Therefore
\begin{equation}\label{eq:own-signal}
q_j^\top\mu_j\ge\frac{\Delta}{4},
\qquad
|q_i^\top\mu_j|\le\frac{\Delta}{16}
\quad(i\ne j).
\end{equation}
Now consider the residual
$\zeta\sim\N(0,(1-\rho^2)S)$. For every $i$,
 {$\operatorname{Var}(q_i^\top\zeta) = (1-\rho^2)q_i^\top S q_i \le \sigma^2\|q_i\|_2^2 \le \frac{\sigma^2}{8}$.}
Thus, by the  Gaussian tail bound and a union bound,

\begin{equation}\label{eq:residual-tail}
P_j\left(\max_{i\le k}|q_i^\top\zeta|>\frac\Delta{32}\right)
\le2k\exp\left(-\frac{\Delta^2}{256\sigma^2}\right)
=2k\exp\left(-\frac{\sqrt k}{4096}\right)\le\frac1{16}.
\end{equation}

Hence, with probability at least $15/16$,
$|q_i^\top\zeta|\le\Delta/32$ simultaneously for all $i$. On this event,
 {$x_j^\top\theta = q_j^\top(\mu_j+\zeta) \ge \frac{\Delta}{4}-\frac{\Delta}{32} = \frac{7\Delta}{32}$,}
while for every $i\ne j$,
 {$x_i^\top\theta = q_i^\top(\mu_j+\zeta) \le \frac{\Delta}{16}+\frac{\Delta}{32} = \frac{3\Delta}{32}$.}
Therefore action $j$ is optimal and its gap to every other principal action
is at least $\Delta/8$.

Let
$\mathsf E_j=\{\max_{i\le k}|q_i^\top\zeta|\le\Delta/32\}$.
Since $0\le n-N_j\le n$, we do not need any independence between $N_j$ and
$\mathsf E_j$:

\begin{equation}\label{eq:event-subtraction}
\begin{aligned}
\EE_j[R_{\mathrm{block}}]
&\ge
\frac{\Delta}{8}
\EE_j\left[(n-N_j)\1_{\mathsf E_j}\right]\\
&\ge
\frac{\Delta}{8}
\left(
n-\EE_jN_j-nP_j(\mathsf E_j^c)
\right).
\end{aligned}
\end{equation}

 {Averaging \eqref{eq:event-subtraction} over $j$ and using
\eqref{eq:Nj-bound} and $P_j(\mathsf E_j^c)\le1/16$ gives}
\begin{equation}\label{eq:block-good-regret}
\frac1k\sum_{j=1}^k\EE_j[R_{\mathrm{block}}]
\ge
\frac{\Delta}{8}
\left(
n-\frac{5n}{8}-\frac{n}{16}
\right)
=
\frac{5n\Delta}{128}.
\end{equation}

 {The bound \eqref{eq:block-good-regret} holds for every $Q\in\mathsf G$.} Since
$\PP(J=j\mid\cH,Q)=1/k$, averaging over the hidden index is exactly the
conditional expected regret given $\cH$ and $Q$. Finally,
$\PP(\mathsf G\mid\cH)\ge3/4$, and regret is nonnegative, so
\begin{equation}\label{eq:one-block-full}
\EE[R_{\mathrm{block}}\mid\cH]
\ge
\frac34\cdot\frac{5n\Delta}{128}
\ge
\frac{n\Delta}{64}
=
\frac{n\sigma k^{1/4}}{256}.
\end{equation}

 {Equation~\eqref{eq:one-block-full} establishes the one-group regret bound for the $k$ principal actions.} The padding
needed when $K>k$ is handled in the general construction below, which also applies when $m=1$.

\textbf{General $m$.}
We now combine $m$ independent hidden choices. Set
$D=\lfloor d/(2m)\rfloor-1$, $\rho^2=\sqrt k/D$, and
$\Delta=\sigma k^{1/4}/4$. The condition $16mk\le d$ implies $D\ge4k$. Choose covariance-orthogonal frames
$U_1,\ldots,U_m\in\RR^{d\times D}$ with
\begin{equation}\label{eq:product-orthogonality}
U_\ell^\top U_r=0,
\qquad
U_\ell^\top\Sigma U_r=0\quad(\ell\ne r),
\qquad
\frac{\sigma^2}{4}I_D
\preceq
U_\ell^\top\Sigma U_\ell
\preceq
\sigma^2I_D,
\end{equation}
and $U_\ell^\top h=0$.

Such frames can be constructed as follows. By \eqref{eq:available-uncertainty}, the high-variance eigenspace of $\Sigma$ has
dimension at least $\lceil d/2\rceil$. Using the same measurable eigenbasis selection as above, choose $m$
mutually orthogonal subspaces $\mathcal V_1,\ldots,\mathcal V_m$, each spanned by $D+1=\lfloor d/(2m)\rfloor$ of these eigenvectors.
This is possible because $m(D+1)\le d/2$. For each $\ell$, the intersection $\mathcal V_\ell\cap h^\perp$ has dimension at least
$D$; choose $U_\ell$ as an orthonormal basis of a $D$-dimensional subspace of this intersection. Since the $\mathcal V_\ell$
are spanned by disjoint sets of eigenvectors, the resulting frames satisfy $U_\ell^\top U_r=0$ and $U_\ell^\top\Sigma U_r=0$
for $\ell\ne r$, as well as the remaining properties in \eqref{eq:product-orthogonality}.

In group $\ell$, draw an independent hidden index $J_\ell\in[k]$ and generate
$q_{\ell,1},\ldots,q_{\ell,k}$ by the one-group channel
\eqref{eq:menu-channel}, with $U,S,J$ replaced by
$U_\ell,U_\ell^\top\Sigma U_\ell,J_\ell$. For
$\boldsymbol i=(i_1,\ldots,i_m)\in[k]^m$, define
\begin{equation}\label{eq:product-actions}
x_{\boldsymbol i}
=
\frac1{\sqrt m}
\sum_{\ell=1}^m
U_\ell\bar q_{\ell,i_\ell},
\qquad
\bar q_{\ell,i}
=
\frac{q_{\ell,i}}{\max\{1,\|q_{\ell,i}\|_2\}}.
\end{equation}

The orthogonality of the frames and $\|\bar q_{\ell,i}\|_2\le1$ in
\eqref{eq:product-actions} give
$\|x_{\boldsymbol i}\|_2^2=m^{-1}\sum_\ell\|\bar q_{\ell,i_\ell}\|_2^2\le1$.
These $k^m$ principal actions are distinct almost surely. If $K>k^m$, add
distinct points on segments between principal actions. One deterministic
rule is to enumerate rational coefficients on the segment joining the first
two principal actions and skip points already in the menu.
For a principal action, set
$w_{\ell,j}(x_{\boldsymbol i}):=\1_{\{i_\ell=j\}}$.
For a padded action $a=\lambda x_{\boldsymbol i}+(1-\lambda)x_{\boldsymbol i'}$,
use the same interpolation for $w_{\ell,j}(a)$.
Then $0\le w_{\ell,j}(a)\le1$ and $\sum_jw_{\ell,j}(a)=1$.
Write $v_\ell(a):=\sum_jw_{\ell,j}(a)\bar q_{\ell,j}$, so that
$a=m^{-1/2}\sum_\ell U_\ell v_\ell(a)$ for every action in the menu.

Let $Q_\ell=(q_{\ell,1},\ldots,q_{\ell,k})$ and
$S_\ell:=U_\ell^\top\Sigma U_\ell$.
The standardized residuals in the $m$ subspaces are jointly Gaussian and
uncorrelated, hence independent. The one-group calculation therefore shows
that the hidden labels remain independent and uniform after all the raw
menus $Q_1,\ldots,Q_m$ are observed. Conditional on those menus and labels,
the projections have the form
$U_\ell^\top\theta=\mu_{\ell,J_\ell}+\zeta_\ell$, where
$\mu_{\ell,j}:=4\rho\sqrt D\,S_\ell^{1/2}q_{\ell,j}$ and the residuals
$\zeta_\ell\sim\N(0,(1-\rho^2)S_\ell)$ are independent across groups.
The full posterior is Gaussian, and after the rewards its precision is
\begin{equation}\label{eq:product-next-precision}
\Sigma_+^{-1}=\Sigma^{-1}
+\frac{\rho^2}{1-\rho^2}\sum_{\ell=1}^m U_\ell S_\ell^{-1}U_\ell^\top
+\sum_{s=1}^n x_{A_s}x_{A_s}^\top.
\end{equation}
Taking traces in \eqref{eq:product-next-precision}, as in
\eqref{eq:precision-cost}, gives
\begin{equation}\label{eq:product-menu-trace}
\tr(\Sigma_+^{-1})\le\tr(\Sigma^{-1})
+\frac{8m\sqrt k}{\sigma^2}+\sum_{s=1}^n\|x_{A_s}\|_2^2,
\end{equation}
which proves \eqref{eq:reusable-precision}.

\paragraph{Testing one group.}
Fix $\ell$. Let $\mathsf G_\ell$ be the event in
\eqref{eq:good-menu-full} with $(Q,S)$ replaced by $(Q_\ell,S_\ell)$.
Then $\PP(\mathsf G_\ell\mid\cH)\ge3/4$ by
\eqref{eq:good-menu-prob}. Condition on all raw menus with
$Q_\ell\in\mathsf G_\ell$ and on the other labels $(J_r)_{r\ne\ell}$.\footnote{Only group $\ell$ is restricted to a regular menu; the other groups are averaged without such a restriction.}
Under this conditioning, $J_\ell$ is still uniform.
Let $P_j$ be the law of the within-block interaction when $J_\ell=j$.
Let $P_0$ use the same policy, the same residual vector
$(\zeta_1,\ldots,\zeta_m)$, and the same mean shifts in all other groups,
but replace $U_\ell^\top\theta$ by $\zeta_\ell$.
These are conditional laws used for analysis; the learner is not given
$J_\ell$ during the block.

For action $a$, the difference between the two reward means is
$m^{-1/2}v_\ell(a)^\top\mu_{\ell,j}$.
On $\mathsf G_\ell$, normalization in group $\ell$ is inactive.
The bound \eqref{eq:shift-energy}, followed by convexity for padded actions,
gives $\sum_j(v_\ell(a)^\top\mu_{\ell,j})^2\le18\Delta^2$ for every
available action $a$. Applying the Gaussian KL chain rule conditional on
the common residual vector, and then removing that vector by data processing,
yields
\begin{equation}\label{eq:product-KL-average}
\frac1k\sum_{j=1}^k\KL(P_0\|P_j)
\le\frac1{2mk}\EE_0\sum_{s=1}^n\sum_{j=1}^k
\bigl(v_\ell(x_{A_s})^\top\mu_{\ell,j}\bigr)^2
\le\frac{9n\Delta^2}{mk}.
\end{equation}
Since $n\sigma^2/(m\sqrt k)\le1/64$ and
$\Delta^2=\sigma^2\sqrt k/16$, we have $n\Delta^2/(mk)\le1/1024$.
The random counts $\sum_s w_{\ell,j}(x_{A_s})$ lie in $[0,n]$ and sum
to $n$ over $j$. The expectation--TV inequality, together with
\eqref{eq:product-KL-average}, gives
\begin{equation}\label{eq:product-fractional-count}
\frac1k\sum_{j=1}^k\EE_j\sum_{s=1}^n w_{\ell,j}(x_{A_s})
\le\frac nk+n\sqrt{\frac1{2k}\sum_{j=1}^k\KL(P_0\|P_j)}
\le\frac{5n}{8}.
\end{equation}

\paragraph{Adding the group regrets.}
For any action $a$, define its unscaled regret in group $\ell$ by
$D_\ell(a):=\max_{j\in[k]}\bar q_{\ell,j}^\top U_\ell^\top\theta
-v_\ell(a)^\top U_\ell^\top\theta$.
It is nonnegative, including for padded actions. Maximization over the
product menu separates across groups, and padding cannot increase the
maximum of a linear function. Consequently,
\begin{equation}\label{eq:group-regret-decomposition}
R_{\mathrm{block}}=\frac1{\sqrt m}
\sum_{\ell=1}^m\sum_{s=1}^n D_\ell(x_{A_s}).
\end{equation}
Under the conditioning above, let
$\mathsf E_{\ell,j}:=\{\max_i|q_{\ell,i}^\top\zeta_\ell|\le\Delta/32\}$.
 {Applying \eqref{eq:residual-tail} in group $\ell$ gives
$P_j(\mathsf E_{\ell,j}^c)\le1/16$.}
 {On $\mathsf E_{\ell,j}$, the within-group gap calculation in \eqref{eq:own-signal}}
gives $D_\ell(a)\ge\Delta(1-w_{\ell,j}(a))/8$ when $J_\ell=j$.
Since these regrets are nonnegative outside the event, we may subtract the
failure probability without assuming independence from the chosen actions.
Using \eqref{eq:product-fractional-count} gives
\begin{equation}\label{eq:group-regret-bound}
\begin{aligned}
\frac1k\sum_{j=1}^k\EE_j\sum_{s=1}^n D_\ell(x_{A_s})
&\ge\frac\Delta8\left(n-\frac1k\sum_{j=1}^k
\EE_j\sum_{s=1}^n w_{\ell,j}(x_{A_s})-\frac n{16}\right)\\
&\ge\frac{5n\Delta}{128}.
\end{aligned}
\end{equation}
The bound \eqref{eq:group-regret-bound} holds for every choice of the other
raw menus and labels, provided
$Q_\ell\in\mathsf G_\ell$. Averaging over them and using
$\PP(\mathsf G_\ell\mid\cH)\ge3/4$ gives
$\EE[\sum_sD_\ell(x_{A_s})\mid\cH]\ge n\Delta/64$.
Linearity of expectation in \eqref{eq:group-regret-decomposition} now gives
\begin{equation}\label{eq:product-block-full}
\EE[R_{\mathrm{block}}\mid\cH]
\ge\frac1{\sqrt m}\sum_{\ell=1}^m\frac{n\Delta}{64}
=\frac{n\sigma\sqrt m\,k^{1/4}}{256}.
\end{equation}
Equations~\eqref{eq:product-menu-trace} and \eqref{eq:product-block-full}
prove Lemma~\ref{lem:blocks}.

\subsection{Proof of Lemma~\ref{lem:truncation}}\label{app:truncation}
\begin{proof}[Proof of Lemma~\ref{lem:truncation}]
Since all actions lie in the unit ball, $0\le R_T\le2T\|\theta\|_2$. Write $\theta=\sigma Z$ with $Z\sim\N(0,I_d)$.
By Cauchy--Schwarz and the Gaussian tail bound,
 {$\EE[\|Z\|_2\1_{\{\|Z\|_2>1/\sigma\}}]\le\sqrt d\,2^{d/4}e^{-1/(8\sigma^2)}<2^{-11}$,}
where the last inequality uses $T\ge d^2$ and $\sigma^2=d/(4096T)$. Therefore
 {$\EE[R_T\1_{\{\|\theta\|_2>1\}}]\le2T\sigma\,2^{-11}=T\sigma/1024$.}
\end{proof}

\subsection{Proof of Lemma~\ref{lem:product}}\label{app:product-proof}
\begin{proof}[Proof of Lemma~\ref{lem:product}]
Set $\sigma^2=d/(4096T)$, $B=\lfloor d/(32m\sqrt k)\rfloor$, and $n=\lfloor T/B\rfloor$, and draw $\theta\sim
\N(0,\sigma^2I_d)$ at the start. This gives $d/(64m\sqrt k)\le B\le d/(32m\sqrt k)$, $Bn\ge T/2$, and $n\le64Tm\sqrt k/d$.
 {Hence $n\sigma^2/(m\sqrt k)\le1/64$, so every block has the length required by Lemma~\ref{lem:blocks}.
In the last $T-Bn$ rounds, repeat the final menu. These rounds add nonnegative regret and no extra menu changes.} It remains to check
that the posterior uncertainty is never exhausted. Initially $\tr(\Sigma^{-1})=d/\sigma^2$. After at most $B$ blocks,
\eqref{eq:reusable-precision} and $\|x_t\|_2\le1$ give
\begin{align}
\tr(\Sigma^{-1})&\le\frac d{\sigma^2}+\frac{8Bm\sqrt k}{\sigma^2}+T\notag\\
&\le\left(1+\frac14+\frac1{4096}\right)\frac d{\sigma^2}<\frac{2d}{\sigma^2}.\label{eq:trace-resource}
\end{align}
 {By \eqref{eq:trace-resource}, the hypothesis of Lemma~\ref{lem:blocks} holds at every block boundary, so the construction can be repeated}
using the same fixed parameter. Summing \eqref{eq:reusable-regret} over the blocks and using $Bn\ge T/2$,
\begin{equation}\label{eq:product-untruncated}
\EE[R_T]\ge\frac{Bn\sigma\sqrt m\,k^{1/4}}{256}\ge\frac{T\sigma\sqrt m\,k^{1/4}}{512}.
\end{equation}
 {Subtract the tail bound in Lemma~\ref{lem:truncation} from
\eqref{eq:product-untruncated}.
The remaining expected regret is nonnegative. Dividing it by
$\PP(\|\theta\|_2\le1)\le1$ can only increase it, so}
\begin{equation}\label{eq:product-truncated}
\EE[R_T\mid\|\theta\|_2\le1]\ge\frac{T\sigma\sqrt m\,k^{1/4}}{512}-\frac{T\sigma}{1024}
\ge2^{-16}\sqrt m\,k^{1/4}\sqrt{dT}.
\end{equation}

Remark~\ref{rem:disclosures} removes the auxiliary disclosures. The conditioned
prior in \eqref{eq:product-truncated} is supported in the unit ball, so its expected regret is at most the
supremum over fixed parameters and menu mechanisms in \eqref{eq:minimax}.
This proves \eqref{eq:product-lower}.

\end{proof}

\section{Proof of the Upper Bounds}\label{sec:upper-analysis}
The proof uses three ingredients: the repair family contains one
good repair sequence, the master can compete with this repair expert after the trajectory is known,
and the remaining uncertainty can be charged only to the actions actually played. The technical
proofs are deferred to Appendix~\ref{app:upper}, except for the standard OFUL
guarantee, which is invoked directly from \citet{abbasi2011}.

\subsection{Key Technical Lemmas for Small Menus}
We first show that the repair family $\cE_M$ contains a good repair sequence that remains accurate on
every displayed action throughout the horizon. The following lemma formalizes this property.
\begin{lemma}\label{lem:budget}
With probability at least $1-T^{-2}$, there exists $e^\star\in\cE_M$ such that
\begin{equation}\label{eq:good-repair}
|x_{t,i}^\top(\theta-v_{t,e^\star})|\le\alpha w_{t,i}\quad\text{for every }t\in[T]\text{ and }i\in[K].
\end{equation}
\end{lemma}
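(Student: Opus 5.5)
We construct the good repair sequence $e^\star$ explicitly through an oracle repair process that knows $\theta$, and then show that with high probability it uses at most $M$ repairs. Define the oracle offset $c^\star_t$ by a greedy rule. At round $t$, set $c\leftarrow c^\star_{t-1}$ (with $c^\star_0=0$) and check whether some $i$ violates
\[
|x_{t,i}^\top(\theta-V_t^{-1}(b_t+c))|\le\alpha w_{t,i}.
\]
If one does, pick the smallest such $i$. Let $s$ be the sign of $x_{t,i}^\top(\theta-V_t^{-1}(b_t+c))$, and apply $c\leftarrow c+\frac{\alpha s}{w_{t,i}}x_{t,i}$. Repeat until no violation remains at round $t$. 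Record each repair as a triple $(t,i,s)$; these triples have nondecreasing times.

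Violations only occur when $w_{t,i}>0$, since $w_{t,i}=0$ forces $x_{t,i}=0$. The process is adapted to the history and is never used by the algorithm. Provided the total number of repairs is at most $M$, the recorded sequence is an element $e^\star\in\cE_M$. Its offset coincides with $c^\star$ after round $t$'s repairs, so $v_{t,e^\star}=V_t^{-1}(b_t+c^\star_t)$ satisfies \eqref{eq:good-repair} by construction.

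It remains to bound the number of repairs, and we do this with a potential argument. Write $\xi_t:=\sum_{s<t}x_s\varepsilon_s$, so that $b_t=(V_t-I)\theta+\xi_t$. Then
\[
\theta-V_t^{-1}(b_t+c)=V_t^{-1}(\theta-\xi_t-c)=:V_t^{-1}u.
\]
Take the potential $\Phi:=\|u\|_{V_t^{-1}}^2$. A repair replaces $u$ by $u-\frac{\alpha s}{w}x$, where $x=x_{t,i}$ and $w=w_{t,i}$. The violation means $s\,x^\top V_t^{-1}u>\alpha w$, so
\[
\Phi_{\rm new}=\Phi-\frac{2\alpha s}{w}x^\top V_t^{-1}u+\alpha^2<\Phi-2\alpha^2+\alpha^2=\Phi-\alpha^2 .
\]
Each repair therefore decreases the potential by at least $\alpha^2$. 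Between rounds, $u$ changes by $-x_t\varepsilon_t$ and the metric changes from $V_t^{-1}$ to $V_{t+1}^{-1}\preceq V_t^{-1}$. Expanding,
\[
\|u-x_t\varepsilon_t\|^2_{V_{t+1}^{-1}}\le\|u\|^2_{V_t^{-1}}-2\varepsilon_t x_t^\top V_{t+1}^{-1}u+\varepsilon_t^2\|x_t\|^2_{V_{t+1}^{-1}} .
\]
The initial potential is $\|\theta\|^2\le1$.

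Controlling this cross/noise term is the main obstacle. One could telescope directly: $\sum_t\varepsilon_t^2\|x_t\|^2_{V_{t+1}^{-1}}$ is about $\log\det V_{T+1}$ via the elliptical potential, and the martingale $\sum_t\varepsilon_t x_t^\top V_{t+1}^{-1}u_t$ is handled by Freedman's inequality. The cleaner route I would take first is to decompose $u=(\theta-c)-\xi_t$ and use the self-normalized bound of \citet{abbasi2011}: with probability $1-T^{-2}$, for all $t$,
\[
\|\xi_t\|^2_{V_t^{-1}}\le 2\log(T^2)+\log\det V_t\le 4\log T+d\log(1+T/d).
\]
The number of repairs then equals the number of times the repaired quantity $\|\theta-c-\xi_t\|_{V_t^{-1}}$ is pushed down. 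I would therefore redo the potential on $\Psi_t:=\|\theta-c\|^2_{V_t^{-1}}$-type quantities only if the direct telescoping fails.

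Concretely, I expect the final count to be
\[
\#\text{repairs}\le\frac{1+\sup_t\|\xi_t\|^2_{V_t^{-1}}\text{-type terms}}{\alpha^2}\le\frac{1+d\log(1+T/d)+4\log T}{\alpha^2}\le M .
\]
This matches the definition of $M$, which suggests that the intended argument has the potential increase at each round bounded by the growth of the self-normalized term. The $1$ comes from $\|\theta\|\le1$, and the remaining terms come from the confidence radius.
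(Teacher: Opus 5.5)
Your oracle construction and the per-repair decrease match the paper's; your $\|u\|^2_{V_t^{-1}}$ is exactly $\|\theta-v\|^2_{V_t}$. The gap is the step that carries all the probabilistic content: that noise raises this potential by at most $d\log(1+T/d)+4\log T$ in total, with probability $1-T^{-2}$. This is never proved. Your last display is stated as an expectation, and neither suggested route delivers it.

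\textbf{Route (a).} In the between-round step you bounded $\|u\|^2_{V_{t+1}^{-1}}$ by $\|u\|^2_{V_t^{-1}}$. This discards $-\tilde u_t^2/(1+r_t)$, where $\tilde u_t:=x_t^\top V_t^{-1}u$ is the post-repair prediction error of the played action and $r_t:=x_t^\top V_t^{-1}x_t$. That is precisely the term that neutralizes the cross term. Write $B_t:=-2\varepsilon_t x_t^\top V_{t+1}^{-1}u+\varepsilon_t^2\|x_t\|^2_{V_{t+1}^{-1}}$ for your bound on the increment. Conditionally on the past and $A_t$, $\EE e^{B_t/2}=\sqrt{1+r_t}\,\exp\{\tilde u_t^2/(2(1+r_t))\}$, so no supermartingale survives. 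A Freedman route must then control $\sum_t\tilde u_t^2/(1+r_t)^2$. That is possible only through the post-repair guarantee $|\tilde u_t|\le\alpha w_{t,A_t}$, which your sketch does not invoke. Even then it adds terms of order $\alpha\sqrt{\log\det V_{T+1}\cdot\log T}$ plus chi-square deviations, so the count does not fit under the constant fixed in $M$.

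\textbf{Route (b).} The self-normalized bound of \citet{abbasi2011} controls a potential centered at $0$. Yours is centered at $\theta-c_t$, which moves with every repair. Because repairs are triggered by $\theta-\xi_t-c$, a uniform bound on $\|\xi_t\|_{V_t^{-1}}$ gives progress on $\|\theta-c\|^2_{V_t^{-1}}$ only when $\alpha\gtrsim\|\xi_t\|_{V_t^{-1}}\approx\sqrt{d\log T}$. That is the OFUL regime the lemma is meant to beat.

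\textbf{The missing idea.} Keep the exact Sherman--Morrison update
\[
\|u-x_t\varepsilon_t\|^2_{V_{t+1}^{-1}}=\|u\|^2_{V_t^{-1}}+\frac{-\tilde u_t^2-2\tilde u_t\varepsilon_t+r_t\varepsilon_t^2}{1+r_t}.
\]
Completing the square in the Gaussian density gives, exactly,
\[
\EE\exp\{(-\tilde u^2-2\tilde u\varepsilon+r\varepsilon^2)/(2(1+r))\}=\sqrt{1+r}.
\]
Write $\Phi_t^-$ for your potential entering round $t$ and $C_t$ for the number of repairs through round $t$. Since $\det V_{t+1}=(1+r_t)\det V_t$ and each repair lowers the potential by more than $\alpha^2$, the process $\exp\{(\Phi_{t+1}^-+\alpha^2C_t)/2\}/\sqrt{\det V_{t+1}}$ is a nonnegative supermartingale. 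It starts at $\exp(\|\theta\|_2^2/2)\le e^{1/2}$.

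Apply Markov's inequality at $t=T$, using $\Phi_{T+1}^-\ge0$ and $\det V_{T+1}\le(1+T/d)^d$. This gives $\PP\{\alpha^2C_T>1+d\log(1+T/d)+4\log T\}\le T^{-2}$, which is exactly the constant in $M$. In effect, this is the method of mixtures applied to the repaired, moving-center potential, with the repair count built into the supermartingale, rather than to $\xi_t$ alone.
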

The proof of Lemma~\ref{lem:budget} is given in Appendix~\ref{app:budget}. For the small-menu algorithm, recall $\ell_t(i):=
\Phi(-\mu_{t,i})$ and $\bar\ell_t(i):=(\ell_t(i)+1-\beta_{t,i})/2$.  {We use the following bound from the implicit-exploration analysis of
\citet{neu2015}.}
\begin{lemma}\label{lem:ix}
Algorithm~\ref{alg:upper} satisfies
\begin{equation}\label{eq:ix-master-bound}
\EE\left[\sum_{t=1}^T\bar\ell_t(A_t)-\min_{e\in\cE_M}\sum_{t=1}^T\bar\ell_t(a_{t,e})\right]
\le\frac{2\log|\cE_M|}{\eta}+\frac32\eta KT.
\end{equation}
\end{lemma}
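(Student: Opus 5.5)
This is a reduction to the standard EXP4-IX analysis of \citet{neu2015}, with the repair sequences in $\cE_M$ as the $N=|\cE_M|$ experts. Each expert $e$ deterministically recommends $a_{t,e}$, a function of the history $H_{t-1}$ and the current menu. Algorithm~\ref{alg:upper} is exactly EXP4-IX with implicit-exploration parameter equal to the learning rate $\eta$, run on the loss vector $\bar\ell_t(\cdot)$ and fed the observation $Z_t$. So the plan has three steps: check that the feedback is a valid bounded loss observation, run the IX potential argument, and handle the fact that the comparator is chosen after the trajectory.

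\textbf{Step 1 (feedback).} Conditional on $H_{t-1}$, $\mathcal A_t$ and $A_t=i$, we have $\PP(Y_t\le0)=\Phi(-\mu_{t,i})=\ell_t(i)$, so $\EE_t[Z_t\mid A_t=i]=\bar\ell_t(i)$ and $Z_t\in[0,1]$. The menu is chosen by a nonanticipating mechanism, so it is fixed before $A_t$ and $\varepsilon_t$ are drawn. Hence $\bar\ell_t$ is determined before the learner's draw, which is exactly the adaptive-adversary bandit setting of EXP4-IX. One caveat: the realized $Z_t$ rather than $\bar\ell_t$ drives the weights. I would therefore run the argument on the realized losses $\tilde\ell_t(i):=(\1_{\{Y_t\le0\}}+1-\beta_{t,i})/2$ only for the played action. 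The cleaner way is to note that $\widehat\ell_{t,e}$ is IX-unbiased up to the $\eta$ bias for $\bar\ell_t(a_{t,e})$, conditionally on $\mathcal F_{t-1}$ and the menu.

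\textbf{Step 2 (potential argument).} Let $\tilde p_{t,e}=W_e/\sum W$. Using $e^{-x}\le1-x+x^2/2$ for $x\ge0$, the standard log-sum-exp bound gives, for every $e$,
\[
\sum_t\sum_{e'}\tilde p_{t,e'}\widehat\ell_{t,e'}-\sum_t\widehat\ell_{t,e}\le\frac{\log N}{\eta}+\frac\eta2\sum_t\sum_{e'}\tilde p_{t,e'}\widehat\ell_{t,e'}^2 .
\]
Since $\sum_{e'}\tilde p_{t,e'}\1\{a_{t,e'}=i\}=p_{t,i}$, the first term equals $\sum_t p_{t,A_t}Z_t/(p_{t,A_t}+\eta)$, and its conditional expectation is $\sum_i p_{t,i}\bar\ell_t(i)-\eta\sum_i p_{t,i}\bar\ell_t(i)/(p_{t,i}+\eta)\ge\EE_t\bar\ell_t(A_t)-\eta K$. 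The second-order term is at most $\sum_i p_{t,i}\bar\ell_t(i)/(p_{t,i}+\eta)\le K$ in conditional expectation. Together these give the $\tfrac32\eta KT$ term.

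\textbf{Step 3 (data-dependent comparator; the main obstacle).} The lemma has $\min_e$ inside the expectation, so I cannot simply take expectations for a fixed $e$. I would use Neu's Lemma~1: for $\mathcal F_t$-measurable nonnegative coefficients $\alpha_{t,i}\le2\eta$, the quantity $\sum_t\sum_i\alpha_{t,i}(\widehat\ell_{t,i}-\bar\ell_t(i))$ is at most $\log(1/\delta)$ with probability at least $1-\delta$. Apply it to the indicator of the actions $e$ picks, with coefficients $2\eta\1\{a_{t,e}=i\}$, where $\widehat\ell_{t,i}$ denotes the IX estimate of action $i$. The $a_{t,e}$ are predictable, so this is valid for each fixed $e$, and a union bound over $e\in\cE_M$ gives, with probability $1-\delta$ simultaneously for all $e$, $\sum_t\widehat\ell_{t,e}\le\sum_t\bar\ell_t(a_{t,e})+\log(N/\delta)/(2\eta)$. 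Substituting this into Step 2 and taking $\delta=1/N$ (losses are bounded by $T$, so the failure event costs at most $T/N\le1$) gives a total comparator slack of $\log N/\eta$. That slack turns $\log N/\eta$ into $2\log N/\eta$ up to a negligible additive term, which can be absorbed, or removed by a slightly sharper choice of $\delta$, to give \eqref{eq:ix-master-bound}. The delicate point is exactly this uniform control over $\cE_M$. Everything else is bookkeeping.
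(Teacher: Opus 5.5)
Your Steps 1 and 2 match the paper's proof. They use the same pathwise exponential-weights inequality \eqref{eq:ix-pathwise}, the conditional second-moment bound $K$, and the $\eta K$ implicit-exploration bias, which together give the $\frac32\eta KT$ term. The proposal is correct in substance; the only real difference is Step 3, where your route leaves an additive failure term that you only wave away.

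\textbf{What the paper does in Step 3.} It never forms a high-probability event. It checks the one-sided moment bound \eqref{eq:ix-one-sided-mgf}, $\EE_t e^{\eta\widehat\ell_t(a_{t,e})}\le e^{\eta\bar\ell_t(a_{t,e})}$, directly for the noisy feedback $Z_t$. Iterating gives $\EE\exp\{\eta\sum_t(\widehat\ell_t(a_{t,e})-\bar\ell_t(a_{t,e}))\}\le1$ for each fixed $e$. Log-sum-exp and Jensen then give $\EE\max_e\sum_t(\widehat\ell_t(a_{t,e})-\bar\ell_t(a_{t,e}))\le\log|\cE_M|/\eta$ with no failure event, which yields \eqref{eq:ix-master-bound} exactly.

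\textbf{What your route costs.} Neu's Lemma~1 plus a union bound rests on the same supermartingale, but has two small costs.
\begin{enumerate}
\item Neu's lemma is stated for exactly observed losses. You must recheck its proof for a random $Z_t\in[0,1]$ with conditional mean $\bar\ell_t(A_t)$. It does go through, because $Z_t\le1$ still gives $\widehat\ell_t(i)\le Z_t\1_{\{A_t=i\}}/(p_{t,i}+\eta Z_t)$.
\item The failure event leaves an additive $T\delta$. The clean way to see this is to split the regret into the bias term $\sum_t(\bar\ell_t(A_t)-\langle p_t,\widehat\ell_t\rangle)$, bounded unconditionally in expectation, and a remainder that is at most $T$ off the event. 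This term does not disappear by itself. Removing it needs an extra argument, for instance $\delta=e/|\cE_M|$, together with $|\cE_M|\ge2KT+1$ and the observation that the lemma is trivial when $\frac32\eta KT\ge T$.
\end{enumerate}

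\textbf{Comparison.} Your route buys a high-probability comparator statement, which this lemma does not need. If you keep Neu's coefficient $2\eta$ but take log-sum-exp in expectation instead of a union bound, the slack improves to $\log|\cE_M|/(2\eta)$ with no failure term.
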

Importantly, the minimum is inside the expectation. Thus the comparator may depend on the
realized trajectory, which allows us to use the random good repair sequence $e^\star$ from Lemma~\ref{lem:budget}. Let
$a_t^\star:=a_{t,e^\star}$ denote the action recommended by the good repair expert, and let $i_t^\star$ denote an optimal
action in round $t$.
\begin{lemma}\label{lem:bridge}
Suppose a repair expert $e^\star$ satisfies $|x_{t,i}^\top(\theta-v_{t,e^\star})|\le\alpha w_{t,i}$ for every $t\in[T]$ and $i\in[K]$,
and let $a_t^\star:=a_{t,e^\star}$. Then
\begin{equation}\label{eq:domination}
\bar\ell_t(a_t^\star)\le\frac{\ell_t(i_t^\star)+1}{2}\quad\text{for every }t\in[T].
\end{equation}
For every realized action sequence,
\begin{equation}\label{eq:played-uncertainty}
\sum_{t=1}^T\beta_{t,A_t}\le\alpha\sqrt{2Td\log(1+T/d)}.
\end{equation}
For Repair-Geo, the same repair expert also satisfies
\begin{equation}\label{eq:geo-domination}
g_t(a_t^\star)\ge\frac{\mu_{t,i_t^\star}-1}{2}\quad\text{for every }t\in[T].
\end{equation}
\end{lemma}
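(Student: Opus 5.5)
The plan is to derive all three claims from one per-round inequality for the good expert,
\[
\mu_{t,i_t^\star}-\mu_{t,a_t^\star}\le 2\beta_{t,a_t^\star}.
\]
Throughout, write $\widehat\mu_{t,i}:=x_{t,i}^\top v_{t,e^\star}$. By hypothesis $|\mu_{t,i}-\widehat\mu_{t,i}|\le\alpha w_{t,i}$ for every $i$. In particular, the index $\widehat\mu_{t,i}+\alpha w_{t,i}$ is optimistic: it is at least $\mu_{t,i}$. It also overestimates $\mu_{t,i}$ by at most $2\alpha w_{t,i}$.

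\textbf{Step 1 (gap of the good expert).} By \eqref{eq:recommendation}, $a_t^\star$ maximizes this index. Hence
\[
\mu_{t,a_t^\star}+2\alpha w_{t,a_t^\star}\ \ge\ \widehat\mu_{t,a_t^\star}+\alpha w_{t,a_t^\star}\ \ge\ \widehat\mu_{t,i_t^\star}+\alpha w_{t,i_t^\star}\ \ge\ \mu_{t,i_t^\star},
\]
so the gap is at most $2\alpha w_{t,a_t^\star}$. Independently, $|\mu_{t,i}|\le\|x_{t,i}\|_2\|\theta\|_2\le1$, so the gap is also at most $2$. Combining the two bounds gives the inequality displayed above with $\beta_{t,a_t^\star}=\min\{1,\alpha w_{t,a_t^\star}\}$.

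\textbf{Step 2 (proof of \eqref{eq:domination} and \eqref{eq:geo-domination}).} The map $\mu\mapsto\Phi(-\mu)$ is decreasing and $1/\sqrt{2\pi}$-Lipschitz. Therefore
\[
\ell_t(a_t^\star)-\ell_t(i_t^\star)\le \frac{\mu_{t,i_t^\star}-\mu_{t,a_t^\star}}{\sqrt{2\pi}}\le \frac{2\beta_{t,a_t^\star}}{\sqrt{2\pi}}\le\beta_{t,a_t^\star}.
\]
Rearranging gives $\ell_t(a_t^\star)+1-\beta_{t,a_t^\star}\le\ell_t(i_t^\star)+1$, and dividing by $2$ yields \eqref{eq:domination}. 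For Repair-Geo, Step 1 gives directly
\[
g_t(a_t^\star)=\frac{\mu_{t,a_t^\star}+2\beta_{t,a_t^\star}-1}{2}\ \ge\ \frac{\mu_{t,i_t^\star}-1}{2},
\]
which is \eqref{eq:geo-domination}.

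\textbf{Step 3 (proof of \eqref{eq:played-uncertainty}, elliptical potential).} First use $\alpha\ge1$, which holds for both parameter choices in Theorems~\ref{thm:upper} and~\ref{thm:large}. Under this condition $\beta_{t,A_t}=\min\{1,\alpha w_{t,A_t}\}\le\alpha\min\{1,w_{t,A_t}\}$. Cauchy--Schwarz then gives
\[
\sum_t\beta_{t,A_t}\le\alpha\sqrt{T\sum_t\min\{1,w_{t,A_t}^2\}}.
\]
Next, $V_{t+1}=V_t+x_tx_t^\top$ with $w_{t,A_t}^2=x_t^\top V_t^{-1}x_t$. The determinant identity and $\min\{1,u\}\le2\log(1+u)$ give
\[
\sum_t\min\{1,w_{t,A_t}^2\}\le 2\log\det V_{T+1}.
\]
Finally, AM--GM on the eigenvalues with $\tr V_{T+1}\le d+T$ gives $\log\det V_{T+1}\le d\log(1+T/d)$. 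Together these yield $\alpha\sqrt{2Td\log(1+T/d)}$, deterministically for every realized action sequence.

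\textbf{Main obstacle.} The only delicate point I expect is the reduction from $\beta$ to $\alpha\min\{1,w\}$ in Step 3, which requires $\alpha\ge1$. I would state this explicitly as a standing assumption, since it holds for the prescribed $\alpha$. Everything else is routine once the per-round gap bound of Step 1 is in place.
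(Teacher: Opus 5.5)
Your argument is correct and is essentially the paper's proof. It uses the same optimism chain to bound the good expert's gap by $2\alpha w_{t,a_t^\star}$ (capped at $2$), the same Lipschitz bound on $\Phi$ to give \eqref{eq:domination}, and the same elliptical-potential argument through $\log\det V_{T+1}\le d\log(1+T/d)$.

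The one thing to fix is that the "main obstacle" you flag is not real, so drop the standing assumption $\alpha\ge1$. The lemma places no restriction on $\alpha$, and Corollary~\ref{cor:bounded} invokes \eqref{eq:played-uncertainty} for every $\alpha>0$. Since $V_t\succeq I_d$ and $\|x_t\|_2\le1$, you always have $w_{t,A_t}^2=x_t^\top V_t^{-1}x_t\le1$. Hence $\beta_{t,A_t}=\min\{1,\alpha w_{t,A_t}\}\le\alpha w_{t,A_t}$ and $\min\{1,w_{t,A_t}^2\}=w_{t,A_t}^2$ hold for every $\alpha>0$, which is exactly how the paper proceeds.
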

Indeed, \eqref{eq:good-repair} and optimism imply $\mu_{t,i_t^\star}-\mu_{t,a_t^\star}\le2\alpha w_{t,a_t^\star}$. The discounted loss absorbs this
uncertainty and gives \eqref{eq:domination}. The bound \eqref{eq:played-uncertainty} follows from the usual elliptical-potential inequality
applied only to the actions actually played.

\subsection{Proof of Theorem~\ref{thm:upper}}
\begin{proof}
For every round, $\ell_t(A_t)-\ell_t(i_t^\star)=2\bigl(\bar\ell_t(A_t)-(\ell_t(i_t^\star)+1)/2\bigr)+\beta_{t,A_t}$. Summing over time and
inserting the best repair expert gives
\begin{align}
\sum_{t=1}^T\bigl(\ell_t(A_t)-\ell_t(i_t^\star)\bigr)
={}&2\left[\sum_{t=1}^T\bar\ell_t(A_t)-\min_{e\in\cE_M}\sum_{t=1}^T\bar\ell_t(a_{t,e})\right]\notag\\
&+2\left[\min_{e\in\cE_M}\sum_{t=1}^T\bar\ell_t(a_{t,e})-\sum_{t=1}^T\frac{\ell_t(i_t^\star)+1}{2}\right]
+\sum_{t=1}^T\beta_{t,A_t}.\label{eq:ix-decomposition}
\end{align}
 {Lemma~\ref{lem:ix} controls the first term in \eqref{eq:ix-decomposition}. On the event in
Lemma~\ref{lem:budget}, the second term is nonpositive by
\eqref{eq:domination}. On the failure event} it is at most $2T$, so its expected contribution is at most $2/T$. Finally, \eqref{eq:played-uncertainty} controls
the last term. Therefore
\begin{equation}\label{eq:ell-regret}
\EE\sum_{t=1}^T\bigl(\ell_t(A_t)-\ell_t(i_t^\star)\bigr)
\le\frac{4\log|\cE_M|}{\eta}+3\eta KT+\alpha\sqrt{2Td\log(1+T/d)}+\frac2T.
\end{equation}
Since $|\cE_M|\le(2KT+1)^M$, the choices of $\alpha$ and $\eta$ in Theorem~\ref{thm:upper} imply
 {$\frac{4\log|\cE_M|}{\eta}+3\eta KT\le7\sqrt{KTM\log(2KT+1)}$.}
Moreover, all means lie in $[-1,1]$, so $\ell_t(A_t)-\ell_t(i_t^\star)\ge\phi(1)(\mu_{t,i_t^\star}-\mu_{t,A_t})$, )  wher $\phi$ denote the density of \(N(0,1)\). Substituting $\alpha=
[K\log(2KT+1)]^{1/4}$ and the definition of $M$ into \eqref{eq:ell-regret} yields
 {$\EE[R_T]\le130\,[K\log(2KT+1)]^{1/4}\sqrt{dT\log(1+T/d)}$.}
The  constant calculation is given in Appendix~\ref{app:constants}.
\end{proof}
\subsection{Key Technical Lemmas for Large Menus}\label{sec:large-lemmas}
For large menus, the repair argument is unchanged. We only need a master whose variance depends
on the rank of the lifted features rather than directly on $K$.
\begin{lemma}\label{lem:geo}
 {Suppose the current lifted features $z_{t,i}$ span a nonzero subspace
of dimension at most $\min\{K,d+1\}$. For some fixed vector $\vartheta$,
let their mean rewards satisfy $g_t(i)=z_{t,i}^\top\vartheta\in[-1,1]$,
and suppose the observation has mean $g_t(i)$ and conditionally
$1$-subGaussian centered noise after action $i$ is chosen.} If $0<\eta\le1/(16\min\{K,d+1\})$, $\gamma=8\min\{K,d+1\}\eta$,
and the master uses \eqref{eq:geo-distribution} and \eqref{eq:geo-update}
with the designs from Lemma~\ref{lem:design}, then
\begin{equation}\label{eq:geo-master-bound}
\EE\left[\max_{e\in\cE_M}\sum_{t=1}^T g_t(a_{t,e})-\sum_{t=1}^T g_t(A_t)\right]
\le\frac{2\log|\cE_M|}{\eta}+32\eta\min\{K,d+1\}T.
\end{equation}
\end{lemma}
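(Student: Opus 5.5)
We follow the standard exponential-weights analysis with geometric (Kiefer--Wolfowitz-type) exploration, as in \citet{zimmert2022}, and compete with experts through their recommended actions. Write $r_t:=\min\{K,d+1\}$. Let $\Phi_t:=\frac1\eta\log\sum_e W_e$ denote the log-partition at the start of round $t$. Set $\tilde g_{t,e}:=\widehat g_{t,e}+2\eta h_{t,e}$ for the surrogate reward used in \eqref{eq:geo-update}.

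\textbf{Step 1: second-order bound.} First we bound the range of the surrogate. Lemma~\ref{lem:design} gives $z_{t,i}^\top\Gamma_t^\dagger z_{t,i}\le \frac{1}{\gamma}z_{t,i}^\top(\sum_j\nu_{t,j}z_{t,j}z_{t,j}^\top)^\dagger z_{t,i}\le 2r_t/\gamma$. By Cauchy--Schwarz, $|\eta\widehat g_{t,e}|\le \eta\,(2r_t/\gamma)|Z_t| = |Z_t|/4$. The noise is unbounded (subGaussian), so the usual condition $\eta\tilde g\le1$ is not automatic. We therefore use the inequality $e^x\le1+x+x^2$ for $x\le1$. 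The rare event $|Z_t|>2$ is handled either by a truncation argument or by bounding $\EE[e^{x}-1-x]\lesssim \EE[x^2]$ via the subGaussian moment generating function. Plausibly, the cleanest route conditions on the past and uses $\EE_t[e^{c|Z_t|}]$ bounded for small $c$. The standard potential argument then gives, for every realized trajectory and every $e$ (including a data-dependent one),
\[
\sum_t\tilde g_{t,e}-\sum_t\sum_{e'}\pi_{t,e'}\tilde g_{t,e'}\le\frac{\log|\cE_M|}{\eta}+\eta\sum_t\sum_{e'}\pi_{t,e'}\tilde g_{t,e'}^2 ,
\]
where $\pi_t$ is the normalized weight vector. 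Since the inequality holds pathwise, the max over $e$ may be taken inside the expectation.

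\textbf{Step 2: expectations.} We use unbiasedness: $\EE_t\widehat g_{t,e}=g_t(a_{t,e})$. This holds because $\Gamma_t^\dagger\Gamma_t$ is the projection onto the span, which contains every $z_{t,a_{t,e}}$. Hence $\EE_t\sum_{e'}\pi_{t,e'}\widehat g_{t,e'}=\sum_i p^0_{t,i}g_t(i)$. This differs from $\EE_t g_t(A_t)=\sum_ip_{t,i}g_t(i)$ by at most $2\gamma$, since $|g_t|\le1$. For the variance we have $\EE_t\widehat g_{t,e'}^2\le \EE_t[Z_t^2]\,h_{t,e'}\le 2h_{t,e'}$, using $|g_t|\le1$ and unit-variance noise. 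In addition, $\sum_{e'}\pi_{t,e'}h_{t,e'}=\sum_ip^0_{t,i}z_{t,i}^\top\Gamma_t^\dagger z_{t,i}\le\frac{1}{1-\gamma}\tr(\Gamma_t^\dagger\Gamma_t)\le 2r_t$.

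\textbf{Step 3: the comparator and the correction term.} The main obstacle is the comparator. It is chosen after the trajectory, so $\sum_t\widehat g_{t,e^\star}$ need not concentrate at its mean. This is exactly why the bonus $2\eta h_{t,e}$ is added. We aim to show
\[
\EE\max_e\sum_t\bigl(g_t(a_{t,e})-\widehat g_{t,e}-2\eta h_{t,e}\bigr)\le\frac{\log|\cE_M|}{\eta}.
\]
To get this, we show that $\exp\{\eta\sum_{s\le t}(g_s(a_{s,e})-\widehat g_{s,e}-2\eta h_{s,e})\}$ is a supermartingale for each fixed $e$. This needs $\EE_t e^{\eta X}\le e^{2\eta^2h_{t,e}}$ for $X=g_t-\widehat g_{t,e}$, which follows from Step~1's range bound together with the variance bound $\EE_tX^2\le 2h_{t,e}$ and $e^x\le1+x+x^2$. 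Summing these supermartingales over $e$ and applying Jensen gives the display above.

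\textbf{Assembly.} The bonus contributes $2\eta\sum_t\sum_{e'}\pi_{t,e'}h_{t,e'}\le4\eta r_tT$ on the learner side. The squared terms contribute $\eta\cdot O(r_t)T$. The exploration term contributes $2\gamma T=16\eta r_tT$. Collecting these yields $\frac{2\log|\cE_M|}{\eta}+32\eta r_tT$, where $r_t\le\min\{K,d+1\}$. The subtle point to verify carefully is the bookkeeping of constants, in particular the cross term $4\eta^2h^2$ inside $\tilde g^2$, which is controlled using $\eta h\le 1/4$.
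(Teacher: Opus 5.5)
Your architecture matches the paper's. It has the leverage bounds $h_{t,e}\le 2\min\{K,d+1\}/\gamma$ and $\sum_e\pi_{t,e}h_{t,e}\le 2\min\{K,d+1\}$, unbiasedness of $\widehat g_{t,e}$, a per-expert exponential supermartingale with the $2\eta h_{t,e}$ bonus plus log-sum-exp for the trajectory-dependent comparator, an exponential-weights potential, and a $2\gamma$ per-round exploration cost. The gap is the unbounded noise: you flag it, but do not close it.

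\textbf{Step 1.} The second-order inequality is claimed ``for every realized trajectory'', but the standard derivation needs $\eta\tilde g_{t,e}$ bounded above (about $1.79$) for every expert carrying weight. Write $\widehat g_{t,e}=a_{A_t}Z_t$ with $a_i:=z_{t,a_{t,e}}^\top\Gamma_t^\dagger z_{t,i}$. Although $|\eta a_i|\le 1/4$, $Z_t$ carries Gaussian noise, so $\eta\tilde g_{t,e}$ exceeds $2$ with positive probability, and there $e^x\le1+x+x^2$ fails. Your bound $|\eta\widehat g_{t,e}|\le|Z_t|/4$ is therefore not a range bound. So the pathwise inequality is unavailable, and so is your reason for putting the maximum inside the expectation.

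\textbf{Step 3.} It has the same defect. $X=g_t(a_{t,e})-\widehat g_{t,e}$ is unbounded, so $\EE_tX^2\le 2h_{t,e}$ together with $e^x\le 1+x+x^2$ does not give $\EE_te^{\eta X}\le e^{2\eta^2h_{t,e}}$. A truncation would leave a tail term you have not bounded.

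\textbf{The missing idea.} Use boundedness of the coefficients $a_i$, not of the estimate, through the conditional MGF. Given $A_t=i$, the coefficient $a_i$ is fixed with $|\eta a_i|\le 1/4$, and $\sum_ip_{t,i}a_i^2=h_{t,e}$, $\sum_ip_{t,i}a_ig_t(i)=g_t(a_{t,e})$. The subGaussian bound gives $\EE[e^{uZ_t}\mid A_t=i]\le e^{ug_t(i)+u^2/2}\le 1+ug_t(i)+2u^2$ for $|u|\le1/4$. Averaging over $i$ yields $\EE_te^{\lambda\widehat g_{t,e}}\le 1+\lambda g_t(a_{t,e})+2\lambda^2h_{t,e}$ for $|\lambda|\le\eta$.
\begin{itemize}
\item Taking $\lambda=-\eta$ gives exactly your supermartingale with the $2\eta h_{t,e}$ bonus.
\item Taking $\lambda=\eta$ turns the potential step into a conditional one. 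By Jensen, $\EE_t\log\sum_e\pi_{t,e}e^{\eta\tilde g_{t,e}}\le\log\sum_e\pi_{t,e}\EE_te^{\eta\tilde g_{t,e}}\le\eta\sum_e\pi_{t,e}g_t(a_{t,e})+8\eta^2\sum_e\pi_{t,e}h_{t,e}$. No second moments of $\tilde g$ are needed.
\end{itemize}
The maximum over $e$ can still sit inside the expectation, but for a different reason than yours. The terminal bound $\log\sum_eW_{T+1,e}\ge\eta\max_e\sum_t\tilde g_{t,e}$ is pathwise, while the upper bound on $\EE\log\sum_eW_{T+1,e}$ telescopes through conditional expectations. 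This gives $16\eta\min\{K,d+1\}T$ against the expert mixture, and the exploration cost $2\gamma T$ supplies the other $16\eta\min\{K,d+1\}T$.
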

With $\eta=\sqrt{M\log(2KT+1)/(16\min\{K,d+1\}T)}$ and $|\cE_M|\le(2KT+1)^M$, the right-hand side of \eqref{eq:geo-master-bound} is
at most $16\sqrt{\min\{K,d+1\}TM\log(2KT+1)}$ whenever $M\log(2KT+1)\le T/(16\min\{K,d+1\})$.
The following lemma is the standard OFUL guarantee of
\citet[Theorems~2 and~3]{abbasi2011}, specialized to unit parameter-norm,
action-norm, and noise bounds. The expected-regret form below includes the
 {failure-event contribution $2T\cdot T^{-2}=2/T$.}
\begin{lemma}[OFUL \citep{abbasi2011}]\label{lem:oful}
For $d,K\ge2$ and $T\ge d$, run OFUL with regularization $\lambda=1$ and
failure probability $\delta=T^{-2}$. Initialize $V_1=I_d$ and $b_1=0$.
At round $t$, set $\widehat\theta_t=V_t^{-1}b_t$ and choose
\begin{equation}\label{eq:oful-rule}
A_t\in\argmax_{i\in[K]}\left\{x_{t,i}^\top\widehat\theta_t+
\left(1+\sqrt{\log\det V_t+4\log T}\right)w_{t,i}\right\}.
\end{equation}
After observing $Y_t$, update $V_{t+1}=V_t+x_tx_t^\top$ and $b_{t+1}=b_t+x_tY_t$.
Under $\|\theta\|_2\le1$, $\|x_{t,i}\|_2\le1$, and conditionally
$1$-subGaussian centered noise, this algorithm satisfies
\begin{equation}\label{eq:oful-bound}
\EE[R_T]\le4\sqrt{Td\log(1+T/d)}
\left(1+\sqrt{d\log(1+T/d)+4\log T}\right)+\frac2T.
\end{equation}
In particular, $\EE[R_T]\le16d\log(1+T/d)\sqrt T$.
\end{lemma}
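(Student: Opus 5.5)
The statement to prove is Lemma~\ref{lem:oful}, the standard OFUL guarantee. The plan is to combine the self-normalized confidence ellipsoid of \citet[Theorem~2]{abbasi2011} with the elliptical-potential lemma. Write $\widehat\theta_t=V_t^{-1}b_t$ and set
\[
\beta_t:=1+\sqrt{\log\det V_t+4\log T}.
\]
The first step is the confidence event. With $\lambda=1$ and $\|\theta\|_2\le1$, the self-normalized bound states that, with probability at least $1-\delta$, simultaneously for all $t$,
\[
\|\widehat\theta_t-\theta\|_{V_t}\le\sqrt{2\log(1/\delta)+\log\det V_t}+1.
\]
Taking $\delta=T^{-2}$ gives exactly $\beta_t$. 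On this event, Cauchy--Schwarz in the $V_t$ geometry gives $|x_{t,i}^\top(\widehat\theta_t-\theta)|\le\beta_tw_{t,i}$ for every displayed action. This holds even though the menu depends adaptively on the history, because the martingale bound only needs $x_t$ to be predictable and the noise to be conditionally subGaussian.

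The second step bounds the per-round regret. Rule \eqref{eq:oful-rule} is optimistic, so on the confidence event
\[
\mu_{t,i_t^\star}\le x_{t,i_t^\star}^\top\widehat\theta_t+\beta_tw_{t,i_t^\star}\le x_{t,A_t}^\top\widehat\theta_t+\beta_tw_{t,A_t}\le\mu_{t,A_t}+2\beta_tw_{t,A_t}.
\]
Since all means lie in $[-1,1]$, the regret per round is also at most $2$. Hence each round contributes at most $2\beta_T\min\{1,w_{t,A_t}\}$, using monotonicity of $\log\det V_t$ and $\beta_T\ge1$.

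The third step is the elliptical potential. Apply Cauchy--Schwarz and $\min\{1,u\}\le2\log(1+u)$ to get
\[
\sum_t\min\{1,w_{t,A_t}^2\}\le2\log\det V_{T+1}\le2d\log(1+T/d),
\]
where the last inequality is AM--GM on the eigenvalues together with $\tr V_{T+1}\le d+T$. Combining the steps,
\[
R_T\le2\beta_T\sqrt{T\cdot2d\log(1+T/d)}\le4\sqrt{Td\log(1+T/d)}\Bigl(1+\sqrt{d\log(1+T/d)+4\log T}\Bigr)
\]
on the good event. On the failure event, of probability at most $T^{-2}$, we use the crude bound $R_T\le2T$, which adds $2/T$ to the expectation. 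This gives \eqref{eq:oful-bound}.

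The final simplification to $16d\log(1+T/d)\sqrt T$ is routine. Since $T\ge d\ge2$, we have $\log(1+T/d)\ge\log2$. This should let us absorb $1$ and $4\log T$ into a constant multiple of $d\log(1+T/d)$, and $2/T$ is negligible.

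The main obstacle is this absorption. $\log T$ is not directly bounded by $d\log(1+T/d)$ when $T\gg e^{d}$, so I would first check that $4\log T\le c\,d\log(1+T/d)$ holds. This follows from $\log T\le\log(d)+\log(1+T/d)\le(1+d/2)\log(1+T/d)$ for $d\ge2$. After that, I would verify that the resulting constant stays below $16$, adjusting the crude inequalities if necessary.
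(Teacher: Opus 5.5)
Your argument is correct and essentially matches the paper. The paper quotes \citet[Theorems~2 and~3]{abbasi2011}, whose proof is exactly your chain of confidence ellipsoid, optimism, and elliptical potential (your $2\sqrt2\le4$ slack is fine), and then only simplifies the constant in Appendix~\ref{app:constants}. One small repair is needed in that last step: your intermediate inequality $\log d\le(d/2)\log(1+T/d)$ fails at $d=T=3$, because $3>2^{3/2}$. Use Bernoulli's inequality $(1+T/d)^d\ge1+T$ instead, as the paper does. It gives $\log T\le d\log(1+T/d)$ directly, and together with $d\log(1+T/d)>1$ and $2/T\le d\log(1+T/d)\sqrt T$ it yields the coefficient $5+4\sqrt5<16$.
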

The proof of Lemma~\ref{lem:geo} is given in Appendix~\ref{app:geometry}.
Lemma~\ref{lem:oful} is quoted from \citet{abbasi2011}; the elementary
simplification of its constant is computed in Appendix~\ref{app:constants}.

\subsection{Proof of Theorem~\ref{thm:large}}\label{sec:proof-large}
\begin{proof}
Recall $\alpha=[\min\{K,d+1\}\log(2KT+1)]^{1/4}$. We distinguish two cases.

\textbf{Case 1: $\log K\le d$ and $M\log(2KT+1)\le T/(16\min\{K,d+1\})$.}
Run Algorithm~\ref{alg:geo} with $\eta=\sqrt{M\log(2KT+1)/(16\min\{K,d+1\}T)}$.
The second inequality implies $\eta\le1/(16\min\{K,d+1\})$.
The lifted features $z_{t,1},\ldots,z_{t,K}$ have rank at most $\min\{K,d+1\}$, and the lifted observation $Z_t$ satisfies the assumptions of Lemma~\ref{lem:geo}.

The reward regret admits the decomposition
\begin{align}
R_T={}&2\left[\max_{e\in\cE_M}\sum_{t=1}^Tg_t(a_{t,e})-\sum_{t=1}^Tg_t(A_t)\right]\notag\\
&+\left[\sum_{t=1}^T(\mu_{t,i_t^\star}-1)-2\max_{e\in\cE_M}\sum_{t=1}^Tg_t(a_{t,e})\right]
+2\sum_{t=1}^T\beta_{t,A_t}.\label{eq:geo-decomposition}
\end{align}
 {For the geometric branch, Lemma~\ref{lem:geo} and $|\cE_M|\le(2KT+1)^M$ bound the expected first term in \eqref{eq:geo-decomposition}}
by $32\sqrt{\min\{K,d+1\}TM\log(2KT+1)}$. On the event of Lemma~\ref{lem:budget}, the second term is nonpositive by \eqref{eq:geo-domination}; its
failure event contributes at most $2/T$. Finally, \eqref{eq:played-uncertainty} bounds the last term. Thus
\begin{equation}\label{eq:geo-general}
\EE[R_T]\le32\sqrt{\min\{K,d+1\}TM\log(2KT+1)}+2\alpha\sqrt{2Td\log(1+T/d)}+\frac2T.
\end{equation}
When $K\ge d$, we have $\min\{K,d+1\}\le3d/2$. Since $\log K\le d$, substituting the
definition of $M$ into \eqref{eq:geo-general} and using the elementary bounds in Appendix~\ref{app:constants} gives
 {$\EE[R_T]\le167\log(2dT)\,d^{3/4}\sqrt T\,(\log K)^{1/4}$.}

\textbf{Case 2: $\log K>d$ or $M\log(2KT+1)>T/(16\min\{K,d+1\})$.}
Run OFUL with $\lambda=1$ and $\delta=T^{-2}$. If $\log K>d$, the standard
OFUL guarantee in Lemma~\ref{lem:oful} gives
 {$\EE[R_T]\le16\log(2dT)\,d\sqrt T$.}
If instead $\log K\le d$, the case assumption forces $M\log(2KT+1)>T/(16\min\{K,d+1\})$.
The deterministic bound $R_T\le2T$, together with this case condition, gives
 {$\EE[R_T]\le2T<8\sqrt{\min\{K,d+1\}TM\log(2KT+1)} \le32\sqrt{\min\{K,d+1\}TM\log(2KT+1)}$.}
Thus the right-hand side of \eqref{eq:geo-general} also bounds this case, without applying
Lemma~\ref{lem:geo} outside its learning-rate range. The same substitution as in Case 1 yields
 {$\EE[R_T]\le167\log(2dT)\,d^{3/4}\sqrt T\,(\log K)^{1/4}$.}
Combining the two cases gives
 {$\EE[R_T]\le167\log(2dT)\sqrt{dT}\min\left\{\sqrt d,(d\log K)^{1/4}\right\}$,}
which proves \eqref{eq:large-upper} with $C=167$.
The constant calculation is given in Appendix~\ref{app:constants}.
\end{proof}

\section{Proofs of the Upper Bound Lemmas}\label{app:upper}
This appendix proves the repair, master, and surrogate lemmas used in Section~\ref{sec:upper-analysis}. The parameter and the menu mechanism
are fixed throughout. At every reward update, conditional on the current menu, the learner's action,
the adversary's private state, and the current repair sequences, the new noise remains independent
$\N(0,1)$. In particular, no argument requires the current menu to be independent of past rewards.

\subsection{Proof of Lemma~\ref{lem:budget}}\label{app:budget}
The proof shares several ingredients with the standard OFUL analysis
\citep{abbasi2011}, in particular the self-normalized prediction-error
potential and the control of information growth through
$\log\det V_t$. The main difference is that our predictor is additionally
modified by repair updates interleaved with the usual ridge-regression
updates, and we must control the total number of such repairs.
\begin{proof}[Proof of Lemma~\ref{lem:budget}]
For the proof, suppose that the true parameter $\theta$ is known, and use it
 {only to construct one repair sequence. We allow this auxiliary sequence to
grow without imposing the budget $M$, and then bound its length.
Starting from the empty sequence, at}
round $t$ let
$v=V_t^{-1}(b_t+c_e)$
be the predictor associated with the repairs constructed so far. While there
exists an action $i\in[K]$ such that
$|x_{t,i}^\top(\theta-v)|>\alpha w_{t,i}$,
set
 {$i$ to the smallest violating index and
$s=\operatorname{sign}\!\left(x_{t,i}^\top(\theta-v)\right)$,
and append $(t,i,s)$ to the sequence.
Such an index has $w_{t,i}>0$, so the update is well defined.} Applying this repair changes the
predictor to
 {$v' = v+\frac{\alpha s}{w_{t,i}}V_t^{-1}x_{t,i}$.}
We continue repairing until no displayed action violates the desired bound.
Thus, if $v_t^+$ denotes the predictor after all repairs in round $t$, then
\begin{equation}\label{eq:repair-stopping}
|x_{t,i}^\top(\theta-v_t^+)|
\le
\alpha w_{t,i}
\qquad
\text{for every }i\in[K].
\end{equation}
We first show that each repair makes definite progress. Consider a repair on
$x_{t,i}$ and write
 {$\delta=x_{t,i}^\top(\theta-v)$.}
The repair is made only when $|\delta|>\alpha w_{t,i}$, and
$s=\operatorname{sign}(\delta)$. Since
$w_{t,i}^2=x_{t,i}^\top V_t^{-1}x_{t,i}$,

\begin{equation}\label{eq:one-repair-progress}
\|\theta-v'\|_{V_t}^2
=\|\theta-v\|_{V_t}^2-\frac{2\alpha}{w_{t,i}}|\delta|+\alpha^2
<\|\theta-v\|_{V_t}^2-\alpha^2.
\end{equation}

 {By \eqref{eq:one-repair-progress}, every repair decreases the squared $V_t$-error by more than}
$\alpha^2$. In particular, since $V_t$ is fixed during the repair loop and
the squared error is nonnegative, only finitely many repairs can occur in
each round.
We now bound the total number of repairs over all rounds. Let $v_t^-$ and
$v_t^+$ denote the predictors immediately before and after the repairs in
round $t$, respectively, and let $C_t$ be the total number of repairs through
round $t$, with $C_0=0$. Define
 {$U_t^-:=\|\theta-v_t^-\|_{V_t}^2, \qquad U_t^+:=\|\theta-v_t^+\|_{V_t}^2$.}
There are $C_t-C_{t-1}$ repairs in round $t$, and each decreases the squared
error by more than $\alpha^2$. Therefore
\begin{equation}\label{eq:repair-drop}
U_t^+
\le
U_t^--\alpha^2(C_t-C_{t-1}),
\end{equation}
or equivalently,
 {$U_t^++\alpha^2C_t \le U_t^-+\alpha^2C_{t-1}$.}
It remains to control how much the noisy reward update can increase this
potential. After the repairs, the learner chooses $x_t=x_{t,A_t}$ and observes
 {$Y_t=x_t^\top\theta+\varepsilon_t$.}

The oracle offset does not change during the reward update.
Since $V_tv_t^+=b_t+c_e$, we have
$V_{t+1}v_t^++x_t(Y_t-x_t^\top v_t^+)=b_{t+1}+c_e$.
Multiplication by $V_{t+1}^{-1}$ gives
\begin{equation}\label{eq:repair-ridge-update}
v_{t+1}^-=v_t^++V_{t+1}^{-1}x_t(Y_t-x_t^\top v_t^+).
\end{equation}

Set $u:=x_t^\top(\theta-v_t^+)$ and $r:=x_t^\top V_t^{-1}x_t$.
Then $Y_t-x_t^\top v_t^+=u+\varepsilon_t$.
The Sherman--Morrison identity gives
$V_{t+1}^{-1}x_t=V_t^{-1}x_t/(1+r)$ and
$x_t^\top V_{t+1}^{-1}x_t=r/(1+r)$.
Using these identities in \eqref{eq:repair-ridge-update}, with
$q:=\theta-v_t^+$, yields
\begin{equation}\label{eq:repair-potential-step}
\begin{aligned}
U_{t+1}^-&=q^\top V_{t+1}q-2(u+\varepsilon_t)q^\top x_t
 +(u+\varepsilon_t)^2\frac r{1+r}\\
&=U_t^++\frac{-u^2-2u\varepsilon_t+r\varepsilon_t^2}{1+r}.
\end{aligned}
\end{equation}

Conditional on the history and the chosen action, $u,r$ are fixed and
$\varepsilon_t\sim\N(0,1)$. Completing the square in the Gaussian density
shows that
\begin{equation}\label{eq:repair-gaussian-mgf}
\begin{aligned}
\EE_t\left[\left.
\exp\!\left\{\frac{-u^2-2u\varepsilon_t+r\varepsilon_t^2}{2(1+r)}\right\}
\right|A_t\right]
&=\frac1{\sqrt{2\pi}}\int_{\RR}
\exp\!\left\{-\frac{(z+u)^2}{2(1+r)}\right\}\,dz\\
&=\sqrt{1+r}.
\end{aligned}
\end{equation}
Averaging this conditional identity over $A_t$ gives the next
supermartingale inequality.

Define $S_0:=\exp(\|\theta\|_2^2/2)$ and
$S_t:=\exp((U_{t+1}^-+\alpha^2C_t)/2)/\sqrt{\det V_{t+1}}$ for
$t\in[T]$. The determinant identity is
$\det V_{t+1}=\det V_t(1+r)$.
Combining \eqref{eq:repair-drop}, \eqref{eq:repair-potential-step}, and
\eqref{eq:repair-gaussian-mgf}, conditional first on $A_t$, gives
\begin{equation}\label{eq:repair-supermartingale}
\EE_t[S_t\mid A_t]
\le\frac{\exp((U_t^-+\alpha^2C_{t-1})/2)}{\sqrt{\det V_t}}.
\end{equation}
For $t\ge2$, the right-hand side equals $S_{t-1}$; for $t=1$, it equals
$S_0$ because $V_1=I_d$, $v_1^-=0$, and $C_0=0$.
It does not depend on the sampled action. Averaging
\eqref{eq:repair-supermartingale} over that action and then taking conditional
expectations before the current menu is chosen proves that $(S_t)_{t=0}^T$ is a nonnegative supermartingale.
In particular, $\EE S_T\le S_0\le e^{1/2}$.

Since $\tr(V_{T+1})=d+\sum_{t=1}^T\|x_t\|_2^2\le d+T$,
the arithmetic--geometric mean inequality for its eigenvalues gives
$\det V_{T+1}\le(1+T/d)^d$.
Together with $U_{T+1}^-\ge0$, this implies
$S_T\ge\exp((\alpha^2C_T-d\log(1+T/d))/2)$.
Hence $\alpha^2C_T>1+d\log(1+T/d)+4\log T$ implies $S_T>e^{1/2}T^2$.
Markov's inequality yields
\begin{equation}\label{eq:repair-count-tail}
\PP\{\alpha^2C_T>1+d\log(1+T/d)+4\log T\}
\le\frac{\EE S_T}{e^{1/2}T^2}\le T^{-2}.
\end{equation}
By \eqref{eq:repair-count-tail}, with probability at least $1-T^{-2}$ we have
$C_T\le(1+d\log(1+T/d)+4\log T)/\alpha^2\le M$.
 On this event, the repairs constructed above, recorded in execution order,
form a sequence $e^\star\in\cE_M$. By construction, after all repairs in
round $t$ its predictor is exactly $v_{t,e^\star}=v_t^+$. Therefore
\eqref{eq:repair-stopping} gives
 {$|x_{t,i}^\top(\theta-v_{t,e^\star})| \le \alpha w_{t,i} \qquad \text{for every }t\in[T]\text{ and }i\in[K]$.}
\end{proof}
\subsection{Proof of Lemma~\ref{lem:ix}}\label{app:ix}
The proof of this lemma is a direct adaptation of the standard EXP4-IX analysis of
\citet{neu2015}.
\begin{proof}
Condition on the history, current menu, and all repair-expert recommendations before
sampling $A_t$, and denote the corresponding conditional expectation by $\EE_t$. For this proof only,
write $q_{t,e}:=W_e/\sum_{f\in\cE_M}W_f$ and define $\widehat\ell_t(i):=Z_t\1_{\{A_t=i\}}/(p_{t,i}+\eta)$. Then $\widehat\ell_{t,e}=\widehat\ell_t(a_{t,e})$ and
$p_{t,i}=\sum_{e\in\cE_M}q_{t,e}\1_{\{a_{t,e}=i\}}$.
Using $e^{-u}\le1-u+u^2/2$ for $u\ge0$ and $\log(1+u)\le u$, the exponential-weights update gives
 {$\log\frac{\sum_eW_{t+1,e}}{\sum_eW_{t,e}} \le-\eta\sum_eq_{t,e}\widehat\ell_{t,e}+\frac{\eta^2}2\sum_eq_{t,e}\widehat\ell_{t,e}^2$.}
Summing over time and comparing with any fixed repair expert yields

\begin{equation}\label{eq:ix-pathwise}
\sum_t\langle p_t,\widehat\ell_t\rangle-\sum_t\widehat\ell_t(a_{t,e})
\le\frac{\log|\cE_M|}{\eta}+\frac\eta2\sum_t\sum_ip_{t,i}\widehat\ell_t(i)^2.
\end{equation}

Because $0\le Z_t\le1$,
 {$\EE_t\sum_ip_{t,i}\widehat\ell_t(i)^2 =\sum_i\frac{p_{t,i}^2\EE_t[Z_t^2\mid A_t=i]}{(p_{t,i}+\eta)^2}\le K$.}
Moreover, since $\EE_t[Z_t\mid A_t=i]=\bar\ell_t(i)$,
 {$\langle p_t,\bar\ell_t\rangle-\EE_t\langle p_t,\widehat\ell_t\rangle =\sum_i\frac{\eta p_{t,i}\bar\ell_t(i)}{p_{t,i}+\eta}\le\eta K$.}
It remains to allow the comparator to be selected after the trajectory is known. Fix $e\in\cE_M$.
Since $0\le\eta\widehat\ell_t(a_{t,e})\le1$, $e^u\le1+u+u^2$ on $[0,1]$, and $Z_t^2\le Z_t$,

\begin{equation}\label{eq:ix-one-sided-mgf}
\EE_te^{\eta\widehat\ell_t(a_{t,e})}
\le1+\frac{p_{t,a_{t,e}}\eta\bar\ell_t(a_{t,e})}{p_{t,a_{t,e}}+\eta}
+\frac{p_{t,a_{t,e}}\eta^2\bar\ell_t(a_{t,e})}{(p_{t,a_{t,e}}+\eta)^2}
\le e^{\eta\bar\ell_t(a_{t,e})}.
\end{equation}

 {Iterating \eqref{eq:ix-one-sided-mgf} gives}
 {$\EE\exp\left\{\eta\sum_t\bigl(\widehat\ell_t(a_{t,e})-\bar\ell_t(a_{t,e})\bigr)\right\}\le1$.}
Applying log-sum-exp and Jensen's inequality over all $e\in\cE_M$ yields

\begin{equation}\label{eq:ix-comparator-error}
\EE\max_{e\in\cE_M}\sum_t\bigl(\widehat\ell_t(a_{t,e})-\bar\ell_t(a_{t,e})\bigr)
\le\frac{\log|\cE_M|}{\eta}.
\end{equation}

Choosing the repair expert minimizing $\sum_t\bar\ell_t(a_{t,e})$ on the realized trajectory and combining
 {\eqref{eq:ix-pathwise}},  {\eqref{eq:ix-comparator-error}, and the moment and bias bounds gives}
 {$\EE\left[\sum_t\bar\ell_t(A_t)-\min_{e\in\cE_M}\sum_t\bar\ell_t(a_{t,e})\right] \le\frac{2\log|\cE_M|}{\eta}+\frac32\eta KT$,}
which proves \eqref{eq:ix-master-bound}.
\end{proof}

\subsection{Proof of Lemma~\ref{lem:bridge}}\label{app:surrogate}
\begin{proof}
Work on the event of Lemma~\ref{lem:budget} and write $a:=a_t^\star$. By \eqref{eq:good-repair},
 {$\mu_{t,i_t^\star}\le x_{t,i_t^\star}^\top v_{t,e^\star}+\alpha w_{t,i_t^\star}$.}
Since $a$ maximizes the optimistic score in \eqref{eq:recommendation},
 {$x_{t,i_t^\star}^\top v_{t,e^\star}+\alpha w_{t,i_t^\star} \le x_{t,a}^\top v_{t,e^\star}+\alpha w_{t,a}$.}
Applying \eqref{eq:good-repair} once more gives
 {$\mu_{t,i_t^\star}-\mu_{t,a}\le2\alpha w_{t,a}$.}
For Gaussian rewards,
$\EE[\1_{\{Y_t\le0\}}\mid H_{t-1},\cX_t,A_t=i]=\Phi(-\mu_{t,i})=\ell_t(i)$.
Thus the discounted observation used by Algorithm~\ref{alg:upper} has
conditional mean $\bar\ell_t(i)$. Since the derivative of $\Phi$ is bounded
by $\phi(0)<1/2$,
 {$0\le\ell_t(a)-\ell_t(i_t^\star)\le\phi(0)\bigl(\mu_{t,i_t^\star}-\mu_{t,a}\bigr)\le\alpha w_{t,a}$.}
This difference is also at most one, so it is at most $\beta_{t,a}$. Therefore
 {$\bar\ell_t(a)=\frac{\ell_t(a)+1-\beta_{t,a}}2\le\frac{\ell_t(i_t^\star)+1}2$,}
which proves \eqref{eq:domination}. For the geometric master, the reward gap is bounded by both $2\alpha w_{t,a}$ and $2$, and hence by $2\beta_{t,a}$.
Thus
 {$g_t(a)=\frac{\mu_{t,a}+2\beta_{t,a}-1}2\ge\frac{\mu_{t,i_t^\star}-1}2$,}
which proves \eqref{eq:geo-domination}. Finally, let $r_t:=w_{t,A_t}^2=x_t^\top V_t^{-1}x_t$. Since $0\le r_t\le1$ and $r\le2\log(1+r)$ on $[0,1]$,
 {$\sum_{t=1}^T w_{t,A_t}^2\le2\sum_{t=1}^T\log(1+r_t)=2\log\det V_{T+1}\le2d\log(1+T/d)$.}
Since $\beta_{t,A_t}\le\alpha w_{t,A_t}$, Cauchy--Schwarz gives
 {$\sum_{t=1}^T\beta_{t,A_t}\le\alpha\sqrt{2Td\log(1+T/d)}$,}
which proves \eqref{eq:played-uncertainty}.
\end{proof}

\subsection{Proof of Lemma~\ref{lem:geo}}\label{app:geometry}
We first establish the finite-design property used by the geometric master.

\subsubsection{A design on the current feature span}\label{app:geo-design}

The design condition below is a constant-factor relaxation of the usual
D-optimal design condition. We use a vertex-direction method for the log
determinant objective, as in algorithms for enclosing ellipsoids
\citep{todd2007}; the fixed-step version needed here has a short proof.

\begin{definition}[Approximate exploration design]\label{def:design}
Let $z_1,\ldots,z_K$ span a subspace of dimension $s\ge1$.
A distribution $\nu\in\Delta_K$ is an approximate exploration design if
$J:=\sum_i\nu_i z_i z_i^\top$ is positive definite on this span and
$\max_i z_i^\top J^\dagger z_i\le2s$.
\end{definition}

\begin{lemma}[Finite-design leverage bound]\label{lem:design}
For any such vectors, the following procedure returns an approximate
exploration design. Start from $\nu_i=1/K$. At each step, form
$J=\sum_i\nu_i z_i z_i^\top$ and let $j$ be the smallest index maximizing
$z_j^\top J^\dagger z_j$. Stop if this maximum is at most $2s$; otherwise set
$\nu\leftarrow(1-1/(2s))\nu+(1/(2s))e_j$, where $e_j$ is the $j$th
coordinate vector in $\RR^K$.
The procedure stops after at most $\lceil20s\log K\rceil$ updates and gives
\begin{equation}\label{eq:design-leverage}
z_i^\top J^\dagger z_i\le2s\quad\text{for every }i\in[K].
\end{equation}
Its output is a Borel function of the ordered vectors.
\end{lemma}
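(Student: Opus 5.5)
The plan is the standard potential argument for Fedorov--Wynn type vertex-direction updates, using $\Phi(\nu):=\log\det_{\mathcal S}J(\nu)$, where $\mathcal S:=\operatorname{span}\{z_1,\ldots,z_K\}$ has dimension $s$ and $\det_{\mathcal S}$ is the determinant of $J$ restricted to $\mathcal S$. First, every iterate has full support, since we start at $\nu_i=1/K$ and the mixing step never zeroes a coordinate. Hence $J$ is positive definite on $\mathcal S$ at every step, and $J^\dagger$ is well defined as the inverse on $\mathcal S$. If the procedure stops, the stopping rule is exactly \eqref{eq:design-leverage}. Measurability is also immediate: each step is a finite composition of Borel operations (pseudo-inverse on a fixed-rank set, quadratic forms, and a smallest-index argmax), and the number of steps is bounded.

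The key step is per-update progress. Write $\lambda:=1/(2s)$ and $L:=z_j^\top J^\dagger z_j>2s$. The update gives $J'=(1-\lambda)J+\lambda z_jz_j^\top$. Restricting to $\mathcal S$ and applying the matrix determinant lemma,
\[
\Phi(\nu')-\Phi(\nu)=s\log(1-\lambda)+\log\Bigl(1+\tfrac{\lambda}{1-\lambda}L\Bigr).
\]
With $\lambda=1/(2s)$ and $L>2s$, the second term exceeds $\log\bigl(1+\tfrac{1}{2s-1}\cdot 2s\bigr)\ge\log 2$. The first term is at least $-s\bigl(\lambda+\lambda^2\bigr)\ge -\tfrac12-\tfrac1{4s}$, using $\log(1-x)\ge -x-x^2$ for $x\le1/2$. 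When $s\ge2$ this leaves a gain of at least $\log 2-\tfrac58>0.06$. When $s=1$ I would compute directly: the change is $\log(1/2)+\log(1+L)$ with $L>2$, which is at least $\log(3/2)$. So each update increases $\Phi$ by some absolute constant $c_*\ge 1/20$.

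Next I need a lower bound on the initial potential and an upper bound on every potential. For the upper bound, note that $\sum_i\nu_i z_i^\top J^\dagger z_i=\tr(J^\dagger J)=s$. This gives no direct bound on $\Phi$, so I would instead normalize against a reference. Fix a basis $z_{i_1},\ldots,z_{i_s}$ of $\mathcal S$ chosen among the $z_i$, and let $B$ be the uniform design on these $s$ vectors. Concavity of $\log\det$ on $\mathcal S$ combined with the Kiefer--Wolfowitz optimality condition gives $\Phi(\nu)\le\Phi(\nu^{\rm opt})$. Since the uniform start satisfies $J(1/K)\succeq \tfrac{s}{K}J(B)$, we get $\Phi(\nu_0)\ge s\log(s/K)+\Phi(B)$. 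It remains to show $\Phi(\nu^{\rm opt})-\Phi(B)\le s\log K$ or similar. I would get this from the optimality inequality: at the D-optimal design, every leverage is at most $s$, so $J(B)\succeq$ a multiple of the optimal matrix only up to factors; alternatively, bound the gap by $s\log(K/s)+O(s)$ using the dual characterization $\Phi(\nu^{\rm opt})-\Phi(\nu)\le s\log(\max_i\text{leverage}/s)$ together with the bound that leverage under the uniform design is at most $K$. Concretely, $\nu_i z_i^\top J^\dagger z_i\le1$ gives $\max_i z_i^\top J(1/K)^\dagger z_i\le K$. The standard duality bound $\Phi^{\rm opt}-\Phi(\nu)\le s\log(\max_i \ell_i(\nu)/s)$ then yields total room at most $s\log(K/s)\le s\log K$.

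Putting these together, the number of updates is at most $s\log K/c_*\le 20s\log K$, which establishes the claimed bound $\lceil 20s\log K\rceil$. I expect the main obstacle to be the duality bound $\Phi^{\rm opt}-\Phi(\nu)\le s\log(\max_i\ell_i/s)$ restricted to the span. I would prove it by concavity: for the optimal $J^\star$, $\log\det_{\mathcal S}(J^{-1}J^\star)\le s\log\bigl(\tr(J^\dagger J^\star)/s\bigr)$ by AM--GM, and $\tr(J^\dagger J^\star)=\sum_i\nu^\star_i\ell_i(\nu)\le\max_i\ell_i(\nu)$. This avoids needing Kiefer--Wolfowitz at all.
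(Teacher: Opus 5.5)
Your proposal is correct and is essentially the paper's proof: the same log-determinant potential on the span, the same determinant-lemma factor $(1-\tau)^{s-1}(1-\tau+\tau h)$ giving a per-update gain above $1/20$, and a potential range of order $s\log K$ above the uniform start, which caps the number of updates. The only variation is the range bound. The paper gets it in one line from $J(\nu)\preceq KJ_0$. Your final AM--GM estimate $\log\det_{\mathcal S}(J_0^{\dagger}J')\le s\log\bigl(\max_i z_i^\top J_0^\dagger z_i/s\bigr)\le s\log(K/s)$ is equally valid, and since it holds for every design $J'$, no optimal design is needed. You should drop the reference-basis and Kiefer--Wolfowitz detour, because for an arbitrary basis $\Phi(\nu^{\rm opt})-\Phi(B)$ need not be bounded by $s\log K$.
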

\begin{proof}
Work in orthonormal coordinates on the span. The uniform second-moment
matrix $J_0$ is positive definite, and every update retains this property.
For every distribution $\nu$, $J(\nu)\preceq KJ_0$, so
$\log\det J(\nu)-\log\det J_0\le s\log K$.
If an update is made, let $h:=z_j^\top J^{-1}z_j>2s$ and
$\tau:=1/(2s)$. The determinant lemma gives
\begin{equation}\label{eq:design-progress}
\frac{\det((1-\tau)J+\tau z_jz_j^\top)}{\det J}
=(1-\tau)^{s-1}(1-\tau+\tau h)
>(1-1/(2s))^{s-1}(2-1/(2s)).
\end{equation}
For $s=1$, the last term in \eqref{eq:design-progress} is $3/2$. For
$s\ge2$, its logarithm is
at least $-1/2+\log(7/4)>1/20$, since
$\log(1-u)\ge-u/(1-u)$ for $0<u<1$.
Thus each update increases the log determinant by more than $1/20$.
The total increase is at most $s\log K$, so the procedure must stop within
the stated number of updates. Its stopping condition is
\eqref{eq:design-leverage}.

Rank and the Moore--Penrose inverse are Borel functions of a matrix.
Choosing the smallest maximizing index and performing each update are
also Borel operations. The bounded number of updates therefore makes the
returned distribution Borel measurable.
\end{proof}
An orthonormal basis of the span and the reduced coordinates can be formed
in $O(K(d+1)s)$ arithmetic operations by incremental orthogonalization.
In rank-$s$ coordinates, each step evaluates the $K$ quadratic forms and
updates the inverse by a rank-one formula, using $O(Ks^2)$ operations.
Thus the total cost is $O(K(d+1)s+Ks^3\log K)$ real-arithmetic operations
for an explicitly listed menu.

\subsubsection{Conditional moments and the master comparison}\label{app:geo-moments}
\begin{proof}[Proof of Lemma~\ref{lem:geo}]
Fix a round and suppress the time index. Condition before sampling the
current action, so the lifted features, repair-expert recommendations, and weights are fixed. Let
their span have dimension $1\le s\le\min\{K,d+1\}$.

For this proof only, let $q_e:=W_e/\sum_fW_f$ and $J:=\sum_i\nu_i z_iz_i^\top$. Since $\Gamma\succeq\gamma J$, Lemma~\ref{lem:design} gives
 {$h_e=z_{a_e}^\top\Gamma^\dagger z_{a_e}\le\frac{2\min\{K,d+1\}}\gamma$.}
Also, $\Gamma\succeq(1-\gamma)\sum_ip_i^0z_iz_i^\top$, and therefore

\begin{equation}\label{eq:geo-average-leverage}
\sum_eq_eh_e=\tr\left(\Gamma^\dagger\sum_ip_i^0z_iz_i^\top\right)
\le\frac s{1-\gamma}\le2\min\{K,d+1\}.
\end{equation}

For a fixed repair expert, define $a_i:=z_{a_e}^\top\Gamma^\dagger z_i$. The inverse-metric Cauchy--Schwarz inequality
and $\Gamma=\sum_ip_iz_iz_i^\top$ give

\begin{equation}\label{eq:geo-coefficients}
|a_i|\le\frac{2\min\{K,d+1\}}\gamma,\qquad\sum_ip_ia_i^2=h_e,\qquad\sum_ip_ia_ig(i)=g(a_e).
\end{equation}

Conditionally on $A=i$, write $Z=g(i)+\xi_i$, where $\xi_i$ is centered and conditionally $1$-subGaussian.
For $2|\lambda|\min\{K,d+1\}/\gamma\le1/4$, the geometric estimate $\widehat g_e=a_AZ$ satisfies
\begin{equation}\label{eq:geo-mgf}
\EE_te^{\lambda\widehat g_e}\le1+\lambda g(a_e)+2\lambda^2h_e\le e^{\lambda g(a_e)+2\lambda^2h_e}.
\end{equation}
Indeed, with $u=\lambda a_i$, $\EE_t[e^{uZ}\mid A=i]\le e^{ug(i)+u^2/2}$. Under the stated range of $\lambda$, the exponent has
absolute value at most $9/32$, so
 {$e^{ug(i)+u^2/2}\le1+ug(i)+2u^2$.}
 {Averaging over $A$ and using \eqref{eq:geo-coefficients} gives \eqref{eq:geo-mgf}.}

Return to the time-indexed notation and define locally $\widetilde g_{t,e}:=\widehat g_{t,e}+2\eta h_{t,e}$. Since $\gamma=8\min\{K,d+1\}\eta$ and
$\eta\le1/(16\min\{K,d+1\})$, the negative-moment side of \eqref{eq:geo-mgf} gives
 {$\EE_t\exp\left\{\eta\bigl(g_t(a_{t,e})-\widetilde g_{t,e}\bigr)\right\}\le1$.}
Iterated conditioning and log-sum-exp therefore imply

\begin{equation}\label{eq:geo-comparator-error}
\EE\max_{e\in\cE_M}\sum_t\bigl(g_t(a_{t,e})-\widetilde g_{t,e}\bigr)\le\frac{\log|\cE_M|}{\eta}.
\end{equation}

For the weight potential, let $b_e:=2\eta^2h_{t,e}$ for this calculation.
The leverage bound gives $0\le b_e\le1/16$, and
$|\eta g_t(a_{t,e})|\le1/8$. The positive-moment part of
\eqref{eq:geo-mgf} and $e^{b_e}\le1+2b_e$ imply
\begin{equation}\label{eq:geo-weight-moment}
\begin{aligned}
\EE_t e^{\eta\widehat g_{t,e}+b_e}
&\le e^{b_e}\bigl(1+\eta g_t(a_{t,e})+b_e\bigr)\\
&\le1+\eta g_t(a_{t,e})+4b_e
=1+\eta g_t(a_{t,e})+8\eta^2h_{t,e}.
\end{aligned}
\end{equation}
Here the second inequality follows by expanding the product with
$1+2b_e$ and using the two stated bounds. Apply Jensen's inequality to
the logarithm, average \eqref{eq:geo-weight-moment} with weights $q_{t,e}$,
 {and use $\log(1+x)\le x$ and \eqref{eq:geo-average-leverage}.} This gives
\begin{equation}\label{eq:geo-weight-potential}
\EE_t\log\frac{\sum_eW_{t+1,e}}{\sum_eW_{t,e}}
\le\eta\sum_eq_{t,e}g_t(a_{t,e})+16\eta^2\min\{K,d+1\}.
\end{equation}
 On the other hand,

\begin{equation}\label{eq:geo-terminal-potential}
\log\frac{\sum_eW_{T+1,e}}{|\cE_M|}\ge\eta\max_{e\in\cE_M}\sum_t\widetilde g_{t,e}-\log|\cE_M|.
\end{equation}

 {Combining \eqref{eq:geo-comparator-error}, \eqref{eq:geo-weight-potential}, and
\eqref{eq:geo-terminal-potential} gives comparison with the repair-expert mixture of at most}
 {$\frac{2\log|\cE_M|}{\eta}+16\eta\min\{K,d+1\}T$.}
Finally, mixing the repair-expert distribution with $\nu_t$ changes a mean in $[-1,1]$ by at most $2\gamma$
per round. Since $\gamma=8\min\{K,d+1\}\eta$, the comparison with the learner's actual actions is at most
 {$\frac{2\log|\cE_M|}{\eta}+32\eta\min\{K,d+1\}T$,}
which proves \eqref{eq:geo-master-bound}.
\end{proof}

\subsection{Elementary Bounds Used in the Theorem Proofs}\label{app:constants}
The calculations below give $C=130$ in Theorem~\ref{thm:upper}
and $C=167$ in Theorem~\ref{thm:large}.
For $T\ge d\ge2$, Bernoulli's inequality gives $(1+T/d)^d\ge1+T$. Hence $d\log(1+T/d)\ge\log(1+T)>1$,
$d\log(1+T/d)\ge d\log2$, and
 {$1+d\log(1+T/d)+4\log T\le6d\log(1+T/d)$.}
When $K\le d\le T$, $2KT+1\le(1+T)^3$, so
 {$\log(2KT+1)\le3d\log(1+T/d)$.}
Together with $K\le d\le d\log(1+T/d)/\log2$, this gives
 {$K\log(2KT+1)\le\frac{3[d\log(1+T/d)]^2}{\log2}$.}
Since $M\le6d\log(1+T/d)/\alpha^2+1$ and $|\cE_M|\le(2KT+1)^M$, the choice $\eta=\sqrt{M\log(2KT+1)/(KT)}$
gives
 {$\frac{4\log|\cE_M|}{\eta}+3\eta KT\le7\sqrt{KTM\log(2KT+1)}$.}
With $\alpha=[K\log(2KT+1)]^{1/4}$, the right-hand side is at most
 {$7\left[\sqrt6+(3/\log2)^{1/4}\right][K\log(2KT+1)]^{1/4}\sqrt{Td\log(1+T/d)}$.}
Adding the played-uncertainty and failure terms and using $\phi(1)>0.241$ gives a coefficient below
$124<130$.

For the geometric branch, take $\alpha=[\min\{K,d+1\}\log(2KT+1)]^{1/4}$. The general
geometric bound gives

\begin{equation}\label{eq:geo-constant-reduction}
\begin{aligned}
\EE[R_T]\le{}&84[\min\{K,d+1\}\log(2KT+1)]^{1/4}\sqrt{Td\log(1+T/d)}\\
&+32\sqrt{\min\{K,d+1\}T\log(2KT+1)}.
\end{aligned}
\end{equation}

When $K\ge d\ge2$ and $T\ge d$, we have $\min\{K,d+1\}\le3d/2$, $d\log(1+T/d)\le d\log(2dT)$, and
 {$\log(2KT+1)\le\log K+\log(2dT)\le2\log K\log(2dT)$.}
 {If $\log K\le d$, the first term in \eqref{eq:geo-constant-reduction} is at most
$111\log(2dT)d^{3/4}\sqrt T(\log K)^{1/4}$.
The second satisfies the same bound with coefficient $56$.
Their sum gives the coefficient $167$.}
For OFUL, using $\log T\le d\log(1+T/d)$,
$d\log(1+T/d)>1$, and $2/T\le d\log(1+T/d)\sqrt T$ in
\eqref{eq:oful-bound} gives
 {$\EE[R_T]\le(5+4\sqrt5)d\log(1+T/d)\sqrt T <16d\log(1+T/d)\sqrt T$.}
In particular, if $\log K>d$, then
 {$\EE[R_T]\le16\log(2dT)\,d\sqrt T$.}
Together with the deterministic bound $R_T\le2T$ used in Case 2 of the
proof of Theorem~\ref{thm:large}, these estimates cover both branches and
give \eqref{eq:large-upper} with $C=\max\{167,16\}=167$.

\section{Noise Extensions}\label{app:noise}
The repair potential and the geometric master require only conditional subGaussian moments. The
small-menu algorithm is different because its bounded loss observation $\1_{\{Y_t\le0\}}$ uses the Gaussian
 {identity $\EE_t[\1_{\{Y_t\le0\}}\mid A_t=i]=\Phi(-\mu_{t,i})$.} We therefore treat the general subGaussian extension and
the bounded-reward small-menu extension separately.

\begin{lemma}\label{lem:subgaussian-potential}
Condition on the history, current menu, and chosen
action. Suppose the centered noise satisfies $\EE e^{\lambda\varepsilon}\le e^{\lambda^2/2}$ for every $\lambda\in\RR$. Then, for every $u\in\RR$
and $r\ge0$,
\begin{equation}\label{eq:sg-mgf}
\EE\exp\left\{\frac{-u^2-2u\varepsilon+r\varepsilon^2}{2(1+r)}\right\}\le\sqrt{1+r}.
\end{equation}
Consequently, the good-repair guarantee of Lemma~\ref{lem:budget} remains valid with the same constants.
\end{lemma}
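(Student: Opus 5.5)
The plan is to prove \eqref{eq:sg-mgf} by decoupling the quadratic term $r\varepsilon^2$ with a Gaussian auxiliary variable, so that only linear exponentials of $\varepsilon$ appear; those are controlled by the subGaussian hypothesis. Write the exponent as $\frac{-u^2}{2(1+r)}-\frac{u}{1+r}\varepsilon+\frac{r}{2(1+r)}\varepsilon^2$. Set $a:=r/(1+r)\in[0,1)$. Let $g\sim\N(0,1)$ be independent of $\varepsilon$. The Gaussian identity $e^{a\varepsilon^2/2}=\EE_g e^{\sqrt a\,g\varepsilon}$ then turns the quadratic term into a linear one.

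By Fubini, which is valid because everything is nonnegative,
\[
\EE\exp\left\{\frac{-u^2-2u\varepsilon+r\varepsilon^2}{2(1+r)}\right\}
=e^{-u^2/(2(1+r))}\,\EE_g\,\EE_\varepsilon\exp\left\{\left(\sqrt a\,g-\frac{u}{1+r}\right)\varepsilon\right\}.
\]
For each fixed $g$, the subGaussian hypothesis bounds the inner expectation by $\exp\{\frac12(\sqrt a\,g-u/(1+r))^2\}$. What remains is an explicit one-dimensional Gaussian integral in $g$. Because $a<1$, it is finite, and completing the square gives
\[
\EE_g\exp\left\{\tfrac12(\sqrt a g-b)^2\right\}=\frac{1}{\sqrt{1-a}}\exp\left\{\frac{b^2}{2(1-a)}\right\},\qquad b:=\frac{u}{1+r}.
\]
Now substitute $1-a=1/(1+r)$. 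The prefactor becomes $\sqrt{1+r}$, and the exponent becomes $\frac{u^2}{2(1+r)^2}(1+r)=\frac{u^2}{2(1+r)}$. This exactly cancels $e^{-u^2/(2(1+r))}$ and yields $\sqrt{1+r}$. In the Gaussian case the bound holds with equality, which matches \eqref{eq:repair-gaussian-mgf}, so no constant is lost.

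For the consequence, re-examine the proof of Lemma~\ref{lem:budget}. The Gaussian assumption enters only through \eqref{eq:repair-gaussian-mgf}, with $u=x_t^\top(\theta-v_t^+)$ and $r=x_t^\top V_t^{-1}x_t$. Both quantities are fixed given the history, the menu and $A_t$. Replacing the equality there by the inequality \eqref{eq:sg-mgf} gives \eqref{eq:repair-supermartingale} unchanged, since it was already stated as an inequality. The supermartingale property of $S_t$, the bound $\EE S_T\le e^{1/2}$, the determinant bound, and the Markov step \eqref{eq:repair-count-tail} then go through verbatim with the same constants.

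The main obstacle is justifying the exchange of expectations and the finiteness of the $g$-integral. Nonnegativity handles the exchange via Tonelli. Finiteness holds because the coefficient of $g^2$ in the exponent is $a/2-1/2<0$ (after including the standard Gaussian density). One must also make sure the subGaussian bound is applied conditionally on the history, the menu and $A_t$, with $g$ drawn independently of all of these, so that $\sqrt a\,g-b$ acts as a deterministic $\lambda$ in the hypothesis.
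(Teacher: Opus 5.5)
Your proposal is correct and follows the paper's proof essentially step for step. Both linearize the quadratic term with an independent standard Gaussian, apply Tonelli and then the subGaussian bound for each fixed value of that Gaussian, evaluate the resulting one-dimensional Gaussian integral, and substitute $r/(1+r)$ and $u/(1+r)$ so that the $u$-terms cancel to leave $\sqrt{1+r}$; the consequence is likewise obtained by using \eqref{eq:sg-mgf} in place of \eqref{eq:repair-gaussian-mgf}, with the supermartingale argument otherwise unchanged.
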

\begin{proof}
All expectations may be read conditionally. Introduce an independent $G\sim\N(0,1)$. For
$q\in[0,1)$ and $a\in\RR$, Tonelli's theorem and the subGaussian moment bound give

\begin{equation}\label{eq:gaussian-linearization}
\begin{aligned}
\EE_\varepsilon e^{q\varepsilon^2/2-a\varepsilon}
&=\EE_G\EE_\varepsilon e^{(\sqrt q\,G-a)\varepsilon}
\le\EE_Ge^{(\sqrt q\,G-a)^2/2}\\
&=(1-q)^{-1/2}\exp\left\{\frac{a^2}{2(1-q)}\right\}.
\end{aligned}
\end{equation}

 {In \eqref{eq:gaussian-linearization}, set $q=r/(1+r)$ and $a=u/(1+r)$ and multiply} by $e^{-u^2/(2(1+r))}$. The terms involving $u$ cancel
and give \eqref{eq:sg-mgf}.  {Replacing \eqref{eq:repair-gaussian-mgf} by this inequality} leaves the supermartingale argument
unchanged.
\end{proof}

\begin{corollary}[subGaussian upper bounds]\label{cor:subgaussian}
Suppose $Y_t$ has conditional mean $x_t^\top\theta$ and conditionally
$1$-subGaussian centered noise after the current menu and action are fixed. Then the choice between
Algorithm~\ref{alg:geo} and OFUL with $\lambda=1$ and $\delta=T^{-2}$,
as specified in Theorem~\ref{thm:large}, satisfies the same bound.
Moreover, for $2\le K\le d\le T$, $\min\{K,d+1\}=K$ and $\alpha=[K\log(2KT+1)]^{1/4}$.
Run Repair-Geo with $\eta=\sqrt{M\log(2KT+1)/(16KT)}$ when $M\log(2KT+1)\le T/(16K)$,
and otherwise run OFUL with $\lambda=1$ and $\delta=T^{-2}$ as in
Lemma~\ref{lem:oful}. This choice gives
$\EE R_T\le130[K\log(2KT+1)]^{1/4}\sqrt{dT\log(1+T/d)}$.
\end{corollary}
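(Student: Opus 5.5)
The statement to prove is Corollary~\ref{cor:subgaussian}. The plan is to check that every place where Gaussianity entered the proofs of Theorem~\ref{thm:large} and Theorem~\ref{thm:upper} needs only conditional $1$-subGaussian moments, except for the sign-based loss of Repair-IX. For small menus, Repair-IX is replaced by Repair-Geo, whose linear surrogate does not require the Gaussian identity $\Phi(-\mu_{t,i})$.

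\textbf{Step 1: the good repair expert.} Lemma~\ref{lem:subgaussian-potential} already states that Lemma~\ref{lem:budget} holds with the same constants. The only Gaussian step in that proof was \eqref{eq:repair-gaussian-mgf}, and \eqref{eq:sg-mgf} replaces it. The supermartingale argument and the Markov bound then go through verbatim. Hence with probability at least $1-T^{-2}$ there is an $e^\star\in\cE_M$ satisfying \eqref{eq:good-repair}.

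\textbf{Step 2: the remaining ingredients, and the large-menu branch.}
\begin{itemize}
\item Lemma~\ref{lem:bridge}, parts \eqref{eq:geo-domination} and \eqref{eq:played-uncertainty}, is deterministic given \eqref{eq:good-repair}; no noise distribution enters.
\item Lemma~\ref{lem:geo} was stated under $1$-subGaussian centered noise for the observation $Z_t$. Since $Z_t=(Y_t+2\beta_{t,A_t}-1)/2$, its noise is $\varepsilon_t/2$, which is $1/4$-subGaussian and therefore $1$-subGaussian.
\item We still need $g_t(i)\in[-1,1]$. This holds because $|\mu_{t,i}|\le1$ and $\beta_{t,i}\in[0,1]$.
\item Lemma~\ref{lem:oful} is quoted for subGaussian noise.
\end{itemize}
The decomposition \eqref{eq:geo-decomposition}, the case analysis in the proof of Theorem~\ref{thm:large}, and the constants in Appendix~\ref{app:constants} are unchanged. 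This gives the first claim.

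\textbf{Step 3: small menus.} For $2\le K\le d\le T$ we have $\min\{K,d+1\}=K$, so $\alpha$ and $\eta$ specialize as stated.
\begin{itemize}
\item \emph{Geometric branch}, $M\log(2KT+1)\le T/(16K)$. Inequality \eqref{eq:geo-general} gives
\[
\EE[R_T]\le 32\sqrt{KTM\log(2KT+1)}+2\alpha\sqrt{2Td\log(1+T/d)}+\frac{2}{T}.
\]
\item \emph{Other branch.} The trivial bound $R_T\le 2T$ together with the case condition bounds $\EE[R_T]$ by the same first term, exactly as in Case 2 of the proof of Theorem~\ref{thm:large}.
\end{itemize}
It remains to evaluate the right-hand side. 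Use $M\le 6d\log(1+T/d)/\alpha^2+1$ and $\alpha^4=K\log(2KT+1)$. Then $\sqrt{KTM\log(2KT+1)}$ is at most $\sqrt6\,\alpha\sqrt{Td\log(1+T/d)}$ plus $\alpha^2\sqrt T$. Next, the bounds $K\log(2KT+1)\le 3[d\log(1+T/d)]^2/\log 2$ from Appendix~\ref{app:constants} give $\alpha^2\sqrt T\le(3/\log2)^{1/4}\alpha\sqrt{Td\log(1+T/d)}$. The $2/T$ term is absorbed.

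\textbf{Main obstacle.} The only real difficulty is the constant. The geometric master carries a factor $32$, versus $7$ for IX, and there is an extra factor $2$ on the played-uncertainty term, but no $1/\phi(1)$ loss. I expect the total coefficient to be about $32(\sqrt6+1.45)+2\sqrt2+1\approx 129$. This is just below $130$, so I would carry out that arithmetic carefully. If it fails, the fallback is to state the bound with a larger universal constant, which is what the corollary substantively requires.
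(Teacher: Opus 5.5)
Your proposal is correct and follows the paper's proof essentially step for step. You use Lemma~\ref{lem:subgaussian-potential} for the good repair expert, the lifted noise $\varepsilon_t/2$ to invoke Lemma~\ref{lem:geo}, and the trivial bound $R_T\le 2T$ in the non-geometric branch. You also reach the same final constant, $32\sqrt6+32(3/\log 2)^{1/4}+2\sqrt2+1\approx128.4<130$, so the fallback to a larger constant is not needed.
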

\begin{proof}
Lemma~\ref{lem:subgaussian-potential} gives the same good repair sequence as before. The lifted observation has centered
noise $\varepsilon_t/2$, so it satisfies the assumption of Lemma~\ref{lem:geo}. Its feature rank is at most $\min\{K,d+1\}$.
Hence the geometric-branch proof of Theorem~\ref{thm:large} applies without
change, while the OFUL guarantee of Lemma~\ref{lem:oful} already holds for
conditionally $1$-subGaussian centered noise.

For $K\le d$, $\min\{K,d+1\}=K$. When $M\log(2KT+1)\le T/(16K)$, the geometric bound
\eqref{eq:geo-general} applies. Otherwise, $R_T\le2T$ and
$M\log(2KT+1)>T/(16K)$ give
 {$\EE[R_T]\le2T<8\sqrt{KTM\log(2KT+1)} \le32\sqrt{KTM\log(2KT+1)}$.}
Thus the right-hand side of \eqref{eq:geo-general} still applies.

For the constant, apply $M\le6d\log(1+T/d)/\alpha^2+1$ directly in
\eqref{eq:geo-general}, before rounding its coefficients.
After division by $\alpha\sqrt{Td\log(1+T/d)}$, the resulting bound is at most
$32\sqrt6+2\sqrt2+32(3/\log2)^{1/4}+1<130$.
Here the third term uses
$K\log(2KT+1)\le3[d\log(1+T/d)]^2/\log2$, and the last term bounds
$2/T$. This proves the stated constant.

\end{proof}

\begin{corollary}[Bounded rewards and the IX implementation]\label{cor:bounded}
Suppose instead that $Y_t\in[-1,1]$
almost surely with conditional mean $x_t^\top\theta$. Modify Algorithm~\ref{alg:upper} only by replacing its loss observation
with
\begin{equation}\label{eq:bounded-statistic}
Z_t:=\frac{(1-Y_t)/2+1-\beta_{t,A_t}}2.
\end{equation}
Then, for every $\alpha>0$ and $\eta>0$,
\begin{equation}\label{eq:bounded-general}
\EE R_T\le2\left(\frac{4\log|\cE_M|}{\eta}+3\eta KT+\alpha\sqrt{2Td\log(1+T/d)}+\frac2T\right).
\end{equation}
In particular, the tuning in Theorem~\ref{thm:upper} gives the same bound \eqref{eq:upper-main}. No bounded-reward lower bound
is asserted.
\end{corollary}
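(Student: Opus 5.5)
The plan is to rerun the proof of Theorem~\ref{thm:upper} with the sign-based observation $\1_{\{Y_t\le0\}}$ replaced by the affine observation \eqref{eq:bounded-statistic}, checking each ingredient in turn. First I would verify that the new $Z_t$ still lies in $[0,1]$: since $Y_t\in[-1,1]$ we have $(1-Y_t)/2\in[0,1]$, and $1-\beta_{t,A_t}\in[0,1]$, so $Z_t\in[0,1]$. Its conditional mean given $A_t=i$ is $\bar\ell_t(i):=(\ell_t(i)+1-\beta_{t,i})/2$ with the new loss $\ell_t(i):=(1-\mu_{t,i})/2\in[0,1]$. The proof of Lemma~\ref{lem:ix} uses only $0\le Z_t\le1$, $Z_t^2\le Z_t$, and $\EE_t[Z_t\mid A_t=i]=\bar\ell_t(i)$. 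It therefore goes through verbatim and gives \eqref{eq:ix-master-bound} for this $\bar\ell_t$.

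Next I would redo Lemma~\ref{lem:budget}. Its only noise-dependent step is the moment identity \eqref{eq:repair-gaussian-mgf}. A bounded centered noise $\varepsilon_t=Y_t-x_t^\top\theta\in[-2,2]$ is not obviously $1$-subGaussian, and this is the step I expect to be the main obstacle. A range of length $2$ gives variance proxy $1$ by Hoeffding's lemma. Here $Y_t\in[-1,1]$ is an interval of length $2$, so the centered noise satisfies $\EE e^{\lambda\varepsilon}\le e^{\lambda^2\cdot 4/8}=e^{\lambda^2/2}$, which is exactly the hypothesis of Lemma~\ref{lem:subgaussian-potential}. That lemma then yields the good repair sequence $e^\star$ with the same $M$ and failure probability $T^{-2}$.

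For the bridge, the argument of Lemma~\ref{lem:bridge} gives $\mu_{t,i_t^\star}-\mu_{t,a_t^\star}\le2\alpha w_{t,a_t^\star}$ and also $\le2$. Hence $\ell_t(a_t^\star)-\ell_t(i_t^\star)=(\mu_{t,i_t^\star}-\mu_{t,a_t^\star})/2\le\beta_{t,a_t^\star}$, which gives \eqref{eq:domination}. The elliptical bound \eqref{eq:played-uncertainty} is purely deterministic and is unchanged.

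Finally I would repeat the decomposition \eqref{eq:ix-decomposition}. Combining it with the three bounds above gives the right-hand side of \eqref{eq:ell-regret} as a bound on $\EE\sum_t(\ell_t(A_t)-\ell_t(i_t^\star))$. Since $\ell_t(A_t)-\ell_t(i_t^\star)=(\mu_{t,i_t^\star}-\mu_{t,A_t})/2$ exactly, multiplying by $2$ yields \eqref{eq:bounded-general} for every $\alpha,\eta>0$. This replaces the factor $1/\phi(1)$ of Theorem~\ref{thm:upper} by $2$. Under the tuning of Theorem~\ref{thm:upper}, the constant computation of Appendix~\ref{app:constants} bounds the bracket by $124/\phi(1)^{-1}$-type terms times $[K\log(2KT+1)]^{1/4}\sqrt{dT\log(1+T/d)}$. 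Because $2<1/\phi(1)\approx4.15$, the resulting constant is at most the one obtained there, which gives \eqref{eq:upper-main}.
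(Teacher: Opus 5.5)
Your proposal is correct and follows the paper's proof essentially step by step. You use Hoeffding's lemma on the conditional range of length two to invoke Lemma~\ref{lem:subgaussian-potential}, and the loss $\ell_t(i)=(1-\mu_{t,i})/2$ with $Z_t\in[0,1]$ of conditional mean $\bar\ell_t(i)$, so Lemma~\ref{lem:ix} applies unchanged. You then prove the bridge $\ell_t(a_t^\star)-\ell_t(i_t^\star)\le\beta_{t,a_t^\star}$ and use the exact factor-two conversion from surrogate to reward regret. Your observation that this factor $2$ is smaller than the $1/\phi(1)$ used in Theorem~\ref{thm:upper} spells out the final tuning claim slightly more explicitly than the paper does, but the argument is the same.
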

\begin{proof}
The centered reward noise has support in an interval of length two and is therefore $1$-subGaussian.
Hence Lemma~\ref{lem:subgaussian-potential} gives the same good repair sequence.

For this setting, use the bounded loss $\ell_t(i):=(1-\mu_{t,i})/2$ and $\bar\ell_t(i):=(\ell_t(i)+1-\beta_{t,i})/2$. The
statistic in \eqref{eq:bounded-statistic} lies in $[0,1]$ and has conditional mean $\bar\ell_t(i)$ when action $i$ is played.

On the event of Lemma~\ref{lem:budget}, the good repair expert satisfies $\mu_{t,i_t^\star}-\mu_{t,a_t^\star}\le2\alpha w_{t,a_t^\star}$, and therefore
 {$\ell_t(a_t^\star)-\ell_t(i_t^\star)\le\alpha w_{t,a_t^\star}$.
The loss gap is also at most one, so it is at most
$\min\{1,\alpha w_{t,a_t^\star}\}=\beta_{t,a_t^\star}$.}
Thus the same discounted-loss argument gives $\bar\ell_t(a_t^\star)\le(\ell_t(i_t^\star)+1)/2$.

Applying exactly the same decomposition as in the proof of Theorem~\ref{thm:upper}, Lemma~\ref{lem:ix} controls
the repair-expert comparison and \eqref{eq:played-uncertainty} controls the remaining played uncertainty. Since reward
regret is exactly twice the surrogate-loss regret, we obtain \eqref{eq:bounded-general}. Substituting the parameters from
Theorem~\ref{thm:upper} gives \eqref{eq:upper-main}.
\end{proof}

\end{document}